\documentclass[11pt,letterpaper,english]{article}
\usepackage[small,compact]{titlesec}
\usepackage[toc,page]{appendix}
\usepackage{microtype}
\usepackage{float}
\usepackage{amsmath}
\usepackage{amsthm}
\usepackage{mathrsfs}
\usepackage{graphicx}
\usepackage{caption}
\usepackage{subcaption}
\usepackage{setspace}
\usepackage{threeparttable}
\newtheorem{thm}{Theorem}

\usepackage{amssymb}
\usepackage{newtxtext} % Times Roman for text
\usepackage{newtxmath} % Times Roman for math
\usepackage{multirow}
\usepackage{rotating}
\usepackage{booktabs}  % For nicer rules in tables
\usepackage{makecell}
\usepackage{array}
\usepackage{color}
\usepackage[hidelinks]{hyperref}
\usepackage{fullpage}
\newtheorem{defn}{Definition}

\usepackage{algorithm}
\usepackage{algpseudocode}
\usepackage{tikz}

\usepackage{xcolor}
\usepackage[margin=2.5cm]{geometry}
\usepackage[sectionbib,round]{natbib}
\usepackage[normalem]{ulem}

\usepackage{bibunits}
\defaultbibliographystyle{abbrvnat} 
\defaultbibliography{references} 
\usepackage{listings}
\bibpunct[, ]{(}{)}{;}{a}{}{,}

\newcommand{\squishlist}{
   \begin{list}{$\bullet$}
    { \setlength{\itemsep}{0pt} \setlength{\parsep}{1pt}
      \setlength{\topsep}{1pt} \setlength{\partopsep}{1pt}
      \setlength{\leftmargin}{1.5em} \setlength{\labelwidth}{1em}
      \setlength{\labelsep}{0.5em} } }

\newcommand{\squishlisttwo}{
   \begin{list}{$\bullet$}
    { \setlength{\itemsep}{0pt} \setlength{\parsep}{0pt}
      \setlength{\topsep}{0pt} \setlength{\partopsep}{0pt}
      \setlength{\leftmargin}{1em} \setlength{\labelwidth}{1.5em}
      \setlength{\labelsep}{0.5em} } }

\newcommand{\squishend}{
    \end{list}  }

\begin{document}

\title{Adaptive Ad Load Design for Sponsored Search Markets: Evidence, Theory, and Deployment}
\author{
      Mohammad Rashid \thanks{We would like to thank the participants of the PhD workshop at the University of Washington and the 2026 UW-UBC conference for their comments. We are also grateful to Simha Mummalaneni, Omid Rafieian, Ken Wilbur, Zikun Ye, and Dennis Zhang for their detailed feedback, which has significantly improved the paper. Please address all correspondence to: rashid98@uw.edu, hemay@uw.edu. }  \\
        University of Washington\\
        \and
        Hema Yoganarasimhan\footnotemark[1]\\
        University of Washington\\
 }
\maketitle
\begin{abstract}

Ad-load design is a central supply-side decision in sponsored search: more sponsored slots can raise revenue, but may crowd out organic results and degrade user outcomes. We study this trade-off using a large-scale randomized field experiment on an Android app store, where over five million users are exposed to one through six sponsored slots. Increasing ad load raises revenue by up to 43\%, but reduces total search conversions by up to 5\% and daily engagement by up to 2.2\%. These average effects mask substantial heterogeneity: additional slots generate large revenue gains for high-ad-conversion queries, but little or negative marginal revenue for low-conversion queries. The trade-off also shifts within query as advertiser composition changes, such as brand-advertiser presence. Motivated by these findings, we design and deploy a novel adaptive algorithm -- {\it e}xploration-augmented Locally Adaptive Ad Load (e-LAAL). e-LAAL combines LAAL, a model-free query-level decision rule that updates ad-load recommendations using recent outcomes, with static exploration arms that maintain support and provide fixed-policy counterfactual benchmarks. We provide a finite-time dynamic-regret guarantee for the e-LAAL architecture. In a platform-level production deployment serving 22.3 million users and 77.6 million searches, e-LAAL improves the empirical revenue--conversion trade-off relative to deployed static benchmarks and outperforms uniform and historical query-dependent static benchmarks.

\end{abstract}
\noindent \textbf{Keywords:} Digital advertising, Adaptive algorithms, Sponsored search, Ad load

\thispagestyle{empty}

\begin{bibunit}
\newpage
\setcounter{page}{1}

\section{Introduction}

\subsection{Sponsored Search Advertising}

Search advertising is one of the central monetization channels in the digital economy. In 2025, search advertising revenue in the United States reached \$114.2 billion, making it the largest format of online advertising \citep{iab_pwc_2026_internet_ad_revenue_2025}. The success of sponsored advertising stems from a unique structural advantage: the precise alignment between user intent and advertiser offerings. Unlike display or social media advertising, which often rely on interrupting a user's consumption of content, search ads are served at the moment a user reveals a specific need or interest through a query. This intent-driven architecture reduces the friction between the user's objective and the advertisement, making search ads more relevant, less intrusive, and often more effective for direct-response objectives than other digital formats. This dynamic is particularly critical in mobile app marketplaces, where the sheer volume of available products creates a significant discovery bottleneck. With millions of applications available, search has become the primary navigation tool for users; Apple reports that nearly 65\% of all App Store downloads are initiated directly after a search query \citep{apple_ads_app_store}. The search results page, therefore, becomes a natural advertising surface. 

Within this environment, the platform faces an important design problem. While the auction format determines \emph{which} ads are shown and at what price, a more fundamental supply-side decision remains under the platform's direct control: \emph{ad load design}, or determining the number of sponsored slots to display. Increasing ad load expands paid inventory and can increase short-run platform revenue, but sponsored results may displace organic results that are more relevant to the user. Excessive ad exposure can therefore degrade search quality, reduce conversions, and weaken engagement \citep{kohavi2012trustworthy, Carrion2023VirtualBids}. Ad load design is therefore a strategic platform-level allocation problem that governs the balance between monetization and user experience. Yet despite the importance of this supply-side choice, there are no established paradigms for how platforms should set ad load in sponsored search settings.

\subsection{Research Agenda and Challenges}

In this paper, we examine the platform's ad load design problem through three interrelated research questions. First, what is the causal effect of changing the number of sponsored slots on platform outcomes? We quantify the marginal impact of additional ads on advertising revenue and conversion,\footnote{Our empirical setting is an app-store platform (introduced in \S\ref{ssec:our_approach}), where the natural unit of conversion is an app install. Throughout this paper, we define conversion as the number of apps installed through search, originating from both sponsored ads and organic results. We focus on total search conversions as a main outcome because app installs are the platform's primary matching outcome and provide immediate search-level feedback on whether users find relevant apps.} which provides the baseline trade-off underlying ad-load decisions. Second, how do ad-load effects vary across queries and market conditions? We examine heterogeneity across queries, based on differences in baseline ad conversion potential, and within queries over time, based on shifts in advertiser composition such as brand participation. Third, given such heterogeneity, how can platforms design an ad-load policy that tailors sponsored inventory to context and adapts as market conditions change?

Addressing these questions comes with three challenges. First, identifying the revenue--conversion trade-off requires exogenous user-level variation in ad load, so outcome differences can be attributed to the number of sponsored slots rather than selection or demand shocks. Second, heterogeneity analysis requires moderators that are predetermined or independently measured; otherwise, query- or market-level variables may be endogenous to the realized ad load. Third, policy design requires persistent support for multiple ad-load levels within each context over time. Without such support, the platform cannot keep updating counterfactual performance as advertiser entry and exit, budget depletion, bid changes, seasonality, and user intent shift. Thus, the policy problem is not only to learn which ad load works on average, but to build a theoretically grounded and deployable adaptive system that continues learning as the market evolves.

\subsection{Our Approach and Findings}
\label{ssec:our_approach}

We study ad-load design on a large Android app-store platform in an Asian country. The platform is a major mobile distribution channel, with over 70\% market share, roughly 40 million monthly users, 5 million daily active users, and nearly half a million apps. Search is a central discovery mechanism: users install about 2.5 million apps per day from search results pages, of which roughly 50,000 installs are attributed to sponsored ads. When a user submits a query, the platform runs a real-time Generalized Second-Price auction among eligible advertisers and displays sponsored results above organic results. The status quo before our intervention was a \emph{single-ad layout}, in which one sponsored result appeared at the top of the page, followed by organic results. This setting is ideal for studying ad-load design because increasing sponsored slots directly expands monetizable inventory while displacing organic content.

%First, we run a large-scale randomized field experiment to quantify the causal effects of static ad-load rules. Second, we use the experimental variation to document heterogeneity in the revenue--conversion trade-off across queries and over time. Third, motivated by this heterogeneity, we formulate ad-load design as a sequential decision problem and identify the requirements that a deployable ad-load policy must satisfy in this environment. Fourth, we design and deploy exploration-augmented Locally Adaptive Ad Load (e-LAAL), a model-free query-level policy that uses recent outcomes to update ad-load decisions while maintaining parallel static exploration arms. Fifth, we benchmark e-LAAL against uniform and query-dependent static policies to assess whether adaptive learning improves the revenue--conversion frontier beyond what fixed rules can achieve. We now discuss these steps in detail below.
We address the paper's research questions in five steps. First, we run a 66-day user-level randomized field experiment that exogenously varies the number of sponsored slots shown on the search results page. The experiment covers 10\% of the platform's user base and includes over 5 million users, 26 million searches, 377K sponsored-ad installs, and 15 million organic installs. Users are assigned to six persistent regimes: a control group, covering 5\% of users, receives the status quo one-ad layout, while five treatment groups, $T_2$ through $T_6$, each covering 1\% of users, receive fixed layouts with two through six sponsored slots. This design identifies the causal effects of moving from the status quo to increasingly aggressive ad-load regimes at a scale large enough to detect both monetization effects and user-side behavioral responses.

The experiment reveals a clear aggregate trade-off. Moving from one ad to six ads increases revenue per user by 43\%, but reduces total conversions per user by up to 5\%. Higher ad loads also increase searches per user by up to 3\%, suggesting that users exposed to heavier ad loads may need to search more to find relevant apps, while reducing daily engagement by up to 2.2\% and generating a negative engagement trend among users exposed to at least four ads per search. Taken together, these results show that additional sponsored slots raise short-run monetization but degrade user-side outcomes, making ad-load design a multi-objective optimization problem rather than a simple revenue-maximization problem.

Second, we use the experimental variation to show that the revenue--conversion trade-off is heterogeneous across queries and over time. To study cross-query heterogeneity, we focus on the top 10,000 queries by advertising conversion count and partition them into three equally sized segments (Low, Medium, High) based on advertising conversion rate in the control condition, where the sponsored ad always appears in the first position. Across all three segments, higher ad loads reduce conversions, but revenue effects differ sharply: in the Low segment, additional slots do not generate incremental revenue, whereas in the High segment, extra slots generate substantial incremental revenue. Thus, the value of an additional sponsored slot depends critically on query-level advertiser quality: a uniform ``always show $i$ ads'' rule misallocates inventory by showing too many ads on queries where ads convert poorly and too few ads where sponsored demand is valuable. We then examine time variation using brand queries, i.e., branded phrases such as ``WhatsApp,'' and find that periods with brand-advertiser presence are associated with higher revenue and conversions, with the largest gains under high ad loads. This within-query variation shows that the same query can appear on a different revenue--conversion frontier as advertiser composition changes. These results imply that effective ad-load design must be both query-sensitive and time-adaptive. 
%\rev{A natural question is whether ad load should also be personalized across \emph{users}. We focus on query- and time-level adaptation for two reasons: the platform avoids user-level ad-load personalization due to privacy and fairness considerations, and the dominant sources of heterogeneity in our setting operate at the query and time level (see \S\ref{sec:prob_setup}).}

%At a high level, this problem maps naturally to a contextual bandit: the query serves as the context, the ad load is the action, and revenue and conversions constitute the reward. However,
Third, these empirical patterns motivate an adaptive ad-load allocation policy but also highlight why standard parametric contextual bandits are not a good fit for this setting. Conventional contextual bandits require a stable reward model over observed contexts, but many reward-relevant states in sponsored search are partially observed, including advertiser entry and exit, budget depletion, bid changes, app availability, brand participation, and shifts in user demand. The mapping from query, ad load, and market state to revenue and conversions is also complex and non-stationary, and ad-load decisions may affect future search behavior, so context arrivals are not fully policy-invariant. Moreover, per-search exploration schemes such as standard $\epsilon$-greedy or Thompson sampling do not preserve persistent user-level fixed-policy cohorts, which are needed to evaluate counterfactual policies such as ``always show three ads'' when ad load can affect subsequent usage.

Fourth, we therefore design Locally Adaptive Ad Load (LAAL) and its exploration-augmented implementation, e-LAAL. LAAL is a model-free, query-level, sliding-window policy. For each query--ad-load pair, it estimates recent mean revenue and total conversions, maps these estimates into a scalarized reward, and chooses the ad load with the highest estimated value. This design makes the policy both query-adaptive and time-adaptive: different queries can receive different ad loads, and the same query can receive different ad loads as recent market conditions evolve. The policy does not require a parametric reward model or full observability of latent market states; it can react to changes reflected in recent outcomes, even if the source of the change is not explicitly labeled.

The deployed e-LAAL architecture adds a crucial exploration layer. Rather than randomizing exploration within a single adaptive arm, e-LAAL runs LAAL on 90\% of users while keeping static arms $C,T_2,\ldots,T_6$ active on the remaining 10\%. These static arms provide persistent support for every ad-load level and contemporaneous fixed-policy counterfactuals. We also provide a finite-time high-probability dynamic-regret guarantee for e-LAAL. The proof adapts non-stationary bandit arguments to our deployment architecture by accounting for the fact that exploration is assigned through persistent user-level cohorts, while realized query-level support for each ad-load arm may deviate from the platform's designed traffic shares. 

As part of this design, we evaluate e-LAAL in a 22-day production deployment. The deployment serves 22.33 million users and 77.60 million searches, with the adaptive LAAL arm alone covering 20.10 million users and 69.85 million searches. LAAL recommends 3.1 ads per search on average and produces a bimodal recommendation distribution, often choosing either low ad loads when conversion losses dominate or high ad loads when revenue gains justify additional sponsored slots. Further, we see that LAAL shifts the empirical revenue--conversion frontier outward relative to the deployed static ad-load benchmarks. In the static arms, the familiar trade-off persists: moving from the control condition to the highest ad-load condition increases revenue per user by more than 40\%, but reduces conversions per user by roughly 5\%. LAAL preserves the monetization benefits of high ad loads while substantially mitigating their conversion losses: it delivers revenue essentially identical to the always-5-ads policy ($T_5$), while generating about 3\% more conversions and keeping conversions much closer to the low-ad-load benchmarks. The system-level e-LAAL mixture remains close to the LAAL arm, indicating that the 10\% static exploration layer preserves ongoing learning and counterfactual support with limited performance cost.

We further unpack the mechanism behind the frontier improvement. Consistent with the heterogeneity analysis, LAAL recommends fewer ads for low-CVR queries and more ads for high-CVR queries, without hard-coding these segments into the policy. For non-stationary brand queries, LAAL changes its recommendations within the same query: when the brand advertiser is absent, the policy shifts toward lower ad loads; when the brand advertiser is present, it shifts toward higher ad loads.

Finally, we benchmark LAAL against both uniform and query-dependent static policies. Among uniform policies, the best historical rule selected from the 66-day experiment is $T_3$, whereas the best uniform rule during deployment is $T_5$, showing that even the best platform-wide static ad load changes over time. LAAL outperforms the historical uniform benchmark and weakly dominates the deployment-period uniform oracle by matching its revenue while delivering significantly higher conversions. A historical query-dependent benchmark improves over uniform static rules, confirming the value of query sensitivity, but remains below LAAL during deployment. Relative to an ex post deployment query-dependent empirical oracle, LAAL's empirical regret is close to zero for the highest-volume queries, where recent local estimates are most precise. 

%Robustness checks further show that LAAL's performance is stable over the deployment window and that its gains persist on non-brand queries, indicating that the improvement is not driven solely by brand-query mechanics.

%Overall, the findings show that ad-load design in sponsored search is an adaptive allocation problem. Higher ad loads increase revenue on average but reduce conversions and engagement; the strength and even the sign of the revenue effect vary across queries; and the same query's optimal ad load can change over time. e-LAAL translates these empirical insights into a scalable policy architecture: it uses local, recent, model-free evidence to adapt ad load across queries and over time, while maintaining static exploration arms that provide persistent support and credible fixed-policy benchmarks. The result is a deployable system that improves the revenue--conversion frontier at the scale of tens of millions of users and searches.

\subsection{Contributions and Implications}

The paper contributes to the literature on monetization in digital platforms, ad-load effects, and adaptive policy design. 

First, from a substantive perspective, we provide large-scale causal evidence on the effects of ad load in sponsored search and show why the optimal ad load cannot be reduced to a single platform-wide or time-invariant query-level rule. Our 66-day experiment exposes over five million users to an unusually wide range of ad loads, from the status quo single-ad layout to a six-ad layout that covers most of the first page with sponsored results. This design allows us to quantify the revenue--consumer-experience trade-off induced by ad load, as reflected in conversions and engagement, over a scale and time horizon not previously studied in sponsored search. We further show that this trade-off is highly heterogeneous: query-level advertiser quality determines whether additional sponsored slots generate incremental revenue, while market-state shifts, such as brand-advertiser presence, can move the same query to a different revenue--conversion frontier. To our knowledge, this is the first study to provide causal evidence on sponsored-search ad-load effects over this range of ad loads while also documenting that the optimal ad load varies both across queries and over time.

Second, from a methodological perspective, we provide a constructive and theoretically grounded solution to the adaptive ad-load allocation problem. LAAL is a model-free, query-level decision rule that uses sliding-window estimates of recent revenue and conversion outcomes to adapt ad load across queries and over time. e-LAAL is the deployable architecture that augments LAAL with persistent static exploration arms; these arms preserve support for each ad-load level and provide contemporaneous fixed-policy counterfactuals while the adaptive arm exploits recent evidence. In other words, e-LAAL is the practical implementation framework a platform can adopt directly: it implements LAAL in production, optimizes ad load at scale, and continues to collect the experimental data needed for ongoing learning and evaluation. We also provide a finite-time high-probability dynamic-regret bound for e-LAAL that explicitly accounts for the gap between designed traffic shares and realized query-level support. Empirically, in a platform-level production deployment serving over 22 million users and 77 million searches, LAAL improves the revenue--conversion frontier, delivers revenue comparable to high-ad-load static policies while preserving conversions closer to low-ad-load policies, and adapts in economically sensible ways by recommending fewer ads when advertiser quality is low and more ads when sponsored demand is valuable.

Finally, from a managerial perspective, the results show that ad load decisions in sponsored search markets should neither be set platform-wide nor governed by a time-invariant query-level rule. Effective ad-load design must be both query-sensitive and time-adaptive: the query determines the baseline value of additional sponsored slots, while current market conditions determine whether that value has shifted. The 22-day e-LAAL deployment demonstrates that this principle is operationally feasible at a production scale. The approach is also practical and portable because it does not require a platform-specific parametric reward model, complete observability of advertiser states, or extensive engineering. Instead, it relies on recent local outcomes and a small set of fixed exploration cohorts, making it adaptable to sponsored search and e-commerce platforms that track query-level revenue and conversion outcomes. More broadly, the paper shows that platforms can ease the revenue--consumer-experience trade-off not only through auction design, pricing, ranking, or targeting, but also by dynamically controlling the supply of sponsored attention.

\section{Related Literature}

Our paper contributes to four complementary streams at the intersection of marketing, economics, and computer science: (i) sponsored-search market design, (ii) empirical work on advertising and ad-load trade-offs, (iii) ad-load allocation systems, and (iv) adaptive decision-making with contextual bandits. 

First, our paper connects to the large literature on auction design in sponsored search and its implications for platform revenue. Early work in this area examines how auction formats and allocation rules affect outcomes and revenue \citep{edelman2007internet, Varian2007, LahaiePennock2007, AtheyEllison2011}. Building on \citet{Myerson1981OptimalAuction}, a central line of research has studied how to set optimal reserve prices in advertising auctions to improve platform revenues \citep{YuanWangChenMasonSeljan2014, PaesLemeEtAl2016FieldGuideReserves, OstrovskySchwarz2023}. This literature primarily focuses on optimizing pricing and ranking mechanisms for a given ad load. Apart from auction design, more recent work has examined how real-time bidding, behavioral targeting, and privacy regulation reshape allocation, pricing, and platform revenue in digital advertising markets \citep{Sayedi2018, YaoMela2011, RafieianYoganarasimhan2021, GoldfarbTucker2011}. We complement this literature by focusing on ad load as a supply-side design choice. We empirically quantify how ad load affects platform revenues and user-side outcomes, propose an adaptive framework that the platform can use to decide how many sponsored slots to display for any given search query at a given point in time, and evaluate that choice using platform-level outcomes rather than advertising metrics alone. Our work thus extends this research by endogenizing the amount of sponsored inventory shown to users for any given query, and by characterizing how this operational lever can improve platform performance.

Second, our work contributes to the literature that measures the effects of advertising on consumer behavior. Early work on TV ads finds that reducing advertising time increases audience size \citep{Wilbur2008TwoSidedTV,  wilbur2013correcting}. More recent work has focused on digital platforms. \citet{goli2025measuring} document consumer sensitivity to ad load in a music streaming platform and find that heavier ad loads reduce user engagement and increase churn in the long run, with heterogeneous effects across age groups. In an e-commerce setting, \citet{Moshary2025SponsoredSearch} compares two conditions---no ads and a default ad load---and finds that the number of transactions and commission revenue are higher in the no-ads condition, but that this increase in transactions does not offset the loss of advertising revenue. Finally, in a search engine sponsored-search setting, \citet{SahniZhang2024SponsoredMessages} find that reducing average ad load in the mainline above organic results from 1 to 0.83 reduces ad revenue but does not significantly affect clicks. 

In our setting, we vary ad load from one to six and find a clear revenue--conversion trade-off: higher ad loads increase ad revenue but reduce overall app installs and engagement. Beyond the main effects, we document that these effects are heterogeneous across queries and over time, implying that the revenue--conversion trade-off can shift across queries and market states. Moreover, while the earlier papers mainly document ad-load effects, we design and deploy an adaptive ad-load allocation policy that balances revenue and conversions, and present results from this platform-level deployment. Thus, our main goal is not just to show that these trade-offs exist or to quantify them; rather, we propose a methodological framework that decision-makers can use to adaptively allocate ad load in sponsored search settings.\footnote{Indeed, we show that these trade-offs can be brittle and change over time, and that choosing a fixed ad load based on a historical experiment can lead to suboptimal outcomes; see $\S$\ref{ssec:benchmark_static}.}

Our work also relates to the literature that examines how to optimally allocate ad load in advertising markets. In sponsored search, \citet{Broder2008Swing} study when to show no ads based on predicted click-through rates, while later supervised and whole-page optimization approaches use historical click patterns or predicted page-level click and revenue outcomes to determine how many ads to display \citep{Wang2011HowManyAds, Zhang2018WholePage}. Relatedly, \citet{KimBalachanderKannan2012} provide a theoretical analysis of how many sponsored slots a platform should offer under GSP when the objective is advertising revenue. These papers are important precursors, but they primarily rely on predicted user-response models based on observational data, historical logs, or analytical models. In contrast, we first run a large-scale randomized field experiment that exogenously varies ad load from one to six ads, allowing us to causally quantify the revenue--conversion trade-off and show that it varies across queries and over time. We then use these findings to design and deploy a model-free adaptive policy with persistent static exploration arms, contemporaneous fixed-policy counterfactuals, and a finite-time dynamic-regret guarantee.

More recent work treats ad load as a platform-level decision, incorporating user-engagement proxies through multi-objective rewards, constrained optimization, off-policy learning, or doubly robust estimation. In feed allocation and social-media ad supply, related work optimizes ad exposure within an ongoing stream of organic content, using signals such as feed position, session context, user ad tolerance, ad fatigue, and long-run engagement \citep{Liao2021CrossDQN, Sagtani2023AdLoadOPE, Yan2020AdsAllocation, Shi2024AdsSupplyDR}. In audio streaming, \citet{Goli2025} studies ad load as a personalized exposure problem, using long-run experiments to balance advertising revenue, subscription revenue, and user consumption. These studies establish ad load as an important operational lever, but their settings differ from sponsored search in both the decision unit and the main source of heterogeneity. In feeds and streaming, the platform manages ad exposure over a consumption path, and heterogeneity is primarily user- or session-level. In sponsored search, by contrast, the query is the central state variable: it reveals user intent, determines which organic results are displaced, and shapes the set and quality of eligible advertisers. Search platforms also observe immediate search-level outcomes, such as conversions and revenue, that can be linked directly to the ad-load decision. Thus, ad-load methods designed for feed or streaming environments do not directly translate to sponsored search, where the key problem is to adapt the number of sponsored slots across queries and over time as advertiser composition changes.

Finally, our paper relates to and contributes to the literature on contextual bandits and adaptive experimentation \citep{lattimore2020bandit}. In a standard contextual-bandit problem, the decision-maker observes a context, chooses an action, and observes only the reward associated with the chosen action. A large theoretical literature studies algorithms such as LinUCB and Thompson sampling, typically under stable reward mappings and parametric or linear structures in the relationships among contexts, actions, and rewards \citep{li2010contextual, chu2011contextual, abbasi2011improved, agrawal2013thompson}. Related work extends bandit methods to non-stationary environments using sliding windows, discounting, or variation-budget regret criteria \citep{GarivierMoulines2011, CheungSimchiLeviZhu2019NonStationarity, russac2019weighted}. Closer in application, \citet{YeEtAl2023ColdStart} develop and field-test a deployable contextual-bandit policy for new-ad cold start that leverages an incumbent CTR/CVR prediction and auction system. Our setting differs in how exploration must be engineered for deployment. We maintain parallel static ad-load arms because they provide persistent, contemporaneous counterfactuals for fixed policies of the form ``always show \(i\) ads.'' This design introduces a complication absent from canonical bandit models: because ad load can affect users' subsequent search behavior, the realized query-level support in each arm need not match the platform's designed traffic weights. We therefore provide regret bounds for the parallel-static-cohort exploration scheme, explicitly allowing realized support to deviate from intended allocation shares. We also evaluate the approach in a real production deployment, connecting the theoretical exploration design to platform-scale implementation. 

%\rev{In its deployment philosophy, our work is closest to \citet{YeEtAl2023ColdStart}, who augment an existing advertising system with a simple exploration layer rather than rebuilding its machine-learning infrastructure; e-LAAL similarly operates as a lightweight ad-load decision and exploration layer on top of the platform's existing prediction, auction, and ranking systems.}

\section{Setting and Field Experiment}

\subsection{Setting: Android App Store}
Our data come from a leading Android app store in an Asian country that commands over 70\% of the market share. On the demand side, the app store attracts around 40 million monthly users and 5 million daily active users, while on the supply side, it features nearly half a million applications developed by over eighty thousand developers. The platform's primary source of income is advertising within the store, with the most prominent form being \textit{sponsored search ads}, which is the focus of this study. On average, users install about 2.5 million apps daily from search results pages, with about 50,000 of those installs attributed to sponsored ads. 

When a user submits a search query in the app store, the platform runs a real-time Generalized Second-Price (GSP) auction \citep{edelman2007internet} among eligible ads, employing a cost-per-install payment model. Under the platform's standard setting -- referred to as the \textit{single-ad layout} -- one ad slot is allocated to the highest-ranked bidder. This sponsored ad is displayed at the top of the search results page, followed by a rank-ordered list of organic results. Users can scroll down to view additional organic listings. 

An example of the single-ad layout is illustrated in Figure \ref{fig:single_ad}. On a typical smartphone with a 6.3–6.5 inch display (e.g., a Pixel~9), users can see eight listings (ads or organic) without scrolling if the search query is non-brand. A \textit{brand query} is a search query explicitly referring to a particular application or brand. An example of a brand query is \textit{WhatsApp}, while an example of a non-brand query is \textit{Flights}. In the case of a brand query, the brand result is displayed with additional details (e.g., application screenshots), occupying two additional rows and thus reducing the number of visible items to approximately six (see Web Appendix $\S$\ref{apdx:brand_query_layouts} for examples of brand query layouts).
% \hy{In case of a brand-query, the length of the results seems different.. Something to keep in mind for later..}

\subsection{Field Experiment on Number of Ad Slots}
\label{ssec:expt_adslots}
Our analysis leverages data from a large-scale randomized field experiment conducted on the platform’s search results page. The experiment aims to examine how varying the number of sponsored ads displayed impacts user-, advertiser-, and platform-level outcomes. The experiment ran for 66 days starting from September 2023. The experiment consisted of approximately 10\% of the platform's user-base. Users in the experiment were randomly assigned to one of six conditions, as follows: 
\squishlist

\item \textbf{Control Group ($C$):} Users in the control group experience the \textit{single-ad layout}, displaying one sponsored ad per search query. This group comprises 5\% of the user base.

\item \textbf{Treatment Groups ($T_i$):} We have five distinct treatment groups, $T_2, T_3, T_4, T_5, T_6$. Users assigned to treatment group $T_i$ are exposed to an \textit{i-ad layout}, where the top $i$ slots are assigned to ads for each search (based on the same GSP auction mechanism). An illustrative example of the three-ad layout, corresponding to users in the $T_3$ treatment group, is shown in Figure \ref{fig:three_ads}. Each treatment group comprised approximately 1\% of the platform's user base. 

\squishend

Users are assigned to experimental conditions by hashing each device's unique Android identifier with a deterministic, salt‑aware function to prevent collisions with other experiments. This method yields a random but reproducible treatment assignment. Throughout the experiment, user assignment to the experiment condition remains fixed, ensuring consistent exposure to the treatments/control.\footnote{Although each treatment condition targets a specific number of sponsored ad positions, the actual number of ads shown per search is contingent upon the availability of eligible bidders, meaning fewer ads could be displayed if bidder supply is insufficient (i.e., realized ad load could be lower than treatment condition). However, such discrepancies are rare across all treatments in our data because the ad market is sufficiently thick.}

\begin{figure}[htp!]
  \centering
  \begin{subfigure}[t]{0.45\textwidth}
    \centering
    \includegraphics[width=0.5\linewidth]{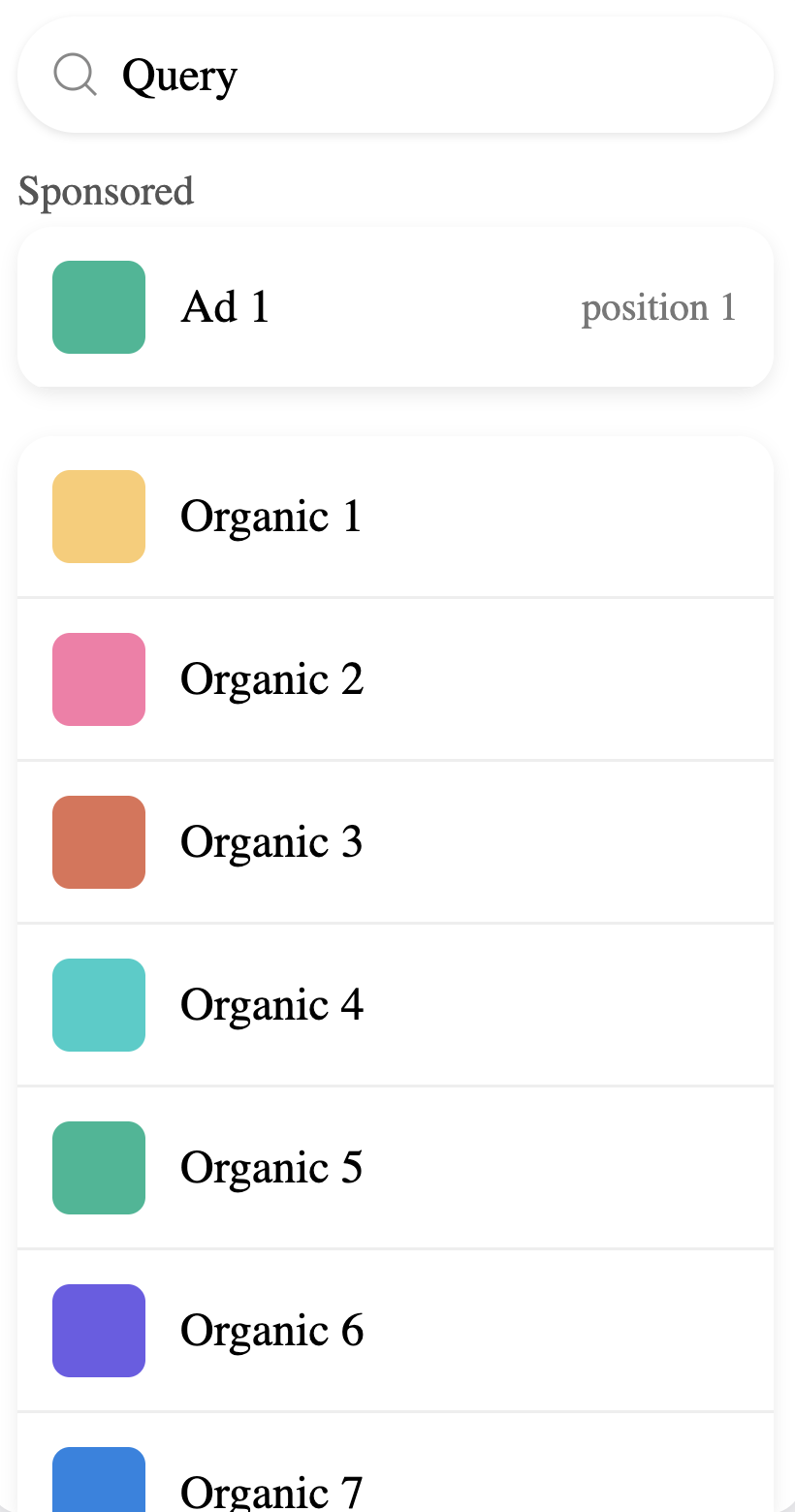}
    \caption{Single-Ad Layout }
    \label{fig:single_ad}
  \end{subfigure}
  \hfill
  \begin{subfigure}[t]{0.45\textwidth}
    \centering
    \includegraphics[width=0.5\linewidth]{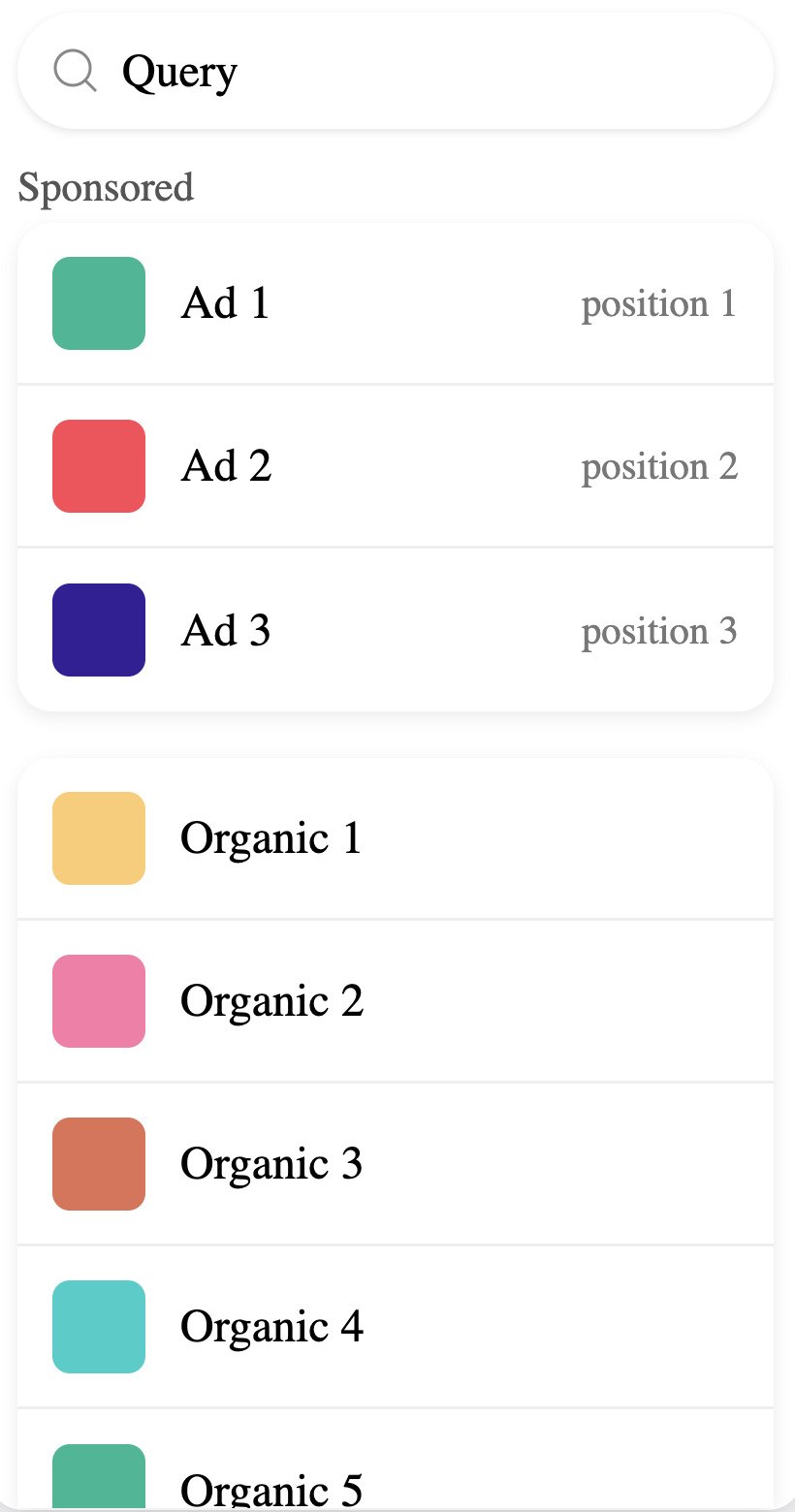}
    \caption{Three-Ad Layout}
    \label{fig:three_ads}
  \end{subfigure}
  \caption{Comparison of single-ad versus three-ad sponsored search layouts.}
  \label{fig:compare_ads}
\end{figure}

Overall, the experiment contains observations from 5,057,952 unique users who performed 26,326,936 searches, installed 377,425 ads, and 15,514,923 organic apps, over the 66-day experiment period. 

\subsection{Data}

Our data are at the impression-level. An impression is any search result displayed to a user in response to their query. Thus, each organic link or ad shown on the first page counts as a separate impression, and if the user scrolls to subsequent pages, every additional result shown is likewise recorded as its own impression.

For each impression in the data, we observe the following variables:
\squishlist
    \item Search ID ($s$): The unique identifier of a search.
    
    \item Query ($q$): The search term used by a user. We denote the query associated with Search ID $s$ by $q_s$. Our dataset contains 4,045,348 unique queries, with the top 2,000 accounting for 60\% of all searches. This concentration is consistent with prior work on search platforms, where a small share of keywords generates most searches (e.g., \citet{yoganarasimhan2020search}). Web Appendix $\S$\ref{apdx:search_CDF} reports the full cumulative distribution of query search counts. Most queries belong to the Social Media, Games, and Tools categories.
    
    \item Device ID ($u$): The unique identifier of the user, associated with the Android device. We denote the user associated with search ID $s$ by $u_s$.
    
    \item Experiment Condition ($e$): The experiment condition assigned to a user, drawn from the set \\ $\mathcal{E} = \{C,T_2,T_3,T_4,T_5,T_6\}$.  We denote the experimental condition associated with search $s$ by $e_s = e(u_s)$. 
    
    \item Timestamp ($t$): Exact time when the result is shown to the user. We denote the time of a search request $s$ by $t_s$.\footnote{Following a search, all the results displayed on the first page of the results (both ads and organic listings) share the same timestamp $t_s$. If the user navigates to the next page in a few minutes, the time at which s/he sees the second page of results is denoted as the timestamp for the impressions/listings in the second page, and so on.}
    
    \item App ID: The unique identifier of the apps shown on the platform. We observe 141,716 distinct apps in our data. The top 20\% of ads receive 90\% of ad impressions, while the top 20\% of apps receive 99\% of total impressions of the search page. This reflects how relevance‐based organic ranking algorithms amplify popular items more strongly than auction‐driven ad allocations do. The cumulative distribution of app impressions is shown in Web Appendix $\S$\ref{apdx:apps_ads_CDF}.
    % Figure~\ref{fig:apps_ads_cdfs} shows the cumulative distribution of app impressions at advertising and search(ads and organic combined) levels.
    %\textcolor{red}{Cite? Ads are more diverse than Organics?}
    \item Position ($p$): Rank position of the app in search results. 
    
    \item Is Sponsored: A binary flag indicating whether the impression/app appears as an ad. We have ad impressions for 664 unique advertisers; the most frequent categories among sponsored apps are Finance, Shopping, and Entertainment.
    
    \item Conversion Indicator ($\text{Conv}_{s,p}$): A binary indicator, defined at impression level, showing whether the user installed the app in position $p$ for search ID $s$. 
   
    We denote the sum of conversion outcomes (sum over all positions) of search ID $s$ by $\text{Conversion}_s$:
    \begin{equation}
    \text{Conversion}_s = \sum_p \text{Conv}_{s,p}
    \end{equation}
    % This extra concentration of conversion outcomes in the head is well documented in digital marketing settings. 
    Please see Web Appendix $\S$\ref{apdx:apps_ads_CDF} for the cumulative distribution of app conversions.
    
    \item Revenue ($\text{Rev}_{s,p}$): Revenue from the advertiser upon sponsored app installation, in position $p$ for search ID $s$. By definition, organic listings generate zero revenue; for ads, revenue is determined by the GSP mechanism discussed earlier. To preserve the platform's privacy, we multiply all revenue numbers by an undisclosed multiplier. We denote the sum of revenue for search ID $s$ by $\text{Revenue}_s$,
    \begin{equation}
        \text{Revenue}_s = \sum_p \text{Rev}_{s,p}
    \end{equation}
\squishend

\section{Effect of Number of Ad Slots on Conversion and Revenue}
We first describe the main effect of the number of ad slots on various outcomes in $\S$\ref{ssec:main_effects} and then quantify the heterogeneity in treatment effects across queries and time in $\S$\ref{ssec:heterogeneity}.

\subsection{Main Effects}
\label{ssec:main_effects}

Table~\ref{tab:summary_user_experiment} reports summary statistics of the various outcomes by experimental condition. The top panel shows the realized ad load and the number of users in each condition. We see that the realized ad exposure closely tracks the intended design (top row). Next, we focus on the two main outcomes of interest, shown in the middle panel of Table \ref{tab:summary_user_experiment}.  \emph{Revenue per User} is total advertising revenue from all the users in the experiment bucket divided by \emph{User Count} in that bucket. \emph{Conversion per User} is the total number of conversions from ads and organic listing divided by \emph{User Count}. We see that \emph{Revenue per User} increases monotonically with ad load: from 35.72 under $C$ to 51.18 under $T_6$ -- a 43\% improvement relative to control. In contrast, \emph{Conversions per User} declines monotonically from 3.19 ($C$) to 3.03 ($T_6$), a 5\% reduction. 

\begin{table}[htp!]
\centering
\small
\setlength{\tabcolsep}{3.5pt} % Tightens column spacing
\caption{Summary Statistics of the Experiment}
\label{tab:summary_user_experiment}
\begin{tabular}{lcccccc}
\toprule
 & \textbf{$C$} & \textbf{$T_{2}$} & \textbf{$T_{3}$} & \textbf{$T_{4}$} & \textbf{$T_{5}$} & \textbf{$T_{6}$} \\
\midrule
Avg.\ Ad Load & 1.00 & 2.00 & 3.00 & 3.99 & 4.94 & 5.88 \\
User Count & 2,528,428 & 505,193 & 505,016 & 506,335 & 506,713 & 506,267 \\
\midrule
Revenue/User\textsuperscript{a} & 35.72 & 41.61 & 44.54 & 46.39 & 49.47 & 51.18 \\
Conversion/User & 3.19 & 3.15 & 3.13 & 3.10 & 3.06 & 3.03 \\
\midrule
Daily Eng.\ (\%) & 4.95 & 4.92 & 4.89 & 4.87 & 4.85 & 4.84 \\
Searches/User & 5.14 & 5.16 & 5.30 & 5.31 & 5.28 & 5.31 \\
\bottomrule
\end{tabular}
\par\medskip
\footnotesize{\textit{Note:} \textsuperscript{a}Scaled by an undisclosed multiplier.}
\end{table}

To assess statistical significance, we estimate separate user-level regressions for outcomes \\
$Y\in\{\text{Revenue, Conversion}\}$:
\begin{equation}
\label{eq:user_outcome_ols}
Y_u \;=\; \alpha_C^Y \;+\; \sum_{i=2}^6 \alpha_{T_i}^Y\,\mathbf{1}\{e(u) = {T}_i\} \;+\; \varepsilon_u^Y,
\end{equation}
where:
\squishlist
\item $Y_u$ is the outcome of user $u$ such that $Y_u = \sum_{s:u_s=u}Y_s$
\item $\alpha_C^Y=\mathbb{E}[Y_u\mid e(u) = {C}]$ is the control mean for outcome $Y$
\item For treatment conditions $T_i$, $\alpha_{T_i}^Y$ measures the deviation from control as follows:
\begin{equation}
\label{eq: main_treatment_effect}
\alpha_{T_i}^Y \;=\; \mathbb{E}[Y_u\mid e(u) = {T}_i] - \mathbb{E}[Y_u\mid e(u) = {C}] \;=\; \text{ATE}^Y(T_i).
\end{equation}
\squishend
We present the estimation results from Equation  \eqref{eq:user_outcome_ols} in Table~\ref{tab:user_ols_condition}. It confirms the previously observed pattern: for revenue, treatment effects are positive, increasing in ad load, and highly significant; for conversion, effects are negative, more pronounced at higher ad loads, and likewise highly significant. These findings suggest an important trade-off for the platform: more ad slots raise revenue while reducing total conversions. 

%These aggregate patterns are consistent with other findings in the literature: While the ad load can increase revenue, it can depress total consumer surplus or efficiency \citep{Moshary2025SponsoredSearch}.

We now examine how ad load changes consumers' behavior and whether consumers in our setting are averse to advertising. To answer these questions, we consider two platform usage metrics, shown in the last two rows of Table \ref{tab:summary_user_experiment}. For each day and experimental condition, we calculate the proportion of assigned users who performed at least one search. \textit{Daily Engagement}, defined as the mean of these daily proportions over the duration of the experiment, declines from 4.95\% in $C$ to 4.84\% in $T_6$ (a 2.2\% relative decrease), with significant differences versus control across all treatment conditions. Within high-ad-load groups ($T_4$–$T_6$), we also detect a modest but significant downward trend over time (See Web Appendix $\S$\ref{apdx:engagement_regressions} for a detailed analysis). This result shows that ad load harms platform usage, and consumers with higher ad loads are less willing to use the search feature.  This finding is similar to that of \citet{goli2025measuring}, who report a near-linear decline in total hours spent on an audio streaming platform as ad load increases. At the same time, \textit{Searches per User} rises from 5.14 ($C$) to 5.31 ($T_6$), a 3.3\% increase despite the fact that the unique number of active users decreases with higher ad load. This is likely because high ad exposure degrades the quality of search results; users cannot find the desired app at the top, prompting additional searches. A similar pattern of higher search volume with a degraded search experience has also been observed in Bing search experiments \citep{kohavi2012trustworthy}.  We present a detailed analysis of this effect of ad load on search behavior (including regressions) in Web Appendix $\S$\ref{apdx:search_per_user_regression}.

\begin{table}[htp!]
\centering
\small
\caption{Effect of Ad Load on Revenue and Conversion}
\label{tab:user_ols_condition}
\begin{tabular}{lcc}
\toprule
 & \textbf{Revenue} & \textbf{Conversion} \\
\midrule
$C$ & 35.722$^{***}$ (0.255) & 3.192$^{***}$ (0.005) \\
$T_2$ & 5.890$^{***}$ (0.626) & -0.046$^{***}$ (0.013) \\
$T_3$ & 8.818$^{***}$ (0.593) & -0.063$^{***}$ (0.015) \\
$T_4$ & 10.669$^{***}$ (0.507) & -0.096$^{***}$ (0.016) \\
$T_5$ & 13.746$^{***}$ (0.524) & -0.133$^{***}$ (0.013) \\
$T_6$ & 15.455$^{***}$ (0.545) & -0.165$^{***}$ (0.013) \\
\midrule
Obs. & 5,057,952 & 5,057,952 \\
$R^2$ & $<$0.001 & $<$0.001 \\
\bottomrule
\end{tabular}
\par\footnotesize{\textit{Note:} The \(C\) row reports the mean outcome in the control condition. Rows \(T_2,\dots,T_6\) report differences relative to \(C\). Standard errors in parentheses. ${}^{***}p<0.001$. }
\end{table}

%\rev{The near-zero $R^2$ is expected in user-level randomized data: idiosyncratic cross-user variation in outcomes dwarfs the share explained by treatment indicators. This has no bearing on the precision of the estimated treatment effects, which is reflected in the standard errors.}

In sum, increasing the number of ad slots produces a monotonic increase in revenue and a monotonic decrease in total conversions, which frames a tradeoff in the platform’s design problem. Meanwhile, ad load hurts consumers' experience and platform usage.  In the remainder of the paper, we focus on conversion as the main metric for consumer satisfaction for two reasons. First, app installations are the platform’s primary matching outcome: conversions directly capture whether users find and adopt relevant apps. Second, conversions are observed at the search level and therefore provide immediate feedback that can be directly linked to search-level decisions. 

%\footnote{One may contrast this finding with the view that advertising signals quality and thus reduces consumers' search costs \citep{nelson1974advertising, AtheyEllison2011}. In our setting, the organic ranking already surfaces highly relevant results (15.5 million of the roughly 15.9 million search installs in the experiment are organic), so additional sponsored slots largely displace strong organic matches rather than reveal hidden high-quality options. However, the effect depends on advertiser quality: as we show in \S\ref{subsec:het-queries}, additional slots create value when advertiser quality is high, whereas on low-CVR queries the sponsored slots are filled by options weaker than the organic results they displace.} 

%\rev{We note that these user-side effects are likely conservative relative to more competitive settings: the platform we study is a near-monopolist in its market, so users have limited ability to switch to alternative distribution channels. On platforms facing stronger competition, heavy ad loads would likely induce larger disengagement and switching responses.}

%

% \item \emph{Notes:} Each column estimates 
% $\;Y_{q,e} = \gamma_q + \sum_{j \neq C} \beta_j \mathbf{1}\{e=j\} + \varepsilon_{q,e}$,
% with $Y$ equal to Revenue (col.~1) or Installs (col.~2). Query fixed effects are absorbed by demeaning. 
% $\gamma$ reports the expected outcome under the control condition $C$ (averaged across queries). 
% Standard errors in parentheses are clustered at the query level.  
% ${}^{*}p{<}0.10$, ${}^{**}p{<}0.05$, ${}^{***}p{<}0.01$.

\subsection{Heterogeneity in the Revenue–Conversion Trade-off}
\label{ssec:heterogeneity}

So far, we have shown that more ad slots generally increase revenue and decrease conversions. We now examine how the revenue-conversion trade-off varies across queries and over time.  

%Specifically, we show that: (i) the average advertising conversion rate on a query under the control condition is an important factor in determining relative revenue gain from a higher number of ads. Specifically, we show that on queries with low ad-conversion rate, adding more sponsored slots does not increase revenue, and (ii) Revenue and conversion outcomes could significantly shift over time. We use the example of brand advertising to show how the presence/absence of a brand advertiser can shift the main outcomes. Specifically, we find that brand advertisers' presence improves both revenue and conversion across conditions, with the greatest improvements in both outcomes occurring under high ad load conditions.

\subsubsection{Heterogeneity Across Queries}
\label{subsec:het-queries}

We now examine how treatment effects vary across queries. We first segment queries into three buckets based on the quality of their ads. We select the top 10{,}000 queries based on advertising conversion count and rank them by their control group ad conversion rate. Formally, for query $q$ we define ads' conversion rate based on their performance in the control condition as follows:
$$\text{AdsCVR}_q = \mathbb{E}[\text{Conv}_{s,1} \mid e_s=C, q_s=q],$$
where $\text{Conv}_{s,1}$ is the conversion indicator of the (single) sponsored position in the control group. We only use the control condition for labeling ads because it keeps the position of the ad fixed to the first position.

We then partition queries based on AdsCVR into three equal-sized segments as follows:$K_1=\text{lowest 3{,}333}$, $K_2=\text{middle 3{,}334}$, and $K_3=\text{highest 3{,}333}$. $K_1$ consists of the set of queries with the lowest advertising conversion rate under the control condition, while $K_3$ consists of the highest ad conversion rate queries. We use three segments to balance interpretability and statistical precision: each segment has sufficient search volume for precise estimation while still capturing meaningful variation in ad load effects. See Web Appendix $\S$\ref{apdx:query_bucket_summary} for the summary statistics of these three segments of queries. 

To estimate how ad load affects outcomes within each segment, we run the following query–fixed-effects regression for $Y \in \{\text{Revenue},\text{Conversion}\}$:
\begin{equation}
\label{eq:bucket}
Y_{s} \;=\; \alpha_{q_s}^Y \;+\; \sum_{k=1}^3\sum_{i=2}^6 \beta_{k,i}^Y\,
\mathbf{1}\{q_s\in K_k\}\, \mathbf{1}\{e_s = T_i\} \;+\; \varepsilon_{s}^Y
\end{equation}
$Y_s$ is the observed outcome for search $s$. The query fixed effect $\alpha_{q_s}^Y$ absorbs all baseline differences across queries. Thus, $\beta_{k,i}^Y$ isolates the incremental effect of treatment $T_i$ relative to control within segment $k$. 

We report the estimation results from Equation \eqref{eq:bucket} for all three segments of queries in Table~\ref{tab:bucket-fe-estimates} in Web Appendix $\S$\ref{apdx:query_bucket_summary}. To aid interpretation, we plot the estimated effects in Figure~\ref{fig:het-bucket-plots} after normalizing each segment’s treatment effects by its own control mean (so a value of 1.10 corresponds to a 10\% increase relative to the segment’s control mean). \footnote{Points in Figure~\ref{fig:het-bucket-plots} show the treatment-effect estimates from Equation~\eqref{eq:bucket}. Confidence intervals are reported separately for each treatment effect. Although confidence intervals overlap across some ad-load conditions, pairwise tests confirm that revenue effects increase significantly with ad load in the High-CVR segment.} We observe two patterns. First, we find that conversion decreases significantly as ad load increases in all three segments. Further, the magnitude and slope of this decline are broadly similar across all three segments. However, revenue displays strong heterogeneity across segments. In the low segment ($K_1$), additional ad slots do not raise revenue (on average), and in fact lead to significantly negative effects for $T_3$ and $T_4$. In the medium segment ($K_2$), revenue becomes positive and significant at higher ad loads. In the high segment ($K_3$), revenue rises sharply and monotonically with the number of ads. These patterns reveal a sharp asymmetry in how the top advertiser’s conversion rate mediates the effect of additional ad slots. For queries with low advertiser quality, additional ads fail to generate incremental paid installs. For such queries, more ads displace relevant organic results, depressing both revenue and conversion.\footnote{One reason revenue can decline rather than merely remain flat is that additional sponsored slots may dilute attention to the top sponsored result: when low-quality ads occupy more of the visible page, users appear less likely to convert on the first sponsored listing, and the additional lower-ranked ads do not generate enough paid installs to offset this loss.} In contrast, for queries with high-quality advertisers (who have high conversion rates), increasing ad load unlocks substantial revenue gains, albeit with expected conversion losses.

This contrast implies an important takeaway for platform design. A “one-size-fits-all” ad load policy is inefficient: showing many ads universally can backfire for queries with low-quality ads. As such, platforms should adopt query-sensitive allocation rules that limit exposure when advertiser quality is poor and expand ad supply when strong advertisers are present. Such query-specific or adaptive ad load allocation can improve monetization efficiency while reducing user-side friction caused by irrelevant ad clutter.

\begin{figure}[htp!]
  \centering
  \begin{subfigure}[t]{0.4\textwidth}
    \centering
    \includegraphics[width=\linewidth]{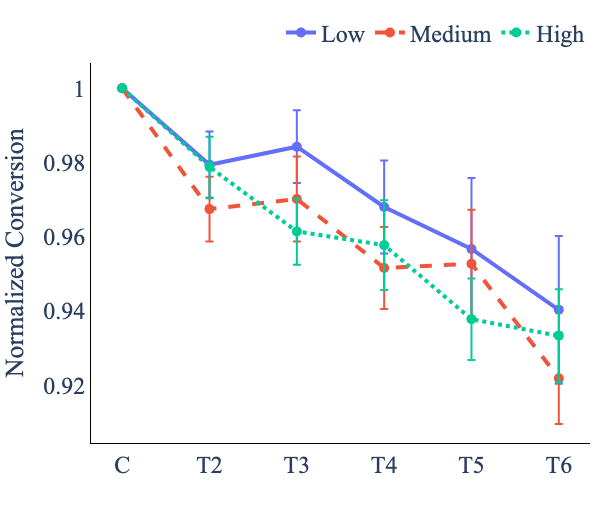}
    \caption{Normalized Conversion (Control=1)}
    \label{fig:het-bucket-conv}
  \end{subfigure}
  \hfill
  \begin{subfigure}[t]{0.4\textwidth}
    \centering
    \includegraphics[width=\linewidth]{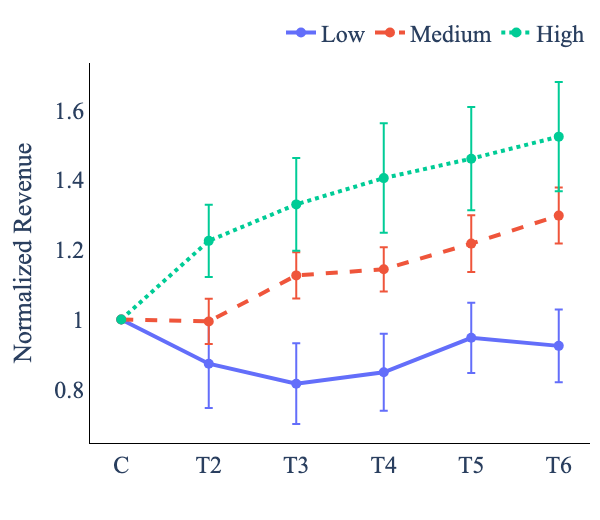}
    \caption{Normalized Revenue (Control=1)}
    \label{fig:het-bucket-rev}
  \end{subfigure}
  \caption{Treatment effects by Ad CVR segment. }
  \label{fig:het-bucket-plots}
  % \par\medskip
  % \footnotesize
  % \textit{Notes:} Points show treatment effects from Equation~\eqref{eq:bucket} normalized by the segment-specific control mean. Error bars denote 95\% confidence intervals reported for each treatment effect separately. 
\end{figure}

\subsubsection{Heterogeneity Over Time}
\label{sssec:het_time}
We now examine how the revenue-conversion trade-off varies over time using the presence or absence of brand advertising as a salient time-varying market-state. In this analysis, we focus on {\it brand queries} -- queries which refer to a specific brand/app, e.g., WhatsApp. We first split the 8{,}561 brand queries in our data into two groups:
\squishlist
\item  \emph{Brand-Active Queries}, where the brand app advertises on at least one day during the experiment (709 queries). 
\item  \emph{Brand-Inactive Queries}, where the brand app never advertises (7{,}852 queries).
\squishend

Figure~\ref{fig:on_off_queries} compares the revenue–conversion frontier across these two groups using the same axis scaling in both panels. {\it Brand-Active Queries} exhibit larger revenue gains and smaller conversion losses relative to {\it Brand-Inactive Queries}, resulting in a noticeably steeper frontier. 

However, query-specific differences could still confound the comparison above, i.e., {\it Brand-Active Queries} can be systematically different from {\it Brand-Inactive Queries}. We therefore focus on the 709 {\it Brand-Active Queries} and compare search-level outcomes when the brand ad is present ({\it Brand-On}) vs. absent ({\it Brand-Off}). Let $B_{s}\in\{0,1\}$ indicate whether the brand ad at search ID $s$ is shown. Figure~\ref{fig:brand_presence_portion} shows the distribution of the share of {\it Brand-On} searches by query among the {\it Brand-Active} set. We see significant {\it within-query} variation in the presence/absence of brand-ads for almost all queries. 
We use this variation to measure how outcomes differ between Brand-On and Brand-Off states under each ad load condition. Specifically, we run the following regressions for $Y\in \{\text{Revenue}, \text{Conversion}\}$ controlling for query-condition fixed effect:
\begin{align}
Y_{s}
&= \alpha_{q_s,\,e_s}^Y \;+\; \theta_{\,e_s}^Y \mathbf{1}\{B_{s}=1\}
\;+\; \varepsilon_{s}^Y \label{eq:brand_ad_effect}
\end{align}
where $\alpha_{q_s,\,e_s}^Y$ denote query-condition fixed effects. Thus, $\theta_{e_s}^Y$ measures the within-query change in $Y$ when the brand ad is displayed under condition $e_s$. Table~\ref{tab:brand_on_rev_conv_effect} reports $\hat{\theta}_e^Y$ estimates from this analysis. We see that Brand-On searches have higher revenue and conversion than Brand-Off searches across all ad-load conditions, and the revenue gap is larger at higher ad loads. In other words, brand-advertiser presence is associated with a more favorable revenue--conversion frontier, particularly under higher ad loads.\footnote{Previous studies have attributed these improvements to the brand ad cannibalizing the organic brand result \citep{Simonov2017CompetitionAC}} Figure~\ref{fig:on_off_both} visualizes the revenue and conversion gaps associated with brand presence. Revenue rises sharply under {\it Brand-On} while conversion declines only slightly, illustrating that brand activity meaningfully softens the tradeoff. This finding parallels the cross-query heterogeneity documented in \S\ref{subsec:het-queries}: brand participation raises the quality of the sponsored slate for a query, effectively moving the query toward the high-advertiser-quality regime in which additional sponsored slots are more valuable.

\begin{figure}[htp!]
    \centering
    \begin{subfigure}[t]{0.4\textwidth}
        \centering
        \includegraphics[width=\linewidth]{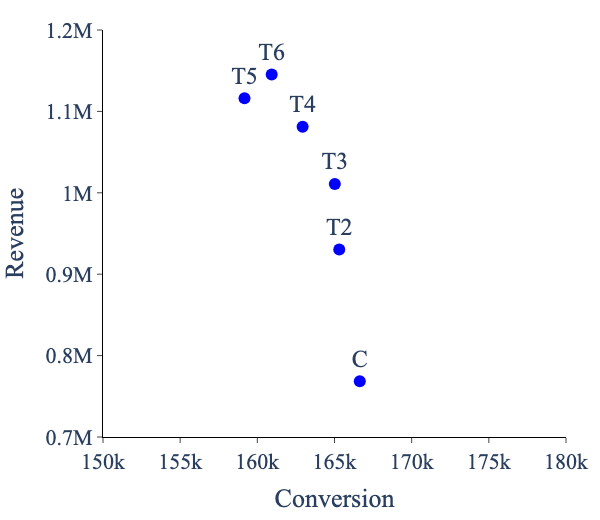}
        \caption{Brand-Active Queries (709 Queries)}
        \label{fig:brand_on_queries}
    \end{subfigure}
    \hfill
    \begin{subfigure}[t]{0.4\textwidth}
        \centering
        \includegraphics[width=\linewidth]{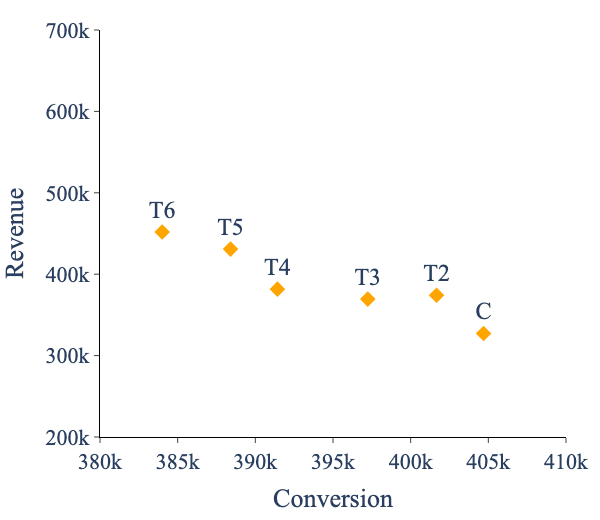}
        \caption{Brand-Inactive Queries (7,852 Queries)}
        \label{fig:brand_off_queries}
    \end{subfigure}
    \caption{Revenue-Conversion tradeoff based on brand activity. Each point corresponds to an experimental condition. The Conversion axis measures total search conversions, i.e., app installs originating from both sponsored ads and organic results; the Revenue axis measures total ad revenue.}
    \label{fig:on_off_queries}
\end{figure}

\begin{figure}[htp!]
    \centering
    \includegraphics[width=0.4\linewidth]{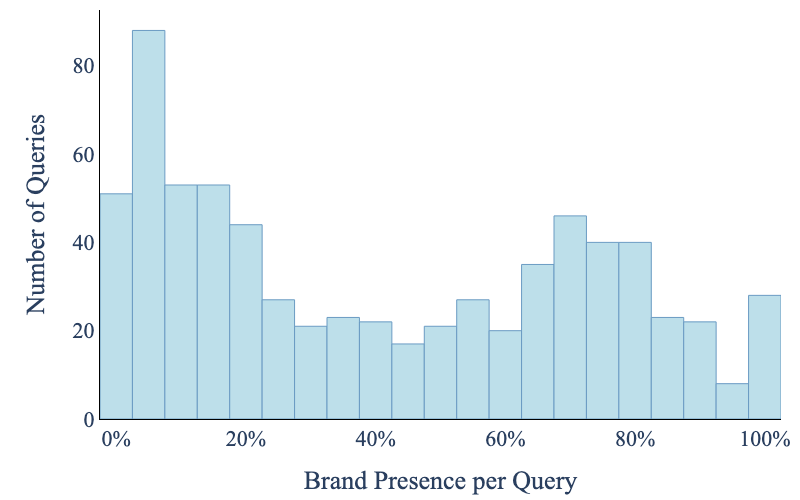}
    \caption{Distribution of {\it Brand-On} searches within {\it Brand-Active Queries}}
    \label{fig:brand_presence_portion}
\end{figure}

\begin{table}[htp!]
\centering
\small
\caption{Brand effect by condition with Query$\times$Condition fixed-effects}
\label{tab:brand_on_rev_conv_effect}
\begin{tabular}{lcc}
\toprule
 & \textbf{Revenue} & \textbf{Conversion} \\
\midrule
C  & 93.05$^{***}$ (4.75)  & 0.104$^{***}$ (0.015) \\
T2 & 116.36$^{***}$ (8.11) & 0.100$^{***}$ (0.018) \\
T3 & 128.70$^{***}$ (9.73) & 0.111$^{***}$ (0.017) \\
T4 & 132.31$^{***}$ (8.82) & 0.115$^{***}$ (0.013) \\
T5 & 139.49$^{***}$ (9.17) & 0.127$^{***}$ (0.014) \\
T6 & 140.03$^{***}$ (7.95) & 0.121$^{***}$ (0.013) \\
\midrule
Query-Condition FE & \checkmark & \checkmark \\
Obs. & 2,253,506 & 2,253,506 \\
$R^2$ & 0.103 & 0.016 \\
\bottomrule
\end{tabular}
\par\vspace{0.25em}
\footnotesize
\textit{Notes:} Each column reports $\hat{\theta}_e$ from specification \ref{eq:brand_ad_effect}. Standard errors in parentheses are clustered at the query level. ${}^{***}p < 0.001$, ${}^{**}p < 0.01$, ${}^{*}p < 0.05$.
\end{table}

\begin{figure}[htp!]
  \centering
  \begin{subfigure}[t]{0.45\textwidth}
    \centering
    \includegraphics[width=\linewidth]{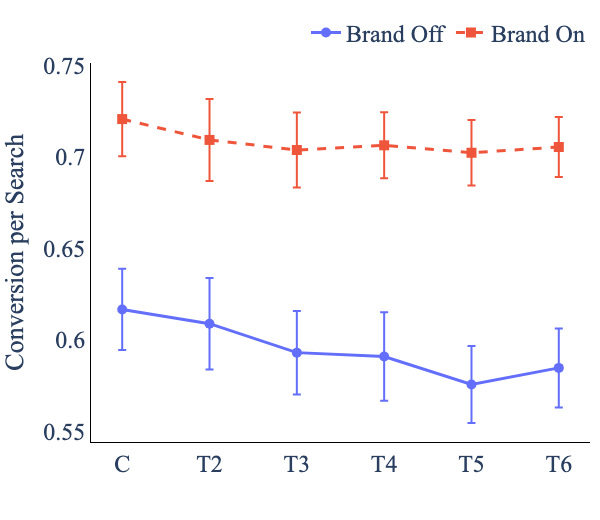}
    \caption{Conversion per search: {\it Brand Off} vs. {\it Brand On}.}
    \label{fig:conversion_on_off}
  \end{subfigure}
  \hfill
  \begin{subfigure}[t]{0.45\textwidth}
    \centering
    \includegraphics[width=\linewidth]{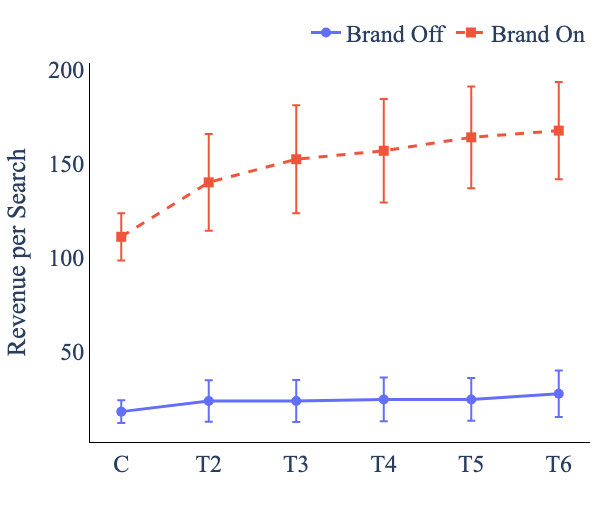}
    \caption{Revenue per search: {\it Brand Off} vs. {\it Brand On}.}
    \label{fig:revenue_on_off}
  \end{subfigure}
  \caption{The effect of brand advertising on left: conversion and right: revenue. The gaps equal to the brand effect estimates from Table \ref{tab:brand_on_rev_conv_effect}.}
  \label{fig:on_off_both}
\end{figure}

Taken together, these results show that brand activity systematically affects the revenue–conversion frontier, even within the same query. When the brand advertises, the marginal cost of showing additional ads is substantially lower: revenue gains grow while conversion losses shrink. For platforms, this highlights the importance of incorporating time-varying market conditions into ad-load decisions. A static, time-invariant allocation rule will systematically misallocate ad load (e.g., showing too few ads when brands are actively advertising and too many when they are not). Effective ad-load design, therefore, requires adaptive mechanisms that respond directly to shifts in advertiser presence and market composition.

\paragraph{Summary and implications for policy design.} 
The heterogeneity results show that the revenue--conversion frontier is neither uniform across queries nor stable over time. Across query segments, the marginal conversion loss of adding sponsored slots is broadly similar, but the marginal revenue effect varies sharply with advertiser quality. We find that additional slots generate little incremental sponsored installs---and can even reduce revenue---for queries with low ad-conversion rates, whereas they deliver substantial incremental revenue for queries with high ad-conversion rates. Furthermore, within a given query, the frontier shifts with market composition: Brand-On states are associated with higher revenue and total conversions than Brand-Off states, and these differences are largest at higher ad loads. In our reduced-form analysis above, we operationalized non-stationarity along one salient dimension---the presence or absence of the brand advertiser on a given query---because it is observable and generates substantial within-query variation. However, non-stationarity can arise from many other forces, e.g., entry/exit of other advertisers, budget exhaustion, shifts in bids and predicted relevance, seasonality, or changes in user intent. So a policy that conditions only on brand presence would be incomplete; instead, the platform needs a general adaptive rule that updates from recent outcomes rather than hard-coding a single market-state indicator. Taken together, these patterns imply that any \emph{static} ad-load rule---platform-wide or fixed per query---will systematically misallocate inventory, showing too many ads when advertiser quality is low and too few when high-quality or brand demand is present. This motivates an \emph{adaptive}, query-level policy that updates ad load using recent outcomes to track changes in advertiser quality and advertiser participation.

%\paragraph{Implications for policy design.}
%Taken together, these results show that the revenue--conversion frontier varies both across queries and within the same query over time. Query-level advertiser quality determines whether additional sponsored slots generate revenue, while changes in advertiser participation, illustrated by brand entry and exit, shift the frontier over time. These facts make static ad-load rules insufficient: platform-wide rules ignore cross-query heterogeneity, and query-fixed rules can become stale as market conditions change. This motivates the sequential decision framework developed next.

% Ads Conversion per Search
%   & 1.22\% & 1.42\% & 1.54\% & 1.64\% & 1.75\% & 1.83\% \\[0.5ex]
% Organic Conversion per Search
%   & 61.04\% & 59.71\% & 57.66\% & 56.87\% & 56.34\% & 55.33\% \\[0.5ex]
% Total Conversion per Search
%   & 62.26\% & 61.13\% & 59.20\% & 58.51\% & 58.09\% & 57.16\% \\[0.5ex]

\section{Problem Setup}
\label{sec:prob_setup}

\subsection{Background}
\label{ssec:background}

%So far, we have established that ad load design is not a one-dimensional monetization choice. Increasing the number of sponsored slots raises short-run revenue but reduces total conversions (and, on average, user engagement). Moreover, the revenue--conversion trade-off varies sharply across queries and shifts over time: for some queries the marginal revenue from additional slots is near zero or negative, while for others additional slots generate substantial revenue gains (e.g., those that attract high-quality advertisers with high conversion rates). The same query can also move to a different frontier as advertiser participation changes. These patterns imply that ``always show \(i\) ads'' rules are inefficient, and even query-fixed rules can become stale when market conditions evolve.

We now model ad load design as a sequential decision problem. For each incoming search request, the platform observes a context and chooses how many sponsored slots to display, then observes noisy outcomes. A useful policy should (i) customize ad load across queries, (ii) adapt to non-stationarity, and (iii) make the revenue--conversion trade-off explicit. The setup below defines the context, actions, and outcomes; introduces a scalar reward representing the platform’s objective; and provides a deployable adaptive policy.

\paragraph{Context, actions, outcomes.}
Consider a sequence of search requests indexed by \(s=1,2,\dots\). Each request arrives with an observable context $x_s \;=\; (t_s, q_s)\in\mathcal{X}$, where \(t_s\) is the time and \(q_s\) is the query.\footnote{The context can be expanded to include user/device attributes or advertiser-set features. We restrict attention to non-personalized ad-load rules for two reasons: (i) the platform avoids user-level ad-load personalization due to privacy and fairness concerns; and (ii) the main sources of heterogeneity in our setting (as documented above) operate at the query and time levels.} Given \(x_s\), the platform chooses an action $a_s \in \mathcal{A} \;=\; \{1,2,\dots,6\}$, 
where \(a_s\) is the number of sponsored slots displayed above organic results. Conditional on \(a_s\), the platform runs a GSP auction and shows the top \(a_s\) sponsored listings.

After the search results are shown, the platform observes a vector outcome $y_s \;=\; \big(\mathrm{Conversion}_s, \mathrm{Revenue}_s\big) \in \mathcal{Y}\subseteq\mathbb{R}_+^2$, where \(\mathrm{Conversion}_s\) is the total number of installs from the search (sum of sponsored and organic installs) and \(\mathrm{Revenue}_s\) is ad revenue generated by the search.\footnote{By definition, \(\mathrm{Revenue}_s=0\) if no sponsored listing is installed. If a sponsored listing at position \(i\) is installed, \(\mathrm{Revenue}_s\) equals the minimum bid required for the advertiser to retain position \(i\) under the GSP rules.}

\paragraph{History, policies, and objective.}
Let the information available at (calendar) time \(t\) be the history of past contexts, actions, and outcomes defined as $H_t \;=\; \{(x_{s'},a_{s'},y_{s'}) : t_{s'} < t\}\in\mathcal{H}$.

A policy maps the current context and available history to an action:
\begin{defn}[Policy]
A \emph{policy} \(\pi\) is a mapping $
\pi:\mathcal{H}\times\mathcal{X}\to\Delta(\mathcal{A})$,
where \(\Delta(\mathcal{A})\) is the probability simplex over \(\mathcal{A}\). Under \(\pi\), the action for search \(s\) is drawn as \(a_s\sim \pi(H_{t_s},x_s)\).
\end{defn}
Because ad load can affect user behavior (e.g., the number of subsequent searches), the realized sequence of contexts \(\{x_s\}\) need not be policy-invariant.

Next, to make the revenue--conversion trade-off explicit, we use a scalarized reward:
\begin{defn}[Reward Function]
For \(\lambda\ge 0\), define
\begin{equation}
\label{eq:reward_function}
r(y_s,\lambda)\;=\;\lambda\,\mathrm{Revenue}_s \;+\; \mathrm{Conversion}_s.
\end{equation}
\end{defn}
The scalarization parameter \(\lambda\) governs the relative weight on revenue versus conversions. Scalarization is standard in multi-objective optimization \citep{deb2016multi}, and varying \(\lambda\) traces out a Pareto frontier. Recent marketing studies also use this approach \citep{rafieian2025multiobjective, wang2024recommending, cheng2025balancing}. In practice, firms calibrate \(\lambda\) using the estimated monetary value of an incremental conversion, such as expected downstream retention, future purchases, or user lifetime value. This maps both terms in \(r(y_s,\lambda)\) to a common value scale, so policy comparisons reflect a transparent economic trade-off. Firms may also tune \(\lambda\) to satisfy business constraints, for example, meeting a minimum revenue target while limiting conversion losses. We discuss the choice of $\lambda$ in our setting in $\S$\ref{ssec:implementation_details}.

\subsection{Problem Definition}
Given \(\lambda\), the firm’s objective is to maximize cumulative expected reward over a fixed time horizon $T$:
\begin{equation}
\label{eq:policy_objective}
\pi^\star \in \arg\max_{\pi}\; \mathbb{E}\!\left[\sum_{s: t_s \leq T} r\!\big(y_s^\pi,\lambda\big)\right],
\end{equation}
where the expectation is taken over randomness in actions and outcomes (and, potentially, in the arrival process of contexts induced by the policy).

The objective in Equation \eqref{eq:policy_objective} is to choose, for each incoming search \(s\), an ad load that balances revenue and conversions according to the scalarization parameter \(\lambda\). For a given query \(q\) and candidate ad load \(i\), the relevant objects are the conditional mean outcomes
\[
\mathbb{E}[\mathrm{Revenue}_s\mid q_s=q,a_s=i]
\quad\text{and}\quad
\mathbb{E}[\mathrm{Conversion}_s\mid q_s=q,a_s=i].
\]
If these quantities were known and stable, the platform could select the ad load that maximizes expected reward. In practice, however, both these components are unknown; as such the platform must trade off learning these components through sufficient {\it exploration} with {\it exploiting} the information it has already gained.

%\paragraph{\rev{Connection to contextual bandits:}} 

Note that each search arrives with a context (in our case $x_s$, as defined earlier). Thus, a natural starting point for the adaptive ad-load design problem is contextual bandits, e.g., LinUCB \citep{li2010contextual, chu2011contextual, abbasi2011improved} or parametric Thompson sampling \citep{agrawal2013thompson}. These methods are theoretically appealing when rewards are generated by a stable/stationary parametric model over well-defined contexts and context arrivals are i.i.d. Unfortunately, our environment departs from these assumptions in four important ways. First, ad-load decisions can affect subsequent user behavior, so context arrivals in our setting are not strictly policy-invariant, i.e., the arrival of contexts can be non-i.i.d. and policy-dependent. This is often referred to as adversarial contexts in bandit settings \citep{lattimore2020bandit}. Second, the platform may not be able to pre-define and track all the relevant contextual variables that can affect the reward function, e.g., advertiser entry/exit, set of apps available, budget depletion, bid changes, user-demand, and seasonality. 
Third, even if the firm could identify all the relevant contextual variables, it is unclear what parametric form the reward function would take. Fourth, reward mappings are non-stationary over time; as shown in \S\ref{sssec:het_time}, within-query shifts are empirically important.

%Recall that $x_s$ is simply defined as the set of query ($q_s$) and time-stamp ($t_s$) associated with the search $s$. However, the time-stamp is unique to each search and cannot be modeled as a contextual variable (since the platform cannot learn a reward model as a function of the time-stamp)\mr{This is not true in general: Timestamp is too granular but for example hour can be used as a context}. Thus, the only contextual variable available to the platform is the query $q_s$. However, as
%Fourth, even if we could specify or learn an approximate reward model over a set of pre-defined contextual variables, these reward mappings are unlikely to be valid sine the true reward functions are non-stationary: advertiser entry/exit, budget depletion, bid changes, and relevance shifts move the revenue--conversion frontier over time. Recall that, our heterogeneity results in \S\ref{sssec:het_time} show that within-query shifts are empirically important. 

Recent work relaxes some of these assumptions but not all. For example, \citet{CheungSimchiLeviZhu2019NonStationarity, russac2019weighted} model non-stationarity via discounting/sliding-window, but still assume that the platform knows all the relevant contextual variables that affect the reward model and a linear (parametric) reward model.\footnote{They also introduce additional tuning and exploration-safety trade-offs that can be difficult to deploy in production.}  \citet{jain2024effective} allow for adversarial context arrivals and non-parametric reward models in contextual bandits setting. However, their model still requires the platform to know and define all the relevant contextual variables that affect the reward model, and assumes that rewards are stationary and stable. 

Taken together, these gaps imply three design requirements for our setting: (i) model-free reward estimation under partial observability of relevant contextual features, (ii) explicit adaptation to drifting rewards, and (iii) persistent exploration that remains operationally stable. We address these requirements with our proposed adaptive ad load algorithm introduced next.

%\rev{Notably, the platform's existing prediction, auction, and ranking systems are already context-dependent and periodically updated. Our goal is therefore not to replace this infrastructure with a new learned reward model, but to add a lightweight ad-load decision and exploration layer on top of it. This production-friendly design philosophy is similar in spirit to \citet{YeEtAl2023ColdStart}, who add a simple uniform exploration layer on top of an existing advertising system rather than rebuilding the underlying machine-learning infrastructure.} 

\section{Adaptive Ad Load Design}
\label{sec:laal_policy_design}

We propose and implement \emph{exploration-augmented Locally Adaptive Ad Load} (\textbf{e-LAAL}). The notation is intentional: \textbf{LAAL} denotes the adaptive query-level decision rule, while the prefix \(\mathbf{e}\) denotes the exploration layer in the deployed system. Throughout, ``locally'' refers to estimation that is local in \emph{query} and in \emph{time}: LAAL estimates rewards separately for each query using only recent observations from a sliding window---i.e., it is a query-level, sliding-window greedy rule. 

%We use the term to denote this query-recency structure rather than any notion of metric-space locality.

\subsection{e-LAAL Architecture and Traffic Allocation}
\label{ssec:architecture}

e-LAAL combines two components: (i) an adaptive arm that runs \textsc{LAAL}, and (ii) parallel static arms \(\{C,T_2,\dots,T_6\}\) inherited from the first experiment. Let \(w_e\) denote arm-level traffic shares of the static arms and define
\begin{equation}
\epsilon \equiv \sum_{e\in\{C,T_2,\dots,T_6\}} w_e.
\label{eq:exploration_budget}
\end{equation}
Thus, \(\epsilon\) is the fixed exploration budget, while \(1-\epsilon\) is served by the adaptive LAAL arm. In our deployment, \(\epsilon=0.10\), so 10\% of users are permanently allocated to static exploration and 90\% to LAAL (Table~\ref{tab:exp-design}).

This architecture separates exploration from exploitation in a controlled way. Static arms ensure persistent support for every ad-load level and LAAL exploits these continually refreshed observations to adapt ad load across queries and over time. The static cohorts are also used to estimate contemporaneous fixed ad load (always show \(i\) ads) counterfactuals; see \S\ref{ssec:benchmark_static} for details.  

\begin{table}[htp!]
\centering
\caption{Traffic Allocation under e-LAAL}
\small
\label{tab:exp-design}
\begin{tabular}{l r}
\toprule
Condition & Share of Users \((w_e)\) \\
\midrule
\(C\) (1 ad)              & 5\% \\
\(T_{2}\) (2 ads)         & 1\% \\
\(T_{3}\) (3 ads)         & 1\% \\
\(T_{4}\) (4 ads)         & 1\% \\
\(T_{5}\) (5 ads)         & 1\% \\
\(T_{6}\) (6 ads)         & 1\% \\
LAAL (adaptive)            & 90\% \\
\midrule
\(\text{e-LAAL}\) (all conditions combined) & 100\% \\
\bottomrule
\end{tabular}
\end{table}

\subsection{\textsc{LAAL} Decision Rule and Algorithm}
\label{ssec:decision_rule}

We now describe the adaptive \textsc{LAAL} policy, which runs on 90\% of the traffic. \textsc{LAAL} is a deployable policy for pursuing the cumulative-reward objective in Equation~\eqref{eq:policy_objective}: it uses recent query-level outcomes to track the contemporaneous reward-maximizing ad load as the reward frontier changes over time. It is adaptive along two dimensions. First, it is \emph{query-adaptive}: different queries can receive different ad loads because \(\{r_i(q,t)\}_{i=1}^{6}\) differs across queries. Second, it is \emph{time-adaptive}: for a fixed query, the selected ad load can change as recent outcomes update, allowing the policy to respond to shifts in market conditions (e.g., brand entry/exit, bidder-quality changes). Operationally, for each incoming search, LAAL combines local recency-based estimation with greedy action selection:
\squishlist
\item \textbf{Sliding-window reward estimation.} For each query--ad-load pair \((q,i)\), LAAL estimates recent mean conversion and revenue per search using a fixed-length sliding window.\footnote{Using fixed-length windows (or related forgetting schemes) to handle non-stationarity is common in online learning and bandits \citep{TrovoEtAl2020SWTS,CavenaghiEtAl2021NonStationaryMAB,CheungSimchiLeviZhu2019NonStationarity}.}
\item \textbf{Greedy ad-load selection.} LAAL maps these estimates into a scalarized reward and selects the ad load with the highest estimated reward.
\squishend

\begin{figure}[htbp]
\centering
\begin{tikzpicture}[x=1.15cm,y=1cm,>=latex,font=\footnotesize]
  % Parameters
  \def\h{3}      % window length (days)
  \def\D{2}      % gap between searches (days)
  \pgfmathsetmacro{\tm}{\D-\h}

  % Axis
  \draw[->,thick] (-4.1,0) -- (3.3,0) node[right]{calendar time};

  % Key ticks (uses \D, no hard-coded "2")
  \foreach \x/\lab in {-\h/{\(t_1-h\)},0/{\(t_1\)},\tm/{\(t_2-h\)},\D/{\(t_2=t_1+\Delta\)}}{
    \draw (\x,0.06) -- (\x,-0.06);
    \node[below] at (\x,-0.08) {\lab};
  }

  % Search times
  \draw[dashed] (0,-1.05) -- (0,1.2);
  \draw[dashed] (\D,-1.05) -- (\D,1.2);
  \node[above] at (0,1.2) {search \(s_1\)};
  \node[above] at (\D,1.2) {search \(s_2\)};

  % Gap annotation
  \draw[<->] (0,0.95) -- (\D,0.95);
  \node[above] at (\D/2,0.95) {\(\Delta\)};

  % Sliding windows with interval notation
  \draw[rounded corners,very thick,blue] (-\h,0.35) rectangle (0,0.75);
  \node[blue] at (-1.5,0.95) {\(W_{t_1}(q)=[t_1-h,t_1)\)};

  \draw[rounded corners,very thick,orange!85!black] (\tm,-1.05) rectangle (\D,-0.65);
  \node[orange!85!black] at (0.5,-1.25) {\(W_{t_2}(q)=[t_2-h,t_2)\)};

  % Overlap retained
  \fill[green!25] (\tm,-0.03) rectangle (0,0.03);
\end{tikzpicture}
\caption{Sliding-window update for two searches of the same query \(q\), separated by \(\Delta\). LAAL uses \(W_t(q)=[t-h,t)\); as the window shifts from \(t_1\) to \(t_2\), old data are dropped and new data are added.}
\label{fig:sliding_window_two_searches}
\end{figure}

In deployment, LAAL maximizes the per-search reward in Equation~\eqref{eq:reward_function}. For each query \(q\), it forms sliding-window estimates for each ad load \(i\in\{1,\dots,6\}\). Let \(h\) denote window length, and define the {\it sliding window} $W_t(q)$ as:
\begin{equation}
W_t(q)=\{s' : q_{s'}=q,\ t-h \le t_{s'} < t\}.
\end{equation}
Figure~\ref{fig:sliding_window_two_searches} depicts how the sliding window $W_t(q)$ is used for two searches of the same query $q$, $s_1$ and $s_2$, separated by a time difference of $\Delta$. Notice that for the same query in the newer search ($s_2$), the older data is dropped and only data from the most recent time-window of length $h$ is considered.

Next, for each ad-load \(i\), let \(n_i(q,t)\) denote the {\it number of observations in the recent window} for query \(q\) when ad load \(i\) was shown:
\begin{equation}
n_i(q,t)=\sum_{s'\in W_t(q)}\mathbf{1}\{a_{s'}=i\}.
\label{eq:local_obs}
\end{equation}
Using these observations, LAAL computes two local means (the two \(\mu\)-terms):  
\(\widehat{\mu}_{C,i}(q,t)\), the \emph{local mean conversion per search}, and  
\(\widehat{\mu}_{R,i}(q,t)\), the \emph{local mean revenue per search}, both for query \(q\) under ad load \(i\):
\begin{equation}
\widehat{\mu}_{C,i}(q,t)=
\frac{\sum_{s'\in W_t(q)} \mathbf{1}\{a_{s'}=i\}\,\mathrm{Conversion}_{s'}}{n_i(q,t)+\delta},
\quad
\widehat{\mu}_{R,i}(q,t)=
\frac{\sum_{s'\in W_t(q)} \mathbf{1}\{a_{s'}=i\}\,\mathrm{Revenue}_{s'}}{n_i(q,t)+\delta},
\label{eq:local_con_rev}
\end{equation}
where \(\delta>0\) is a small stability constant. LAAL then computes  \(\widehat{r}_i(q,t)\), the \emph{estimated local reward} for choosing ad load \(i\) on query \(q\) at time \(t\):
\begin{equation}
\widehat{r}_i(q,t)=\lambda\,\widehat{\mu}_{R,i}(q,t)+\widehat{\mu}_{C,i}(q,t)-\eta i,
\label{eq:reward_est_LAAL}
\end{equation}
selects ad load:
\[
a_t(q)\in\arg\max_{i\in\{1,\dots,6\}}\widehat{r}_i(q,t).
\]
We set $\eta$ to a very small number; thus \(-\eta \times i\) is a small tie-break regularizer that breaks near ties in favor of fewer ads. To stabilize sparse queries, LAAL applies greedy selection only when the query has at least \(m\) observations in the window; otherwise, it returns the default ad load of \(i_{\mathrm{def}}\). We discuss the choice of the hyperparameters ($h, \delta, \eta, m, i_{\mathrm{def}}$) in $\S$\ref{ssec:implementation_details}. 

Algorithm~\ref{alg:slot_selection} presents the details of the deployed procedure.

%\textbf{Conditional Cohort-Invariance Assumption} LAAL pools recent observations from both the LAAL and static cohorts to estimate query--ad-load rewards (i.e., to estimate Equations \eqref{eq:local_obs} and \eqref{eq:local_con_rev}). Pooling data from the static and LAAL arms improves support for each query--ad-load pair and, as such, improves precision. However, it requires immediate per-search outcomes to be conditionally comparable across cohorts. Specifically, after conditioning on query, time, and displayed ad load, revenue and conversion responses are assumed to be independent of whether the observation came from a static or adaptive cohort. This assumption is weaker than requiring search behavior to be policy-invariant, i.e., it does not require identical future search behavior across cohorts; the static cohorts are maintained precisely because ad load may affect subsequent usage. But it rules out unobserved regime-specific response effects.  We formally test and establish the validity of this assumption in Web Appendix $\S$\ref{appsec:CCIA}.

\textbf{Conditional Cohort-Invariance Assumption.} LAAL pools recent observations from both the LAAL and static cohorts to estimate query--ad-load rewards (i.e., to estimate Equations \eqref{eq:local_obs} and \eqref{eq:local_con_rev}). Pooling improves support for each query--ad-load pair and, as such, improves precision. However, it requires immediate per-search outcomes to be conditionally comparable across cohorts: after conditioning on query, time, and displayed ad load, revenue and conversion responses are assumed to be independent of whether the observation came from a static or adaptive cohort. This assumption is weaker than requiring search behavior to be policy-invariant; it does not require identical future search behavior across cohorts, and the static cohorts are maintained precisely because ad load may affect subsequent usage. Instead, it rules out unobserved regime-specific differences in immediate search-level responses after conditioning on query, time, and realized ad load. We formally test and establish the validity of this assumption in Web Appendix $\S$\ref{appsec:CCIA}.

Further, note that while this assumption is relevant for the quality of LAAL's learned reward estimates, it does not affect the direct evaluation of realized deployment outcomes in $\S$\ref{sec:emp_performance}. If the assumption fails, the pooled estimates used by LAAL may be noisier or biased for the LAAL cohort, potentially reducing the quality of the policy's recommendations. However, outcomes for the LAAL arm and the static arms are observed directly in their respective randomized cohorts during deployment. Thus, the deployment results remain informative about how the implemented policy performed in production.

\paragraph{Illustrative example.}
We now present a simple example using a single query and two adjacent windows separated by one day to illustrate how LAAL works. Suppose LAAL uses only the last \(h=3\) days of data,  $\lambda = 0.01$, and the tie-breaking regularizer $\eta = 0.001$. Then, for query $q$ at time $t$, the window is \(W_t(q)\), and LAAL computes the estimated reward over this window as:
\[
\widehat r_i(q,t)=0.01\,\widehat\mu_{R,i}(q,t)+\widehat\mu_{C,i}(q,t)-0.001\,i.
\]
For illustration, we only consider a setting with three ad loads \(i\in\{2,4,6\}\) and two consecutive windows, as shown in Table \ref{tab:laal_two_windows}. LAAL chooses \(a_t(q)=6\) in Table~\ref{tab:laal_window_a}, when the brand is active, and the expected revenue from a higher ad load outweighs the lower conversion rate. In contrast,  \(a_t(q)=2\) in Table~\ref{tab:laal_window_b}. Here, the brand has become inactive, and the expected revenue from a higher ad load does not compensate for the lower conversion. See Figure~\ref{fig:laal_sliding_window_example} for a pictorial depiction.

% In preamble:
% \usepackage{subcaption}

\begin{table}[htp!]
\centering
\caption{LAAL estimates in two consecutive windows.}
\label{tab:laal_two_windows}

\begin{subtable}[t]{0.49\linewidth}
\centering
\caption{Window A (brand-active period)}
\label{tab:laal_window_a}
\[
\begin{array}{c|c|c|c}
i & \widehat\mu_{C,i} & \widehat\mu_{R,i} & \widehat r_i \\
\hline
2 & 0.50 & 36 & 0.50 + 0.36 - 0.002 = 0.858 \\
4 & 0.38 & 60 & 0.38 + 0.60 - 0.004 = 0.976 \\
6 & 0.27 & 84 & 0.27 + 0.84 - 0.006 = 1.104
\end{array}
\]
\end{subtable}
\hfill
\begin{subtable}[t]{0.49\linewidth}
\centering
\caption{Window B (one day later; window shifted, brand exits)}
\label{tab:laal_window_b}
\[
\begin{array}{c|c|c|c}
i & \widehat\mu_{C,i} & \widehat\mu_{R,i} & \widehat r_i \\
\hline
2 & 0.48 & 32 & 0.48 + 0.32 - 0.002 = 0.798 \\
4 & 0.33 & 40 & 0.33 + 0.40 - 0.004 = 0.726 \\
6 & 0.18 & 36 & 0.18 + 0.36 - 0.006 = 0.534
\end{array}
\]
\end{subtable}
\end{table}

This illustrates the sliding-window mechanism: as old observations leave and new ones enter the 3-day window, the estimated reward ranking changes, and LAAL adapts the ad load for the same query over time. In sum, LAAL maximizes estimated contemporaneous reward from local evidence, while parallel static cohorts provide continual exploration by maintaining exposure to every ad-load level.

\begin{algorithm}[h]
\caption{e-LAAL Algorithm: Locally Adaptive Ad Load (LAAL) with Parallel Static Arms}
\label{alg:slot_selection}
\begin{algorithmic}[1]
\Require each search \(s\) arrives with context \((q_s,t_s)\), and arm assignment $e_s$ inherited from user assignment; \(e_s \in  \{C,T_2,\dots,T_6, L\}\), where \(L\) denotes the LAAL arm 
\Require window length \(h\) (days), scalarization \(\lambda \geq 0\), smoothing \(\delta>0\), minimum window volume \(m\), regularization \(\eta\ge 0\), default ad load \(i_{\mathrm{def}}\)

\Require action set \(\mathcal{A}=\{1,2,3,4,5,6\}\)
\For{each search request \(s\)}
  \If{\(e_s \in \{C,T_2,\dots,T_6\}\)} \Comment{Static cohorts: fixed ad load}
    \State \(a_s \gets 1\cdot\mathbf{1}\{e_s=C\} + \sum_{j=2}^6 j\,\mathbf{1}\{e_s=T_j\}\)
  \Else \Comment{\(e_s=L\): LAAL cohort}
    \State \(W \gets \{\,s': q_{s'}=q_s,\ t_s-h \le t_{s'} < t_s\,\}\) \Comment{Local window for query \(q_s\), using all cohorts}
    \For{\(i\in\mathcal{A}\)}
      \State \(n_i \gets \sum_{s'\in W}\mathbf{1}\{a_{s'}=i\}\)
      \State \(C_i \gets \sum_{s'\in W}\mathbf{1}\{a_{s'}=i\}\,\mathrm{Conversion}_{s'}\)
      \State \(R_i \gets \sum_{s'\in W}\mathbf{1}\{a_{s'}=i\}\,\mathrm{Revenue}_{s'}\)
      \State \(\widehat{\mu}_{C,i} \gets \dfrac{C_i}{n_i+\delta}\)
      \State \(\widehat{\mu}_{R,i} \gets \dfrac{R_i}{n_i+\delta}\)
      \State \(\widehat{r}_i \gets \lambda\,\widehat{\mu}_{R,i} + \widehat{\mu}_{C,i} - \eta i\)
    \EndFor
    \If{\(\sum_{i\in\mathcal{A}} n_i < m\)}
      \State \(a_s \gets i_{\mathrm{def}}\) \Comment{Fallback for sparse windows}
    \Else
      \State \(a_s \gets \arg\max_{i\in\mathcal{A}} \widehat{r}_i\)
    \EndIf
  \EndIf
  \State Deploy \(a_s\) sponsored slots; observe \((\mathrm{Conversion}_s,\mathrm{Revenue}_s)\)
\EndFor
\end{algorithmic}
\end{algorithm}

% Old Version of Box-Windows
% \begin{figure}[htp!]
% \centering
% \begin{tikzpicture}[font=\footnotesize]
% % Timeline
% \draw[thick] (-0.4,0) -- (8.2,0);
% \foreach \x/\lab in {0/{t-4},1.3/{t-3},2.6/{t-2},3.9/{t-1},5.2/{t},6.5/{t+1},7.8/{t+2}}{
%   \fill (\x,0) circle (1.6pt);
%   \node[below] at (\x,-0.02) {\lab};
% }

% % Window A and B
% \draw[rounded corners, very thick, blue] (1.0,-0.32) rectangle (5.5,0.32);
% \node[blue] at (3.25,0.7) {Window A};

% \draw[rounded corners, very thick, orange!85!black] (2.3,-0.52) rectangle (6.8,0.52);
% \node[orange!85!black] at (4.55,-0.9) {Window B (shifted)};

% % Score boxes
% \node[draw,rounded corners,fill=blue!6,align=left,anchor=west] (A) at (8.8,0.8) {
% \textbf{Window A scores}\\
% $r_2=0.858,\ r_4=0.976,\ r_6=1.104$\\
% $\Rightarrow\ a_t(q)=6$
% };

% \node[draw,rounded corners,fill=orange!8,align=left,anchor=west] (B) at (8.8,-1.0) {
% \textbf{Window B scores}\\
% $r_2=0.798,\ r_4=0.726,\ r_6=0.534$\\
% $\Rightarrow\ a_t(q)=2$
% };

% \draw[->,thick] (A.south west) .. controls (7.9,0.2) and (7.9,-0.4) .. (B.north west);
% \node at (7.8,-0.15) {\scriptsize window slides};
% \end{tikzpicture}
% \caption{Illustration of LAAL's sliding-window adaptation. As the 3-day window shifts, estimated rewards change and the selected ad load for the same query can switch.}
% \label{fig:laal_sliding_window_example}
% \end{figure}

\begin{figure}[htp!]
\centering
\begin{tikzpicture}[font=\footnotesize]
% Timeline
\draw[thick] (-0.4,0) -- (8.2,0);
\foreach \x/\lab in {0/{t-4},1.3/{t-3},2.6/{t-2},3.9/{t-1},5.2/{t},6.5/{t+1},7.8/{t+2}}{
  \fill (\x,0) circle (1.6pt);
  \node[below] at (\x,-0.02) {\lab};
}

% Windows (axis y=0 bisects both boxes)
\draw[rounded corners, very thick, blue] (0.85,-0.42) rectangle (5.65,0.42);
\node[blue] at (3.25,0.86) {Window A};

\draw[rounded corners, very thick, orange!85!black] (2.15,-0.62) rectangle (6.95,0.62);
\node[orange!85!black] at (4.55,-0.94) {Window B (shifted)};

% Score boxes (shifted slightly left to stay comfortably within the page frame)
\node[draw,rounded corners,fill=blue!6,align=left,anchor=west] (A) at (8.45,0.85) {
\textbf{Window A scores}\\
$\widehat r_2=0.858,\ \widehat r_4=0.976,\ \widehat r_6=1.104$\\
$\Rightarrow\ a_t(q)=6$
};

\node[draw,rounded corners,fill=orange!8,align=left,anchor=west] (B) at (8.45,-0.95) {
\textbf{Window B scores}\\
$\widehat r_2=0.798,\ \widehat r_4=0.726,\ \widehat r_6=0.534$\\
$\Rightarrow\ a_t(q)=2$
};

% arrow between boxes
\draw[->,thick] (A.south) -- (B.north)
  node[midway,right=2pt] {\scriptsize window slides};

\end{tikzpicture}
\caption{Illustration of LAAL's sliding-window adaptation. As the 3-day window shifts, estimated rewards change, and the selected ad load for the same query can switch.}
\label{fig:laal_sliding_window_example}
\end{figure}

\subsection{Why e-LAAL Fits This Setting}

e-LAAL addresses the key deployment challenge in our setting: how to combine adaptive exploitation with exploration that remains operationally stable and provides meaningful fixed-policy counterfactuals. Unlike standard per-search exploration schemes in contextual bandits, e-LAAL separates exploration and exploitation through persistent user-level cohorts. This distinction is important because ad load may affect not only immediate revenue and conversion, but also subsequent search behavior and engagement. Evaluating fixed ad-load policies therefore requires users to remain under persistent regimes such as ``always show \(i\) ads,'' which the static cohorts preserve.

This architecture delivers four advantages for our deployment environment.

\squishlist

\item \textbf{User-level fixed-policy counterfactuals.}
Because ad load may affect future usage, fixed ad-load benchmarks must be evaluated using persistent user-level regimes rather than isolated per-search randomization. The static arms \(\{C,T_2,\ldots,T_6\}\) maintain users under fixed ad-load policies, allowing us to estimate outcomes for policies of the form ``always show \(i\) ads,'' including their induced effects on search behavior and engagement. A standard \(\epsilon\)-greedy policy mixes ad loads within a single adaptive arm and therefore does not provide these fixed-policy counterfactuals.

\item \textbf{Stable exploration support under changing context distributions.}
Exploration is supplied ex ante through static arms with fixed traffic shares \(w_e\). Thus, every ad-load level continues to receive exposure even if LAAL concentrates adaptive traffic on a subset of actions. This is particularly important when ad-load decisions may influence future context arrivals: realized query-level support can differ from the platform's intended traffic allocation, but the static cohorts provide a persistent exploration floor.

\item \textbf{Model-free adaptation to a drifting reward frontier.}
LAAL estimates query-level rewards directly from recent outcomes rather than relying on a prespecified reward model. This reduces dependence on observing and encoding all drivers of rewards, such as advertiser entry and exit, budget depletion, bid changes, relevance shifts, seasonality, and demand changes. Through sliding-window updates, the policy adapts as these factors shift the revenue--conversion frontier over time.

\item \textbf{Production-friendly implementation.}
e-LAAL operates as a lightweight decision layer on top of the platform's existing prediction, auction, and ranking systems. It does not require rebuilding the underlying advertising infrastructure or deploying a complex end-to-end learning system. Instead, it uses recent query-level outcomes and a small set of persistent exploration cohorts to adapt ad load at scale.

\squishend

Together, these features make e-LAAL suitable for sponsored-search environments where reward mappings evolve over time and credible fixed-policy counterfactuals are essential for deployment evaluation.

\subsection{Theoretical Guarantee for e-LAAL}

In this section, we formalize the performance of e-LAAL in a non-stationary environment. The benchmark is a dynamic query-level oracle that can choose the best ad load for a fixed query as its reward frontier changes over time. The key complication, relative to standard non-stationary bandit analyses, is that exploration in e-LAAL is supplied through persistent user-level static cohorts rather than direct per-search randomization. As a result, the realized query-level support for each ad-load arm may differ from the platform's designed traffic shares. The theorem below reports the optimized-rate implication under regular query arrivals; Web Appendix $\S$\ref{appsec:elaal_regret_proof} provides the corresponding finite-sample bound for arbitrary exploration mass and window length. For simplicity, regret is stated in normalized reward units, with raw-unit regret obtained by rescaling by the reward bound.

%In this section, we state the main dynamic regret guarantee for e-LAAL. The proof accounts for the fact that exploration is assigned through static user cohorts, while the realized query-level traffic to different ad-load arms may differ from the assigned cohort shares.

\begin{thm}[Dynamic regret of e-LAAL]
\label{thm:elaal_regret_main}
Fix query \(q\). Assume the number of realized searches for query \(q\) during \([0, T]\) satisfies \(N=\Theta(T)\), and the query arrivals are regular enough that each \(h\)-time
window contains on the order of \(Nh/T\) searches. Under the assumptions stated in Web Appendix~\(\S\)\ref{appsec:elaal_regret_proof}, after normalizing rewards and optimizing over exploration mass and window length, the dynamic regret of Algorithm~\ref{alg:slot_selection} is bounded, with high probability, by
\[
R_q^{\mathrm{dyn}}(T)
=
\tilde O\!\left(
\bar c^{1/4}\underline c^{-1/4}
T^{3/4}
\big(K\,V_q(T)\big)^{1/4}
\right).
\]
Here \(V_q(T)\) is the variation budget, $K$ is the number of arms, \(\underline c\) is the minimum-support coefficient, and \(\bar c\) is the total-static-traffic upper-bound coefficient.

\end{thm}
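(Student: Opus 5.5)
We bound the regret of e-LAAL by splitting the horizon \([0,T]\) into the searches served by the static arms and those served by LAAL, and then bounding the per-search loss of LAAL's greedy choice by twice the worst estimation error of the sliding-window means. Let \(r_i(q,t)\in[0,1]\) be the normalized mean reward of ad load \(i\) at time \(t\), and let \(i^\star(t)\in\arg\max_i r_i(q,t)\). Write
\[
V_q(T)=\sum_{t}\max_i |r_i(q,t+1)-r_i(q,t)|
\]
for the variation budget. Dynamic regret then decomposes as
\[
R_q^{\mathrm{dyn}}(T)=\sum_{s\in\text{static}}\big(r_{i^\star}-r_{a_s}\big)+\sum_{s\in\text{LAAL}}\big(r_{i^\star}-r_{a_s}\big).
\]
For the first term, we use that the realized share of static searches for \(q\) is at most \(\bar c\,\epsilon\). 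This is where the total-static-traffic coefficient enters: the designed shares \(w_e\) need not equal realized query-level shares. The static term is therefore at most \(\bar c\,\epsilon N\).

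For the LAAL term, greedy selection gives
\[
r_{i^\star}(t)-r_{a_s}(t)\le 2\max_i|\widehat r_i(q,t)-r_i(q,t)|+\eta K,
\]
with \(\eta\) taken negligible. We split the estimation error into a bias part and a noise part. For the bias, we compare \(\widehat\mu\) to the window average of the true means. Since every observation in \(W_t(q)\) is within time \(h\) of \(t\), the drift part is at most the local variation \(V_{[t-h,t]}\). Summing over searches in which each window overlaps \(\asymp Nh/T\) others yields a bias total of order \((Nh/T)\cdot V_q(T)\), which is \(O(hV_q(T))\) when \(N=\Theta(T)\). The \(\delta\) shrinkage adds a lower-order term.

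For the noise part, we need a lower bound on \(n_i(q,t)\) for every arm \(i\). This is the main obstacle, because exploration comes from persistent user cohorts rather than per-search coin flips. I would assume, as in the appendix, a minimum-support condition: conditional on the past, each static arm's realized share among \(q\)-searches is at least \(\underline c\,\epsilon/K\). The regular-arrival assumption puts \(\asymp Nh/T\asymp h\) searches in each window. A Freedman/Azuma argument on the martingale of arm indicators then gives \(n_i(q,t)\gtrsim \underline c\,\epsilon h/K\) for all windows with high probability, via a union bound over the \(O(T)\) window endpoints and \(K\) arms, which is where the log factors arise. Because the arm choices and the cohort membership are adapted, I would apply a self-normalized concentration bound to the sum of noise increments within each window, not an i.i.d. Hoeffding bound. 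The per-search noise error is then \(\tilde O\big(\sqrt{K/(\underline c\,\epsilon h)}\big)\).

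Combining the three pieces gives, with high probability,
\[
R_q^{\mathrm{dyn}}(T)\lesssim \bar c\,\epsilon T+hV_q(T)+T\sqrt{\frac{K}{\underline c\,\epsilon h}}\;(\times\text{logs}).
\]
This is the finite-sample bound for arbitrary \(\epsilon\) and \(h\). We first optimize over \(h\), balancing \(hV\) against \(T\sqrt{K/(\underline c\epsilon h)}\). That gives
\[
h^\star\asymp\big(T^2K/(\underline c\,\epsilon V^2)\big)^{1/3},
\]
leaving a term of order \((T^2KV/(\underline c\,\epsilon))^{1/3}\). We then balance this term against \(\bar c\,\epsilon T\). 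This yields \(\epsilon^{4/3}\asymp (KV/\underline c)^{1/3}T^{-1/3}\bar c^{-1}\), and the total is
\[
\epsilon T\,\bar c\asymp \bar c^{1/4}\underline c^{-1/4}T^{3/4}(KV_q(T))^{1/4},
\]
matching the theorem up to logs. I would confirm this exponent arithmetic and check that the optimizers satisfy \(h^\star\le T\) and \(\epsilon^\star\le1\). If they do not, the bound is trivial in that regime.
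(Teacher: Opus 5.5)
Your proposal is correct and follows essentially the same route as the paper's appendix proof. The shared steps are:
\begin{itemize}
\item the static/greedy-LAAL split, with static regret bounded by $\bar c\,\epsilon N$;
\item the greedy bound $2\max_i|\widehat\mu_{i,j}-\mu_{i,j}|+\eta(K-1)$, decomposed into noise, drift (each variation increment counted in $\lesssim Nh/T$ windows) and smoothing;
\item a per-arm support floor $n_{\min}\asymp \underline c\,\epsilon Nh/(KT)$ supplied by the static cohorts, with a union bound over arms and windows;
\item optimization over $h$ and then $\epsilon$ under equal shares $w_i=\epsilon/K$, exactly as in Corollary E.2.
\end{itemize}

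The differences are minor. The paper assumes independent cohort draws and uses Chernoff and Hoeffding--Azuma bounds; your Freedman and self-normalized martingale version is, if anything, more careful about the adaptively chosen sample sets. The paper bounds the expected rather than the realized static share, so your assertion that the realized share is at most $\bar c\,\epsilon$ needs its concentration step, which costs an extra, lower-order $\sqrt{N\log(3/\xi)}$. Finally, the paper likewise discards the warm-up, fallback, and $\delta$ terms as lower order.
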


\noindent\textit{Proof.} See Web Appendix~\(\S\)\ref{appsec:elaal_regret_proof}.

The query-level guarantee reflects the local nature of LAAL: each query evolves according to its own reward frontier and support process. Platform-level performance aggregates these query-level decisions weighted by search volume. The coefficients \(\underline c\) and \(\bar c\) account for the gap between the design exploration budget and the realized query-level exploration traffic. The coefficient \(\underline c\) lower bounds the observed traffic on each static arm by its design weight. The coefficient \(\bar c\) upper bounds the total realized traffic from static cohorts by their total exploration weight. Thus, these parameters capture deviations between assigned user-level cohort shares and realized query-level static traffic. If there is no such deviation and the realized exploration traffic exactly matches the design exploration budget, then \(\underline c=\bar c=1\).

The regret bound combines the standard non-stationary bandit tradeoff between estimation noise and temporal drift with the additional cost and support role of static exploration cohorts. The proof follows the standard logic of non-stationary bandit analysis, where sliding-window estimators balance estimation noise against temporal drift \citep{TrovoEtAl2020SWTS,CavenaghiEtAl2021NonStationaryMAB,CheungSimchiLeviZhu2019NonStationarity}. The new feature in our setting is the exploration architecture. In canonical bandit models, exploration probabilities are typically controlled directly at the decision level. In e-LAAL, exploration is supplied by persistent user-level static cohorts, but regret is evaluated on the realized search stream for a fixed query. Our analysis therefore accounts for the fact that realized query-level support can differ from the assigned cohort shares. The support coefficients in the theorem make this distinction explicit and adapt standard non-stationary regret arguments to the parallel-static-cohort implementation used in our deployment.

\subsection{Implementation Details}
\label{ssec:implementation_details}
We now discuss the three main parameters that are used in the deployment below.

\squishlist
\item \textbf{Scalarization parameter in the objective function (\(\lambda\)): }
In principle, \(\lambda\) should reflect the platform’s monetary value of an incremental conversion (e.g., downstream engagement, retention, or LTV). Because these downstream outcomes are not observed in our data, we use \(\lambda\) as a scale-normalization parameter to convert short-run ad revenue into conversion-equivalent units, so that the two components in Equation~\eqref{eq:reward_function} are comparable.  
We calibrate \(\lambda\) from the first experiment: the uniform policies \(\{C,T_2,\dots,T_6\}\) trace an empirical revenue--conversion frontier; over this feasible menu, the relationship is approximately linear, implying a near-constant local exchange rate between conversion losses and revenue gains. We therefore set
\[
\lambda \approx -\frac{\Delta \mathrm{Conversion}}{\Delta \mathrm{Revenue}},
\]
which yields \(\lambda \approx 0.01\) in our data (Table~\ref{tab:user_ols_condition}). This calibration is not a claim about long-run welfare weights; rather it anchors LAAL to the platform’s revealed short-run rate of substitution across feasible uniform ad-load policies.

\item \textbf{Exploration budget (\(\epsilon\)):}
As discussed earlier, exploration is fixed ex ante through parallel static arms as defined in Equation \eqref{eq:exploration_budget}.
In our deployment, \(\epsilon=0.10\): 10\% of traffic is permanently allocated to static arms (5\% to \(C\), 1\% to each \(T_2,\dots,T_6\)), while 90\% is assigned to the adaptive LAAL arm.

\item \textbf{Algorithmic hyperparameters (\(h,m,\eta,\delta, i_{\mathrm{def}}\)):}
We set \(h=3\) days, \(m=100\), \(\eta=10^{-3}\), \(\delta=10^{-6}\), and fallback load \(i_{\mathrm{def}}=4\). The reasons for these choices are discussed briefly below:
\begin{itemize}
\item \(h=3\) days balances responsiveness and variance. Shorter windows react faster to market-state changes (e.g., advertiser entry/exit) but are noisier; longer windows are more stable but slower to adapt. A 3-day horizon provided sufficient sample support while preserving timely adaptation.
\item \(m=100\) enforces a minimum evidence threshold before greedy selection. This guards against unstable action flips on sparse queries where window estimates are high-variance.
\item \(\eta=10^{-3}\) is a small tie-break regularizer that favors lower ad loads only when estimated rewards are nearly equal.
\item \(\delta=10^{-6}\) is a numerical smoothing constant to avoid division instability in very low-count cells; it is intentionally negligible relative to observed counts and does not affect estimates when support is adequate.
\item $i_{\mathrm{def}}=4$ is the default ad load used when the query is sparse in the recent window. Using the historical data on static-cohorts obtained from the last month of the first experiment, we set $i_{\mathrm{def}}$ to the uniform ad load that maximizes expected reward on a low-volume search query. The definition of low volume query follows the same definition as in deployment of LAAL. See Web Appendix $\S$\ref{apdx:low_volume_queries} for the detailed analysis.
\end{itemize}
%\hy{I am a bit confused; there was no mention of 1000 searches per month-- I thought it was 100 in the sliding window. Also, is 4 chosen based on historical data? Can we forward reference the appropriate section} \mr{Yes it is based on history. Queries with 100 searches on three days are approximately the same set as queries with 1000 searches per month and both give the same result on i\_def. I put the consistent result in appendix} 

%Overall, the choice of hyper-parameters balance robustness vs. accuracy in a real deployment setting.\footnote{We note that these hyperparameters are tunable and practitioners may choose different parameters in other settings.}
\squishend

\section{Empirical Performance of the Adaptive Ad Load Policy}
\label{sec:emp_performance}

The e-LAAL architecture was deployed at the platform level, and we now report the results from the deployment for a 22-day steady-state window from December 8 to 29, 2023.\footnote{The adaptive LAAL arm was launched soon after the first experiment ended (November 17, 2023), but the system required a short ramp-up period before reaching steady state.} During this period, user assignment is fixed: \(90\%\) of traffic is routed to the adaptive LAAL arm and \(10\%\) remains in the static exploration arms \(\{C,T_2,\dots,T_6\}\) (as shown in Table~\ref{tab:exp-design}).

In the rest of this section, we address three empirical questions on the performance of our adaptive ad load policy in deployment. (1) Does e-LAAL improve the revenue--conversion trade-off relative to static ad-load policies?  (2) Does LAAL adapt ad load across query segments in economically sensible ways?  
(3) Does LAAL adapt over time when market conditions shift (brand entry/exit)?

\subsection{Main Results}
\label{ssec:main_result}

\paragraph{Summary statistics from platform-level deployment.}

\begin{table}[htp!]
\centering
\small
\setlength{\tabcolsep}{3.5pt} % Tightens column spacing
\caption{Summary Statistics of the Deployment}
\label{tab:summary_user_experiment_deploy}
\begin{tabular}{lcccccccc}
\toprule
 & \textbf{$C$} & \textbf{$T_{2}$} & \textbf{$T_{3}$} & \textbf{$T_{4}$} & \textbf{$T_{5}$} & \textbf{$T_{6}$} & \textbf{LAAL} & \textbf{e-LAAL} \\
\midrule
User Count & 1,121,219 & 223,690 & 222,024 & 222,586 & 221,871 & 221,961 & 20,097,353 & 22,330,704 \\
Searches   & 3,859,143 & 776,181 & 776,749 & 779,883 & 776,194 & 781,223 & 69,851,684 & 77,601,057 \\
\midrule
Revenue/User\textsuperscript{a} & 25.80 & 29.48 & 31.17 & 33.11 & 36.11 & 36.68 & 36.13 & 35.47 \\
Conversion/User                  & 2.152 & 2.104 & 2.101 & 2.091 & 2.064 & 2.054 & 2.127 & 2.126 \\
\midrule
Daily Eng.\ (\%) & 8.45 & 8.44 & 8.38 & 8.34 & 8.37 & 8.35 & 8.43 & 8.43 \\
Searches/User & 3.44 & 3.47 & 3.50 & 3.50 & 3.50 & 3.52 & 3.48 & 3.47 \\
\bottomrule
\end{tabular}
\par\medskip
\footnotesize{\textit{Note:} \textsuperscript{a}Scaled by an undisclosed multiplier.}
\end{table}

Table~\ref{tab:summary_user_experiment_deploy} provides four immediate facts about the deployment. First, the scale is large: e-LAAL serves \(22.33\) million users and \(77.60\) million searches in 22 days, with the adaptive LAAL arm alone covering \(20.10\) million users and \(69.85\) million searches (\(\approx 90\%\) of traffic). Second, the static arms remain large in absolute size (\(\approx 0.22\)M--\(1.12\)M users each), so exploration and benchmarking are statistically meaningful throughout deployment.

Third, the table reproduces the core economics from the first experiment (Table~\ref{tab:summary_user_experiment}): as static ad load increases from \(C\) to \(T_6\), revenue per user rises monotonically (25.80 \(\rightarrow\) 36.68), while conversion per user falls (2.152 \(\rightarrow\) 2.054), confirming a stable revenue--conversion trade-off. Fourth, LAAL improves on this frontier: revenue per user is 36.13 (essentially \(T_5\)-level, 36.11), but conversion per user is 2.127 (higher than \(T_5\): 2.064, and \(T_6\): 2.054, and close to \(C\): 2.152). Note that the actual revenue and conversion numbers differ from those in Table~\ref{tab:summary_user_experiment} because the first experiment lasted 66 days, while the deployment statistics come from a 22-day period; as such, conversions and revenue are proportionally smaller here.\footnote{$\frac{Revenue/User}{Search/User}$ is a more comparable metric that controls for the length of the experiment and this metric is similar across both experiments.}

The system-level e-LAAL outcome remains close to LAAL (35.47 revenue/user; 2.126 conversion/user), indicating that the fixed 10\% exploration layer preserves learning and benchmarking with limited performance loss. Finally, daily engagement does not show notable differences across cohorts. LAAL has slightly lower engagement than $C$ and $T_2$, but these differences are not statistically significant; Web Appendix $\S$\ref{apdx:deployment_engagement} reports the corresponding tests.  Searches per user are also tightly clustered across cohorts (3.44--3.52), suggesting that outcome differences are not driven by large differences in search intensity across arms. Overall, Table~\ref{tab:summary_user_experiment_deploy} shows that at production scale, e-LAAL preserves the monetization gains of high ad-load policies while recovering a substantial share of the conversion losses they typically induce.

\paragraph{Evaluation objects and normalization.}
The summary statistics above are descriptive. To make policy comparisons valid under unequal cohort traffic shares, we use inverse-propensity normalization \citep{horvitz1952generalization, yoganarasimhan2023design}. Let \(w_e\) be the assignment probability to cohort \(e\). For \(Y\in\{\mathrm{Revenue},\mathrm{Conversion}\}\), define
\begin{equation}
\label{eq:cohort_ipw_total}
\widehat{Y}_e \;=\; \frac{1}{w_e}\sum_{s:\,e_s=e} Y_s.
\end{equation}
For static cohorts \(e\in\{C,T_2,\ldots,T_6\}\), \(\widehat{Y}_e\) is an unbiased estimate of the total outcome if the full population were assigned to the corresponding fixed policy (``always show \(i\) ads'').

Because LAAL is trained using data from all cohorts, the policy-relevant system estimand is the deployed mixture:
\begin{equation}
\label{eq:elaal_mixture}
\widehat{Y}_{\text{e-LAAL}}
\;=\;
\sum_{e\in\{C,T_2,\ldots,T_6,L\}} w_e\,\widehat{Y}_e
\;=\;
\sum_{s} Y_s.
\end{equation}
Thus, e-LAAL is the correct ``policy + exploration'' benchmark for deployment comparisons.

\begin{figure}[htp!]
  \centering
  \begin{subfigure}[t]{0.45\textwidth}
    \centering
    \includegraphics[width=\linewidth]{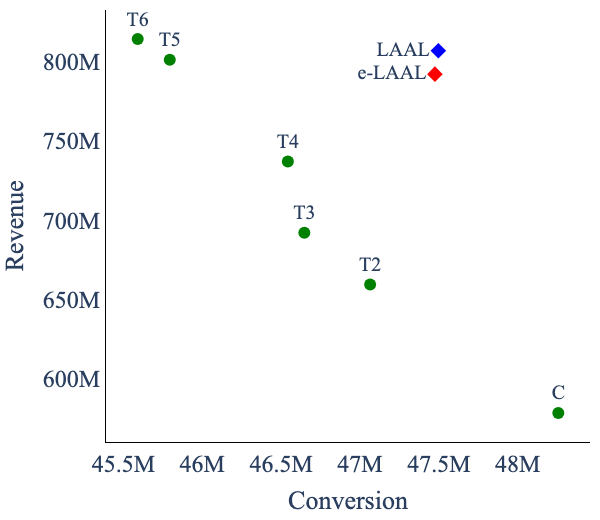}
    \caption{Revenue--conversion across fixed ad-load cohorts, LAAL, and e-LAAL.}
    \label{fig:revenue_conversion_tradeoff_greedy}
  \end{subfigure}
  \hfill
  \begin{subfigure}[t]{0.45\textwidth}
    \centering
    \includegraphics[width=\linewidth]{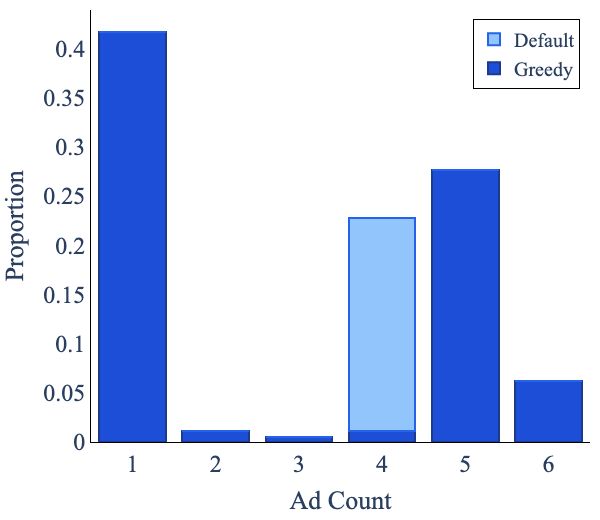}
    \caption{Distribution of the number of ads recommended by LAAL. }
    \label{fig:greedy_ad_count_distribution}
  \end{subfigure}
  \caption{Performance of LAAL (and system-level e-LAAL).}
  \label{fig:greedy_main_result}
\end{figure}

\paragraph{Frontier shift under deployment.}
Figure~\ref{fig:revenue_conversion_tradeoff_greedy} compares static cohorts, the LAAL cohort, and e-LAAL in the empirical revenue--conversion space. Two patterns emerge. First, LAAL/e-LAAL improves the empirical revenue--conversion trade-off relative to uniform static policies: it reaches revenue levels close to high-ad-load static rules (e.g., always show five ads, i.e., \(T_5\)) while maintaining conversions closer to low-ad-load rules (e.g., always show two ads). This is an efficiency gain, not movement along a fixed frontier. Second, LAAL and e-LAAL are numerically close, as expected given the \(90/10\) LAAL/exploration split, so the exploration cost is modest in practice.

\paragraph{How the policy behaves in practice.}
Figure~\ref{fig:greedy_ad_count_distribution} shows the distribution of ad counts selected by LAAL. On average, LAAL recommends 3.1 ads per search. It uses sliding-window estimates to choose greedily in 79\% of searches (those with at least \(m=100\) observations in the recent window) and falls back to \(i_{\mathrm{def}}=4\) in the remaining 21\% (sparser queries). The recommendation distribution is bimodal: LAAL frequently chooses either low ad loads (conversion-oriented) or high ad loads (revenue-oriented), with fewer intermediate choices. This is consistent with the discrete nature of the decision problem: interior ad loads are optimal only under a narrow knife-edge in incremental revenue versus conversion loss.

\subsection{Heterogeneity in Policy Performance}
\label{ssec:het_performance}

\paragraph{Performance across query segments.}
We next evaluate LAAL across the query segments defined in $\S$\ref{ssec:heterogeneity} (bucketed by control-group advertising CVR). Figure~\ref{fig:rev_conv_tradeoff_query_buckets} plots the empirical revenue--conversion trade-off within each segment. LAAL improves upon fixed ad-load (or static) policies in all segments, with the most pronounced gains in the high-CVR segment, where incremental ad slots are most likely to translate into additional paid installs and revenue.

\begin{figure}[htp!]
    \centering
    \begin{subfigure}[t]{0.3\textwidth}
        \centering
        \includegraphics[width=\linewidth]{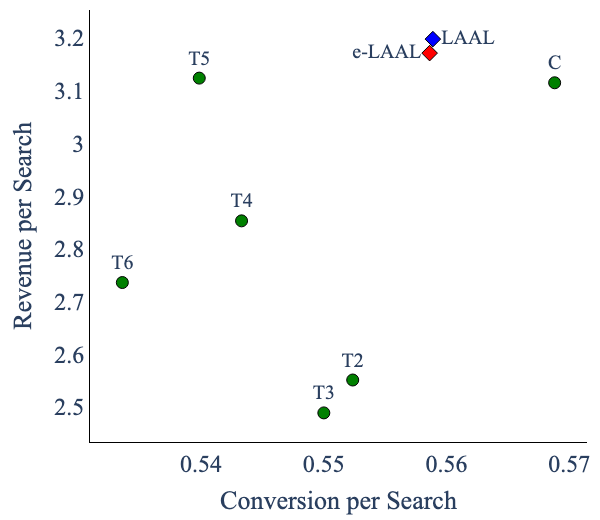}
        \caption{Low CVR segment}
        \label{fig:rev_conv_low}
    \end{subfigure}
    \hfill
    \begin{subfigure}[t]{0.3\textwidth}
        \centering
        \includegraphics[width=\linewidth]{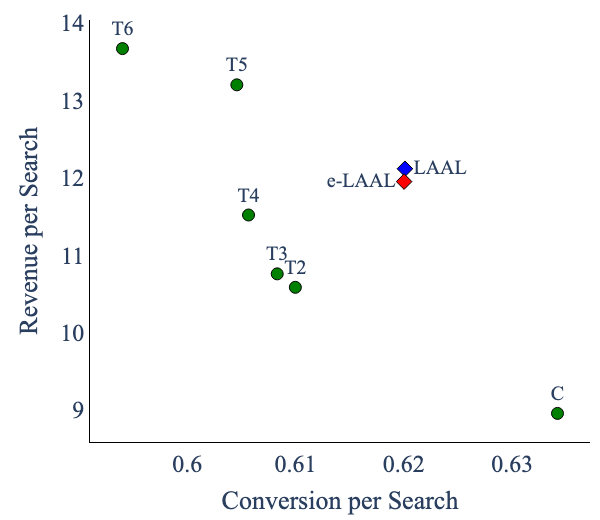}
        \caption{Mid CVR segment}
        \label{fig:rev_conv_mid}
    \end{subfigure}
    \hfill
    \begin{subfigure}[t]{0.3\textwidth}
        \centering
        \includegraphics[width=\linewidth]{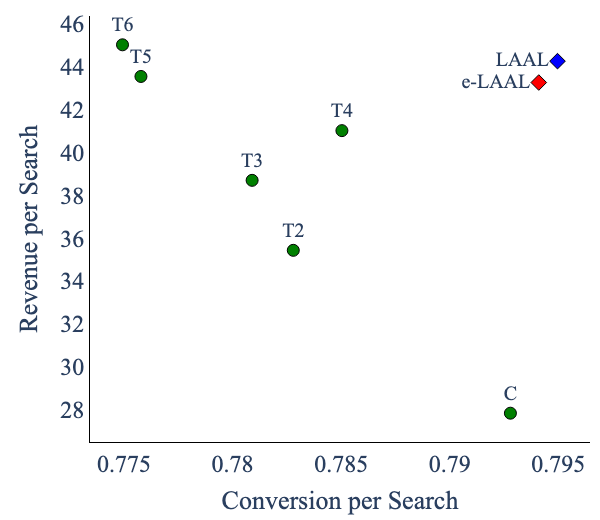}
        \caption{High CVR segment}
        \label{fig:rev_conv_high}
    \end{subfigure}
    \caption{Revenue--conversion trade-offs across query segments.}
    \label{fig:rev_conv_tradeoff_query_buckets}
\end{figure}

\begin{figure}[htp!]
    \centering
    \begin{subfigure}[t]{0.3\textwidth}
        \centering
        \includegraphics[width=\linewidth]{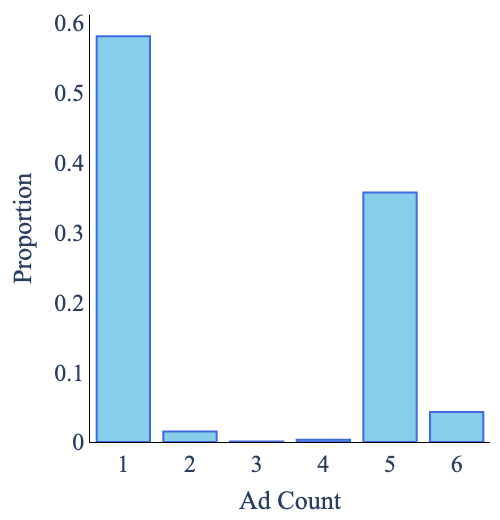}
        \caption{Low CVR segment}
        \label{fig:greedy_hist_low}
    \end{subfigure}
    \hfill
    \begin{subfigure}[t]{0.3\textwidth}
        \centering
        \includegraphics[width=\linewidth]{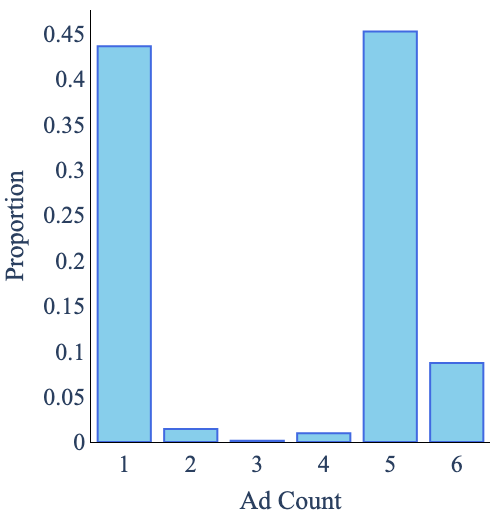}
        \caption{Mid CVR segment}
        \label{fig:greedy_hist_mid}
    \end{subfigure}
    \hfill
    \begin{subfigure}[t]{0.3\textwidth}
        \centering
        \includegraphics[width=\linewidth]{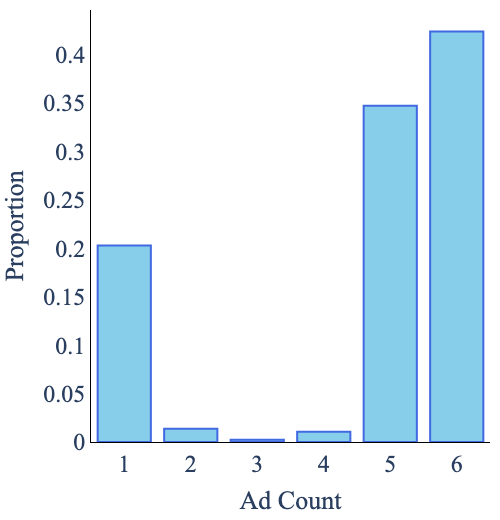}
        \caption{High CVR segment}
        \label{fig:greedy_hist_high}
    \end{subfigure}
    \caption{Distribution of ad loads recommended by LAAL across query segments.}
    \label{fig:histogram_query_buckets}
\end{figure}
Figure~\ref{fig:histogram_query_buckets} reports the distribution of LAAL’s recommended ad loads by segment. The policy adapts in the expected direction: it recommends fewer ads on low-CVR queries (where additional slots generate little revenue and primarily displace relevant organic content) and more ads on high-CVR queries (where additional slots generate revenue). This behavior mirrors the heterogeneity patterns in $\S$\ref{ssec:heterogeneity} and illustrates that LAAL operationalizes those insights using only recent outcome data, without hard-coding segment membership.

\paragraph{Within-query heterogeneity over time: brand entry and exit}

\begin{figure}[htp!]
    \centering
    \includegraphics[width=0.5\linewidth]{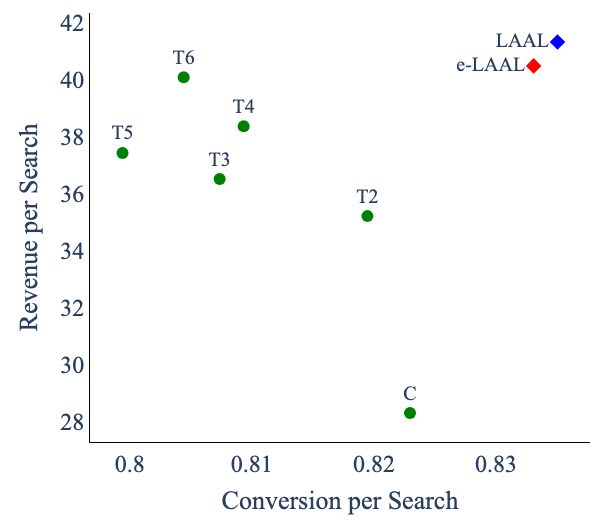}
    \caption{Performance of LAAL and e-LAAL on non-stationary queries (brand entry/exit)}
    \label{fig:LAAL_performance_non_stationary}
\end{figure}

Finally, we examine the performance of LAAL in non-stationary settings, i.e., queries with explicit time variation in advertiser composition. We focus on brand queries whose focal brand app advertises on some days but not others during the 22-day deployment window. Restricting attention to queries that experience both {\it Brand-On} and {\it Brand-Off} periods during deployment yields 315 such queries, which together account for 13\% of revenue, 13\% of conversions, and 11\% of searches in the control condition. We refer to these as \emph{non-stationary queries}. 
% \hy{Confirm with Mohammad that the definiton based on deployment period is correct} \mr{It is correct}

% $\mathcal{Q}_{\mathrm{NS}}$.

% For an outcome $Y\in\{\mathrm{Revenue},\mathrm{Conversion}\}$ and arm $e\in\{C,T_2,\ldots,T_6, \text{LAAL}, \text{e-LAAL}\}$
% , define the non-stationary-query mean outcome per search as
% \begin{equation}
% \label{eq:mean_ns_estimand}
% \mu^{\mathrm{NS}}_{Y,e}
% =
% \mathbb{E}\!\left[Y_s \,\middle|\, e_s=e,\ q_s\in\mathcal{Q}_{\mathrm{NS}}\right].
% \end{equation}

Figure~\ref{fig:LAAL_performance_non_stationary} reports the mean observed outcomes for non-stationary queries. We see that LAAL dominates static cohorts on both revenue and conversions, consistent with a key mechanism of LAAL: time-adaptive reallocation when the reward frontier shifts.

A direct test of adaptivity is whether LAAL recommends different ad loads for the same query depending on brand presence. For the non-stationary queries, we compare LAAL's recommended ad-load distributions during \emph{Brand-On} and \emph{Brand-Off} searches, using within-query reweighting so that queries with different Brand-On shares are comparable; details of the reweighting procedure are provided in Web Appendix \(\S\)\ref{appsec:brand_on_off_reweighting}.

Figure~\ref{fig:brand_on_off_hist} plots the resulting recommendation histograms. When the brand is absent, LAAL concentrates on lower ad loads; when the brand is present, the distribution shifts toward higher ad loads. This within-query reallocation is precisely the behavior a time-adaptive ad-load policy should exhibit in response to changing advertiser composition.

% \begin{figure}[htp!]
%     \centering
%     \begin{subfigure}[t]{0.35\linewidth}
%         \centering
%         \includegraphics[width=\linewidth]{Pics/greedy_conversion_line_brand.png}
%         \caption{Conversion per search across cohorts and LAAL.}
%     \end{subfigure}
%     \hfill
%     \begin{subfigure}[t]{0.35\linewidth}
%         \centering
%         \includegraphics[width=\linewidth]{Pics/greedy_revenue_line_brand.png}
%         \caption{Revenue per search across cohorts and LAAL.}
%     \end{subfigure}
%     \caption{Performance on non-stationary queries (brand entry/exit). LAAL dominates on both outcomes.}
%     \label{fig:brand_entrance_revenue_conversion}
% \end{figure}

\begin{figure}[htp!]
    \centering
    \begin{subfigure}[t]{0.45\textwidth}
        \centering
        \includegraphics[width=\linewidth]{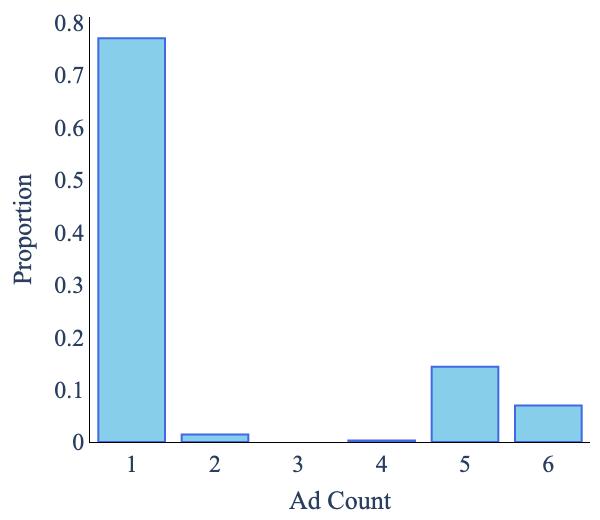}
        \caption{\emph{Brand-Off} state}
        \label{fig:brand_off_recs}
    \end{subfigure}
    \hfill
    \begin{subfigure}[t]{0.45\textwidth}
        \centering
        \includegraphics[width=\linewidth]{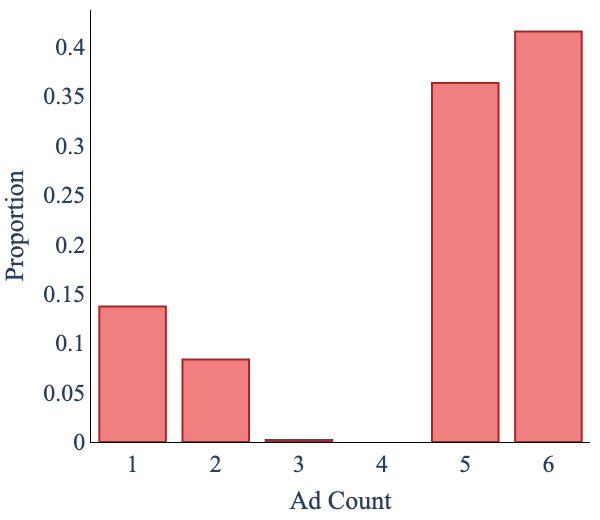}
        \caption{\emph{Brand-On} state}
        \label{fig:brand_on_recs}
    \end{subfigure}
    \caption{Distribution of LAAL ad-load recommendations under \emph{Brand-Off} and \emph{Brand-On} states for non-stationary queries. Histograms are reweighted within query to make Brand-On and Brand-Off states comparable across queries with different Brand-On shares.}
    \label{fig:brand_on_off_hist}
\end{figure}

\subsection{Managerial Implications}

Three implications follow from the deployment evidence.  
First, \emph{fixed exploration with adaptive exploitation} is implementable at scale: the platform can pre-commit a small exploration budget (\(10\%\) here) and still obtain substantial frontier gains through local adaptive exploitation.  
Second, \emph{query-level ad-load personalization matters}: the platform can improve monetization without proportionate conversion losses by reallocating ad load at the query level rather than committing to a single global ad count; this matters because monetization-conversion trade-offs vary sharply across query types.  
Third, \emph{time adaptivity is economically material}: when advertiser composition shifts (e.g., brand entry/exit), adaptive ad-load reallocation can dominate static rules on both objectives.

Taken together, the results indicate that e-LAAL is not only statistically and operationally feasible, but also managerially actionable: it provides a practical mechanism to improve monetization while preserving user outcomes in real large-scale non-stationary sponsored search platforms.

\section{Comparisons with Benchmark Policies}
\label{sec:comp_benchmark}

The previous section compared LAAL and the deployed e-LAAL system to the contemporaneous static cohorts \(\{C,T_2,\ldots,T_6\}\) using IPW-normalized outcomes from the deployment window. Those comparisons evaluate LAAL against the fixed ad-load regimes that were actually deployed alongside the adaptive arm. However, they benchmark LAAL only against the six individual ``always show \(i\) ads'' regimes; they do not ask whether LAAL outperforms static rules that a platform might have selected using performance data, such as a historically chosen uniform rule or a query-specific static mapping. In this section, we therefore provide a broader benchmark analysis against static policies that are still time-invariant once chosen, but are selected using either historical or deployment-period data.

We study two classes of static policies. The first class consists of \emph{uniform static policies}, which show the same number of ads for every query. The second class consists of \emph{query-dependent static policies}, which allow the ad load to vary by query but keep each query's assignment fixed over time. These benchmarks allow us to separate the value of query sensitivity from the value of time adaptivity.

For each policy class, we distinguish two timing regimes. A \emph{historical benchmark} is selected using the 66-day experiment window and is therefore feasible ex ante before the 22-day deployment. It answers: what static policy would the platform have chosen if it had used the earlier experiment to set a rule for deployment? A \emph{deployment oracle} is selected using the 22-day deployment window itself. It is not feasible ex ante; rather, it is an ex post benchmark that asks how LAAL performs relative to the best static rule in the realized deployment environment. Thus, historical benchmarks evaluate implementable static alternatives, while deployment oracles provide stronger diagnostic comparisons.

\paragraph{Notation:}
For compactness, let \(H\) denote the historical 66-day experiment window and \(D\) denote the 22-day deployment window. Let \(\mathcal S_W\) and \(\mathcal V_W\) denote the sets of searches and users observed during window \(W\in\{H,D\}\). For each ad load \(i\in\mathcal A\), let \(\mathcal S_W(i)\) and \(\mathcal V_W(i)\) denote the sets of searches and users in window \(W\) assigned to the static cohort that displays ad load \(i\). Let \(w_i\) denote the traffic share of the corresponding static cohort. For query-dependent benchmarks, let \(\mathcal S_W(q,i)\) denote the sets of searches observed during window $W$ for query $q$ and ad load $i$.

\subsection{Uniform Static Benchmarks}
\label{ssec:benchmark_static}

A uniform static policy chooses the same number of sponsored slots for all queries. These policies correspond directly to the fixed experimental cohorts: policy \(i\) implements ``always show \(i\) ads.''

\begin{defn}[Uniform Static Policy Class]
For each \(i\in\mathcal A\), define the uniform static policy \(\pi_i^U:\mathcal Q\to\mathcal A\) by $
\pi_i^U(q)=i$ $\forall \; q \; \in \mathcal Q$. The class of uniform static policies is $
\Pi^U=\{\pi_i^U:i\in\mathcal A\}$.
\end{defn}
For \(W\in\{H,D\}\) and \(i\in\mathcal A\), define the sample mean reward per user under uniform ad load \(i\) as
\[
\widehat R_W^U(i;\lambda)
:=
\frac{1}{|\mathcal V_W(i)|}
\sum_{s\in\mathcal S_W(i)} r(y_s,\lambda).
\]
This user-level normalization preserves the interpretation of the static cohorts as persistent user-level regimes.

\begin{defn}[Historical Uniform Benchmark and Deployment Uniform Oracle]
For \(W\in\{H,D\}\), define
\[
i_W^{U,\star}\in
\arg\max_{i\in\mathcal A}\widehat R_W^U(i;\lambda),
\qquad
\pi_W^{U,\star}:=\pi_{i_W^{U,\star}}^U.
\]
We call \(\pi_H^{U,\star}\) the \emph{historical uniform benchmark} and \(\pi_D^{U,\star}\) the \emph{deployment uniform oracle}.
\end{defn}

The two benchmarks are selected by the same reward-maximization rule, but from different samples. The historical uniform benchmark \(\pi_H^{U,\star}\) uses only the 66-day experiment and is therefore feasible before deployment. The deployment uniform oracle \(\pi_D^{U,\star}\) uses the deployment window itself and is therefore an ex post benchmark. In our data, \(i_H^{U,\star}=3\), corresponding to the 3-ad policy \(T_3\), whereas \(i_D^{U,\star}=5\), corresponding to the 5-ad policy \(T_5\). The shift from \(T_3\) historically to \(T_5\) in deployment shows that even the best platform-wide static ad load changes over time. This instability is precisely the type of environment in which an adaptive policy such as LAAL can improve on fixed rules.

We compare LAAL to the uniform benchmarks using user-level outcomes from the deployment window. For each outcome $
Y\in\{\mathrm{Revenue},\mathrm{Conversion},\mathrm{Reward}\}$,
where \(\mathrm{Reward}=\mathrm{Conversion}+0.01 \times \mathrm{Revenue}\), we estimate the following equation:
\begin{equation}
\label{eq:laal_vs_uniform}
Y_u
=
\alpha_L^Y
+
\sum_{j\in\{C,T_2,\ldots,T_6\}}
\beta_j^Y\mathbf 1\{e(u)=j\}
+
\varepsilon_u^Y.
\end{equation}
Here \(Y_u\) is the total outcome for user \(u\) over the deployment window, \(\alpha_L^Y\) is the mean outcome in the LAAL arm, and \(\beta_j^Y\) measures the difference between static cohort \(j\) and LAAL.

Table~\ref{tab:laal_regression_rewards} reports the estimates. LAAL strictly dominates the historical uniform benchmark \(T_3\): relative to \(T_3\), LAAL delivers significantly higher revenue and conversions, and therefore significantly higher scalarized reward. LAAL also weakly dominates the deployment uniform oracle \(T_5\): revenue is statistically indistinguishable between LAAL and \(T_5\), while LAAL delivers significantly higher conversions. Thus, LAAL outperforms both the feasible ex ante historical uniform benchmark as well as the ex post deployment uniform oracle.

\begin{table}[htp!]
\centering
\small
\caption{User-level deployment regressions comparing LAAL to uniform static benchmarks.}
\label{tab:laal_regression_rewards}
\begin{tabular}{lccc}
\toprule
 & \textbf{Revenue} & \textbf{Conversion} & \textbf{Reward} \\
\midrule
LAAL & 36.13$^{***}$ (0.08) & 2.127$^{***}$ (0.000) & 2.488$^{***}$ (0.001) \\
$C$   & -10.33$^{***}$ (0.33) & 0.025$^{***}$ (0.002) & -0.078$^{***}$ (0.004) \\
$T_2$ & -6.65$^{***}$ (0.72)  & -0.023$^{***}$ (0.004) & -0.090$^{***}$ (0.009) \\
$T_3$ & -4.96$^{***}$ (0.72)  & -0.026$^{***}$ (0.004) & -0.075$^{***}$ (0.009) \\
$T_4$ & -3.02$^{***}$ (0.72)  & -0.036$^{***}$ (0.004) & -0.066$^{***}$ (0.009) \\
$T_5$ & -0.02 (0.72)          & -0.063$^{***}$ (0.004) & -0.064$^{***}$ (0.009) \\
$T_6$ & 0.55 (0.72)           & -0.073$^{***}$ (0.004) & -0.067$^{***}$ (0.009) \\
\midrule
Obs. & 22,330,704 & 22,330,704 & 22,330,704 \\
$R^2$ & $<0.001$ & $<0.001$ & $<0.001$ \\
\bottomrule
\end{tabular}
\par\medskip
\footnotesize{\textit{Note:} The LAAL row reports the mean outcome in the LAAL arm. Rows \(C,T_2,\dots,T_6\) report differences relative to LAAL. Reward is defined as \(\mathrm{Conversion}+0.01\cdot\mathrm{Revenue}\). Standard errors in parentheses. Significance levels: ${}^{***}p<0.001$, ${}^{**}p<0.01$, ${}^{*}p<0.05$.}
\end{table}

Next, we use an iso-reward line to visualize whether LAAL improves the scalarized objective relative to the best uniform static rule in deployment. Recall that the platform's scalarized reward is
\[
\mathrm{Reward}=\mathrm{Conversion}+\lambda\cdot \mathrm{Revenue}.
\]
The deployment uniform oracle \(\pi_D^{U,\star}\) is the uniform static policy with the highest deployment-period reward; in our data this corresponds to \(T_5\). Let its total deployment-period reward be $ B_D^U := |\mathcal V_D|\cdot
\widehat R_D^U(i_D^{U,\star};\lambda)$, where $\mathcal V_D$ is the set of all users observed during the 22-day deployment period. Then, the associated iso-reward line is
\begin{equation}
\mathcal L_D^U
:=
\{(\mathcal C,\mathcal M)\in\mathbb R_+^2:
\mathcal C+\lambda\mathcal M=B_D^U\},
\label{eq:isoreward}
\end{equation}
where \(\mathcal C\) denotes total conversions and \(\mathcal M\) denotes total revenue in the deployment window. This line contains all revenue--conversion bundles that deliver exactly the same scalarized reward as the best uniform static policy in deployment. Points above the line have a higher reward than the deployment uniform oracle, because they deliver either more conversions for a given revenue level or more revenue for a given conversion level. Points below the line have a lower reward. Thus, the iso-reward line provides a visual test of whether LAAL improves the platform's scalarized objective, not merely one of the two underlying outcomes. We plot this line together with LAAL, and the static cohorts in Figure~\ref{fig:benchmark_comparison} (along with the historical query-dependent benchmark discussed next). We see that LAAL clearly improves the reward compared to the best uniform fixed-ad load policy.

\begin{figure}[htp!]
    \centering
    \includegraphics[width=0.5\linewidth]{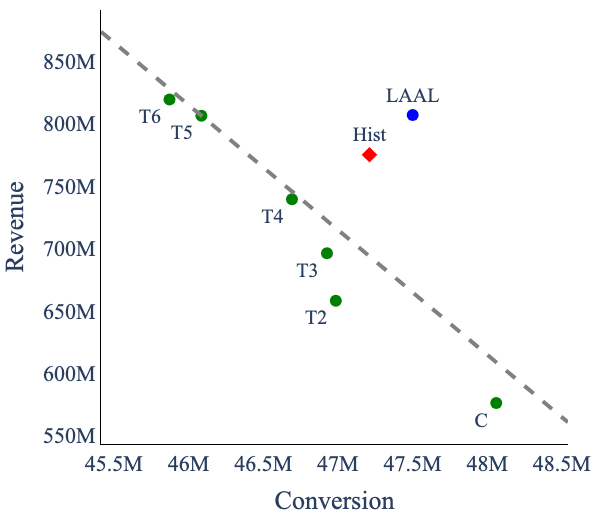}
    \caption{Revenue--conversion comparison of LAAL with static benchmarks. The figure shows the static cohorts, the iso-reward line induced by the deployment uniform oracle, and the historical query-dependent benchmark.}
    \label{fig:benchmark_comparison}
\end{figure}

\subsection{Query-Dependent Benchmarks}
\label{ssec:benchmark_query}

Uniform static rules ignore the cross-query heterogeneity documented in 
\S~\ref{ssec:heterogeneity}. A stronger static benchmark therefore allows the ad load to vary across queries, while still requiring the rule to remain fixed over time. These benchmarks help separate two sources of improvement: the value of \emph{query sensitivity} and the value of \emph{time adaptivity}. If a fixed query-level rule performs close to LAAL, then most of LAAL's gain comes from choosing different ad loads for different queries. If LAAL outperforms such a rule, then its additional gain comes from adapting within the same query as market conditions change.

\begin{defn}[Query-Dependent Static Policy Class]
Let \(\Pi^Q\) denote the class of deterministic query-dependent static policies $\Pi^Q=\{\pi:\mathcal Q\to\mathcal A\}$, where \(\pi(q)\) is the fixed ad load assigned to query \(q\).
\end{defn}
Since each uniform rule applies the same ad load to every query, uniform static policies are a special case of query-dependent static policies $\Pi^U\subseteq \Pi^Q$. 

For query-dependent benchmarks, whenever \(|\mathcal S_W(q,i)|>0\), define the sample mean reward per search for query \(q\) under ad load \(i\) as
\begin{equation}
\widehat r_W(q,i;\lambda)
:=
\frac{1}{|\mathcal S_W(q,i)|}
\sum_{s\in \mathcal S_W(q,i)} r(y_s,\lambda).
\end{equation}

\begin{defn}[Historical Query-Dependent Benchmark and Deployment Query-Dependent Empirical Oracle]
For \(W\in\{H,D\}\) and any query \(q\) with at least one observed query--ad-load cell in window \(W\), define
\[
\pi_W^{Q,\star}(q)
\in
\arg\max_{\{i\in\mathcal A:\,|\mathcal S_W(q,i)|>0\}}
\widehat r_W(q,i;\lambda).
\]
For queries that do not appear in the historical window, set \(\pi_H^{Q,\star}(q)=i_{\mathrm{def}}\). We refer to \(\pi_H^{Q,\star}\) as the historical query-dependent benchmark and to \(\pi_D^{Q,\star}\) as the deployment query-dependent empirical oracle.
\end{defn}

The historical query-dependent benchmark \(\pi_{H}^{Q,\star}\) is selected using the 66-day experiment window and is therefore feasible ex ante before deployment. It represents the best fixed query-level rule the platform can construct from historical experimental evidence. By contrast, the deployment query-dependent empirical oracle \(\pi_{D}^{Q,\star}\) is selected using the 22-day deployment window itself. It is not a deployable policy; rather, it is an ex post benchmark that asks how close LAAL comes to the best time-invariant query-level rule in the realized deployment environment.

\paragraph{Historical query-dependent benchmark in deployment.}

We first evaluate the feasible ex ante benchmark \(\pi_{H}^{Q,\star}\) using deployment-window outcomes. This comparison asks whether a query-sensitive static rule selected from the earlier experiment would still perform well in the later deployment environment.

For each ad load \(i\in\mathcal A\), define the set of queries assigned to \(i\) by the historical query-dependent benchmark, $Q_i^{H} :=
\{q\in\mathcal Q:\pi_{H}^{Q,\star}(q)=i\}$. 
Then, for any outcome \(Y\in\{\mathrm{Revenue},\mathrm{Conversion}\}\), define the IPW estimate of the deployment-window total outcome under \(\pi_{H}^{Q,\star}\) as
\begin{equation}
\label{eq:hist_q_ipw_total}
\widehat{\mathcal Y}_{H \rightarrow D}^{Q}(Y)
=
\sum_{i\in\mathcal A}
\sum_{s\in\mathcal S_{D}(i)}
\frac{\mathbf 1\{q_s\in Q_i^{H}\}}{w_i}
Y_s.
\end{equation}
The estimator uses deployment searches from static cohort \(i\) whenever the historical query-dependent rule would have assigned the corresponding query to ad load \(i\), and rescales by the static cohort traffic share \(w_i\). Thus, it evaluates a historically selected query-level rule in the deployment environment using contemporaneous randomized support from the static arms.

Using \(Y=\mathrm{Revenue}\) and \(Y=\mathrm{Conversion}\) in Equation~\eqref{eq:hist_q_ipw_total}, we obtain the platform-level revenue and conversion totals generated by the historical query-dependent benchmark during deployment and show it in Figure~\ref{fig:benchmark_comparison}. We see that the historical query-dependent benchmark improves the empirical revenue--conversion trade-off relative to uniform static rules, confirming the value of query sensitivity. However, it remains materially below LAAL. This gap shows that query sensitivity alone is not sufficient: a fixed query-level rule selected from historical data can become stale when within-query market conditions shift during deployment.

\paragraph{Empirical regret relative to the deployment query-dependent empirical oracle.}

We next compare LAAL to the deployment query-dependent empirical oracle \(\pi_D^{Q,\star}\). This oracle is selected using deployment outcomes themselves: for each query, it chooses the fixed ad load that performed best during the same 22-day deployment window. It therefore could not have been implemented ex ante at the start of deployment. We use it instead as an ex post diagnostic benchmark that measures how close LAAL comes to the best fixed query-level rule in the realized deployment environment. Let $\mathcal S_{D}(q, L)$ 
denote the set of deployment-window searches for query \(q\) served by the LAAL arm. Whenever \(|\mathcal S_{D}(q, L)|>0\), define LAAL's deployment-window mean reward on query \(q\) as
\begin{equation}
\widehat r_{D}^{L}(q;\lambda)
=
\frac{1}{|\mathcal S_{D}(q, L)|}
\sum_{s\in\mathcal S_{D}(q, L)}
r(y_s,\lambda).
\end{equation}
For each query $q$, we define LAAL's query-level empirical regret relative to the deployment query-dependent empirical oracle as
\begin{equation}
\label{eq:empirical_regret_query}
\widehat{\mathrm{EmpReg}}(q)
=
\widehat r_{D}
\!\left(q,\pi_{D}^{Q,\star}(q);\lambda\right)
-
\widehat r_{D}^{L}(q;\lambda),
\end{equation}
This quantity is the per-search reward gap between LAAL and the best fixed ad load for query \(q\), where the fixed ad load is chosen ex-post from deployment data. The deployment query-dependent empirical oracle, $\widehat r_{D}
\!\left(q,\pi_{D}^{Q,\star}(q);\lambda\right)$, is an upper benchmark within the class of time-invariant query-level rules. It assigns a single ad load to each query for the entire deployment window based on ex-post optimal ad load during deployment. As such, it is an infeasible benchmark. LAAL, by contrast, can change the ad load for the same query over time. Therefore, empirical regret can be negative for queries where within-query non-stationarity allows LAAL to outperform any single fixed ad load over the evaluation window.

% For each \(q\), define LAAL's query-level empirical regret relative to the deployment query-dependent empirical oracle as
% \begin{equation}
% \widehat{\mathrm{EmpReg}}(q)
% =
% \widehat r_D\!\left(q,\pi_D^{Q,\star}(q);\lambda\right)
% -
% \widehat r_D^{L}(q;\lambda).
% \end{equation}
% This quantity is the per-search reward gap between LAAL and the best fixed ad load for query \(q\), where the fixed ad load is chosen ex post from deployment data. Because the oracle assigns a single ad load to each query for the entire deployment window, the reward from this policy ($\widehat r_{D}
% \!\left(q,\pi_{D}^{Q,\star}(q);\lambda\right)$) is an upper benchmark only within the class of time-invariant query-level rules. LAAL, by contrast, can change the ad load for the same query over time. Therefore, empirical regret can be negative when LAAL's within-query adaptation outperforms every single fixed ad load for that query over the evaluation window.

For the empirical analysis, we restrict our attention to the 2,451 queries that consistently maintained sufficient support for LAAL's greedy selection, i.e., queries with at least 100 searches during every rolling 3-day window. 
We rank these queries by deployment search frequency from highest to lowest and report mean empirical regret within bins of that ranking in Figure~\ref{fig:regret_per_auction}. We construct 95\% confidence intervals using bootstrap resampling at the query level. Mean empirical regret declines with query frequency: in the highest-frequency bins, the confidence intervals include zero, indicating that LAAL performs nearly as well as the best query-dependent static rule when local support is ample. Empirical regret is larger in lower-frequency bins, where local estimation is noisier.
% and fallback behavior is more common.

\begin{figure}[htp!]
    \centering
    \includegraphics[width=0.5\linewidth]{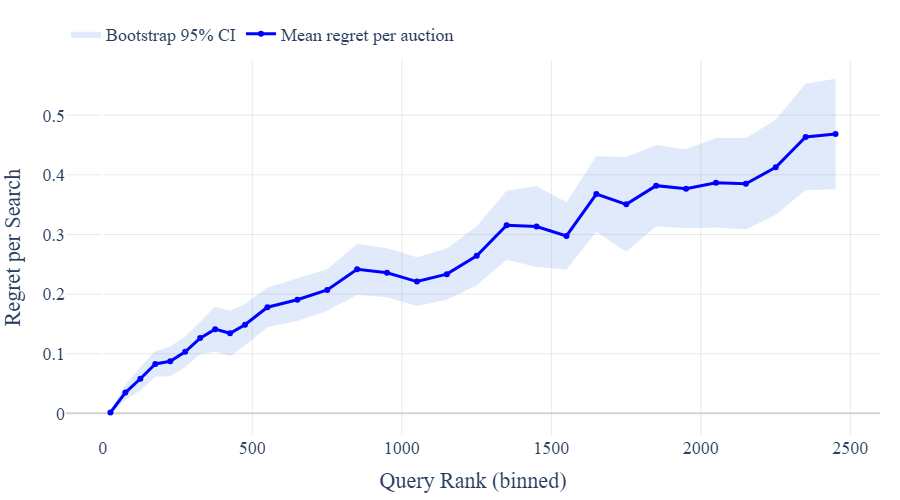}
    \caption{Average empirical regret per search of LAAL relative to the deployment query-dependent empirical oracle, by query-volume bin. The sample is restricted to the 2,451 queries.}
    \label{fig:regret_per_auction}
\end{figure}

Taken together, the query-dependent benchmark comparisons deliver two takeaways. First, allowing ad load to vary across queries improves over uniform static rules, confirming the importance of query sensitivity. Second, LAAL continues to outperform the historical query-dependent benchmark and exhibits near-zero empirical regret relative to the deployment query-dependent empirical oracle on the highest-volume queries. Thus, LAAL's gains come not only from choosing different ad loads for different queries, but also from adapting ad load within the same query as market conditions evolve.

\section{Discussion and Robustness Checks}

The benchmark results above show that LAAL improves the revenue--conversion frontier relative to both uniform and query-dependent static policies. We now examine two additional concerns about the interpretation of these gains. First, we ask whether LAAL's performance is stable throughout the deployment window, rather than being driven by short-lived launch-period effects. Second, we ask whether the gains are driven solely by brand-query mechanics, where the focal brand advertiser's participation decision may interact with the platform's ad-load policy. 

\subsection{Performance Over Time}

One potential concern with the deployment results is that they may not be stable over the evaluation window. Launch-period effects could inflate early gains, and performance may erode as advertisers adjust their bids, budgets, or participation. Such responses would appear as changes in observed query-level rewards, which LAAL is designed to accommodate through its sliding-window updates and which are captured as reward drift in Theorem~\ref{thm:elaal_regret_main}. Nevertheless, whether LAAL's relative performance remains stable during deployment is an empirical question.

We assess whether LAAL's improvement is stable over time by testing for a linear trend in daily outcomes relative to the control group. For each outcome \(Y\in\{\text{Revenue},\text{Conversion}\}\) and date \(d\), we estimate
\begin{equation}
\label{eq:time_trend_reg}
Y_{L,d}\;=\;\alpha^{Y}\,Y_{C,d}\;+\;\beta^Y\,d\;+\;\varepsilon_{Y,d},
\end{equation}
where \(Y_{L,d}\) and \(Y_{C,d}\) denote the LAAL and control IPW-scaled totals on day \(d\), respectively, and \(d\) is mean-centered within the 22-day window. In this specification, \(\alpha^Y\) captures the average multiplicative change in LAAL relative to control, controlling for common day-to-day shocks through \(Y_{C,d}\), while \(\beta^Y\) tests whether LAAL's relative performance drifts over time.

\begin{table}[!htbp]
\centering
\caption{LAAL vs.\ Control With Time Trend}
\label{tab:trend_stability}
\begin{tabular}{lcc}
\toprule
 & \textbf{Revenue} & \textbf{Conversion} \\
\midrule
Relative change $\alpha$ & 1.3685$^{***}$ (0.0421) & 0.9891$^{***}$ (0.0016) \\
Trend $\beta$            & 487.04 (884.65)         & -5.05 (4.5351)          \\
\midrule
Observations & 22 & 22 \\
\bottomrule
\end{tabular}

\par\vspace{0.25em}
\footnotesize
Standard errors in parentheses. Significance: $^{*}p<0.05$, $^{**}p<0.01$, $^{***}p<0.001$.
\end{table}
Table~\ref{tab:trend_stability} reports the estimates. Consistent with the results from \S\ref{ssec:main_result}, LAAL increases revenue by 36.8\% and decreases conversions by 1.1\% relative to the control condition. Importantly, the estimated trend coefficients \(\beta^Y\) are statistically indistinguishable from zero for both outcomes, suggesting that LAAL's relative performance did not erode over the 22-day evaluation window. Nevertheless, we caution that 22 days may be too short for advertisers to fully re-optimize. The analysis therefore establishes stability over the observed deployment window, but does not rule out slower or longer-run equilibrium responses.

\subsection{Robustness Beyond Brand Queries}
\label{ssec:brand_advertising}

For brand queries, the platform often shows more ads, and the focal brand’s decision to advertise can affect which layouts are observed and how profitable additional slots are. This creates the possibility that advertiser behavior and the platform’s ad-allocation rule interact: if the brand’s propensity to advertise is itself affected by the realized ad load, then LAAL’s performance on brand queries may partially reflect this strategic response rather than purely mechanical reallocation.

As a robustness check, we therefore repeat the main revenue--conversion comparison from $\S$\ref{ssec:main_result} on \emph{non-brand} queries only, where no single advertiser is sufficiently prominent to plausibly manipulate the overall design. The results are reported in Figure~\ref{fig:non_brand_revenue_conversion}. On non-brand queries, LAAL continues to outperform all non-control static cohorts in conversions, and it dominates all alternatives in revenue except the highest ad-load cohorts (\(T_5\) and \(T_6\)). These results indicate that the gains from LAAL are not driven solely by brand-query mechanics; substantial improvements persist in the broader set of non-brand queries.

% \begin{figure}[htp!]
%     \centering
%     \begin{subfigure}[t]{0.45\linewidth}
%         \centering
%         \includegraphics[width=\linewidth]{Pics/non_brand_conv_per_search.png}
%         \caption{Conversion per search across variants (non-brand queries).}
%     \end{subfigure}
%     \hfill
%     \begin{subfigure}[t]{0.45\linewidth}
%         \centering
%         \includegraphics[width=\linewidth]{Pics/non_brand_rev_per_search.png}
%         \caption{Revenue per search across variants (non-brand queries).}
%     \end{subfigure}
%     \caption{Performance of experimental variants on non-brand queries.}
%     \label{fig:non_brand_revenue_conversion}
% \end{figure}

\begin{figure}[htp!]
    \centering
        \centering
        \includegraphics[width=0.5\linewidth]{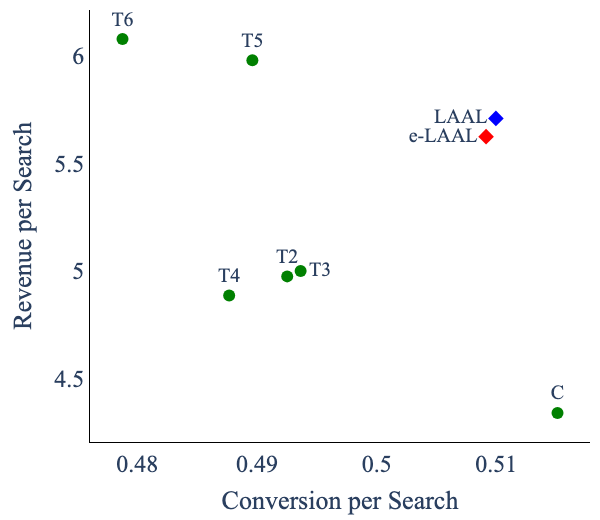}
    \caption{Revenue--conversion across fixed ad-load cohorts, LAAL, and e-LAAL on \textit{Non-Brand} Queries.}
    \label{fig:non_brand_revenue_conversion}
\end{figure}

%Finally, we note that it is possible to make LAAL explicitly \emph{brand-aware}. If brand advertisers’ participation decisions respond to ad load, the platform can estimate (i) the probability that a brand begins advertising when currently inactive and (ii) the probability it exits when active. Action selection would then compare layouts based not only on short-run realized rewards, but on the equilibrium rewards implied by these predicted participation responses (i.e., selecting the ad load that maximizes expected reward under the counterfactual equilibrium induced by the policy).

\section{Conclusion}
\label{sec:conclusion}

Ad-load design is a central supply-side decision in sponsored search: increasing sponsored inventory can raise short-run revenue, but may also crowd out relevant organic results and reduce user-side outcomes. Using a 66-day user-level randomized field experiment, we document this revenue--conversion trade-off at scale. Higher ad loads substantially increase advertising revenue, but reduce total search conversions and engagement. We further show that these effects are neither homogeneous nor stable over time. Additional slots generate large revenue gains on queries with high-conversion advertisers, but little or no incremental revenue on queries where ads convert poorly. Within-query variation in brand-advertiser presence further shows that the revenue--conversion frontier can shift as market conditions change. 

Motivated by these findings, we design and deploy exploration-augmented Locally Adaptive Ad Load (e-LAAL). The deployed architecture combines LAAL, a model-free query-level policy that uses recent outcomes to adapt the number of sponsored slots, with persistent static exploration arms that maintain support and provide fixed-policy counterfactuals. In a 22-day production deployment serving over 22 million users and 77 million searches, LAAL recommends 3.1 ads per search on average and improves the empirical revenue--conversion frontier relative to deployed static benchmarks. It achieves revenue comparable to high-ad-load static policies while preserving conversions closer to low-ad-load benchmarks. The policy's recommendations are economically interpretable: it shows fewer ads when advertiser quality is low or the brand advertiser is absent, and more ads when sponsored demand is valuable or the brand advertiser is active.

The e-LAAL architecture also provides a practical and theoretically grounded way to implement adaptive ad-load allocation. By maintaining parallel static exploration arms, the platform preserves support for every ad-load level and obtains contemporaneous fixed-policy counterfactuals, while the adaptive arm exploits recent local evidence. We provide a finite-time, high-probability dynamic-regret guarantee for e-LAAL that accounts for non-stationarity, sliding-window estimation, and deviations between designed traffic shares and realized query-level support. Empirically, LAAL outperforms both uniform static benchmarks and historical query-dependent static benchmarks, and exhibits near-zero empirical regret on high-volume queries relative to an ex post deployment oracle.

Together, the results show that ad load should not be treated as a fixed platform-wide parameter or as a time-invariant query-level rule. Effective ad-load design must be both query-sensitive and time-adaptive. More broadly, the paper shows that sponsored-search platforms can ease the revenue--consumer-experience trade-off not only through auction design, pricing, ranking, or targeting, but also by dynamically controlling the supply of sponsored attention. LAAL provides a lightweight, scalable, and portable blueprint for doing so in large-scale, non-stationary sponsored-search markets.

\section*{Funding and Competing Interests}
The first author was a paid intern at the company that provided the data when the experiments were conducted. The company had no input or constraints on what is reported in the manuscript.

%\bibliographystyle{plainnat}
%\bibliography{references}
\putbib 
\end{bibunit}

\newpage
\begin{appendices}

\setcounter{table}{0}
\setcounter{figure}{0}
\setcounter{equation}{0}
\setcounter{page}{0}
\renewcommand{\thetable}{A\arabic{table}}
\renewcommand{\thefigure}{A\arabic{figure}}
\renewcommand{\theequation}{A\arabic{equation}}
\renewcommand{\thepage}{\roman{page}}

\renewcommand{\theHtable}{A\arabic{table}}
\renewcommand{\theHfigure}{A\arabic{figure}}
\renewcommand{\theHequation}{A\arabic{equation}}

\pagenumbering{roman}
\begin{bibunit}

\section{Brand Query Layouts}
\label{apdx:brand_query_layouts}
\begin{figure}[h]
  \centering
  \begin{subfigure}[t]{0.45\textwidth}
    \centering
    \includegraphics[width=0.5\linewidth]{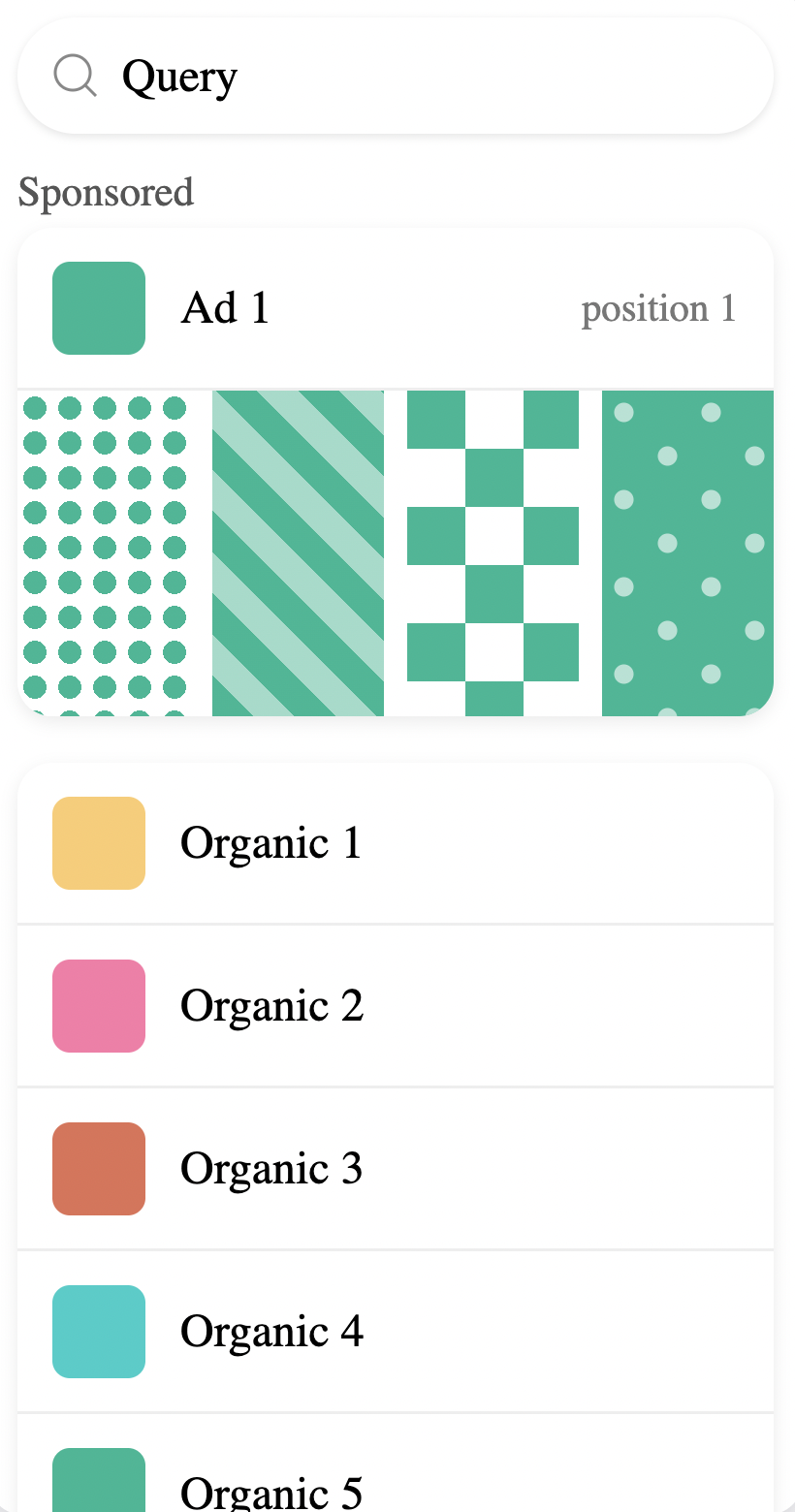}
    \caption{Brand Query Single-Ad Layout }
    \label{fig:single_ad_brand_layout}
  \end{subfigure}
  \hfill
  \begin{subfigure}[t]{0.45\textwidth}
    \centering
    \includegraphics[width=0.5\linewidth]{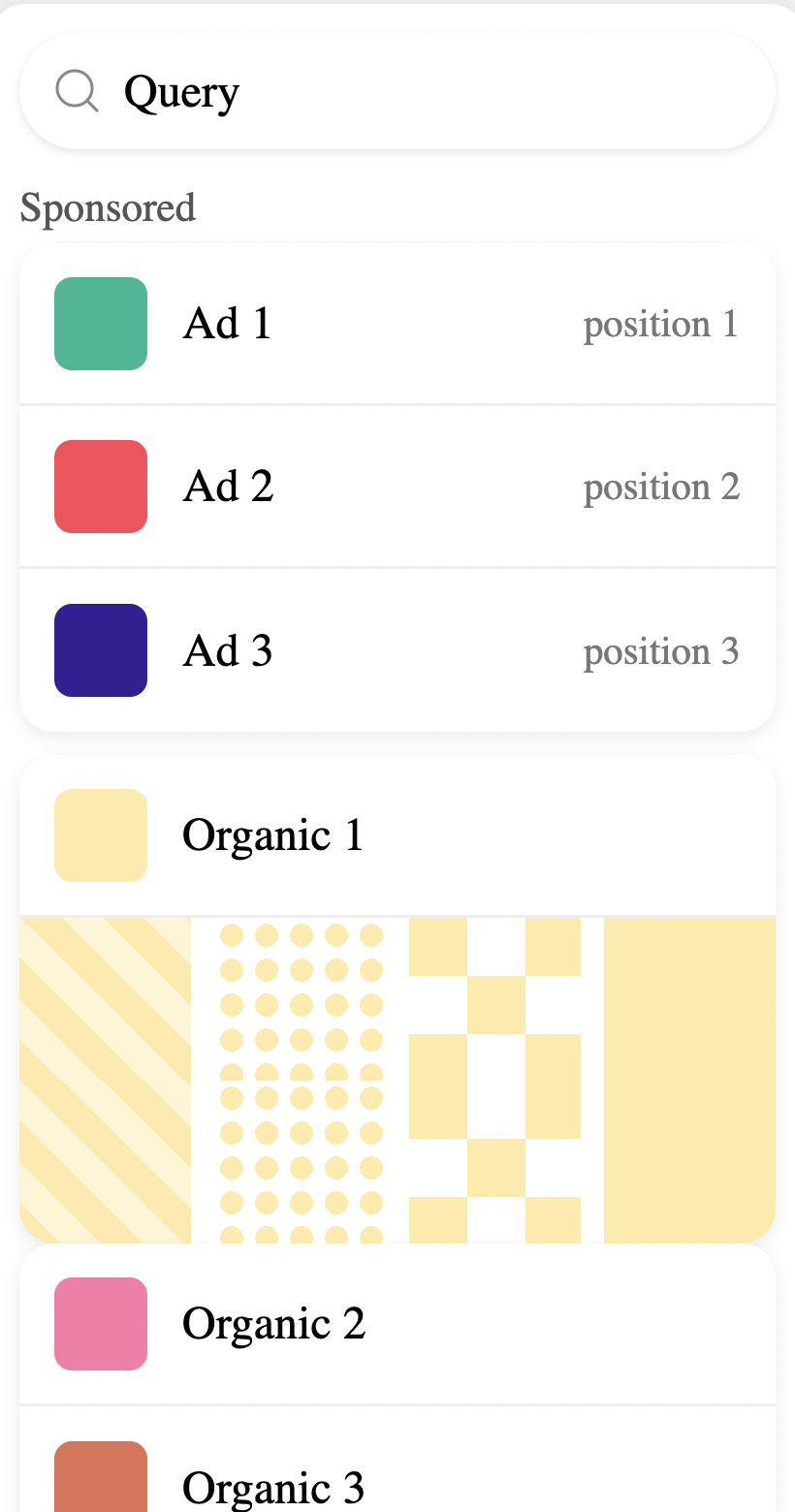}
    \caption{Brand Query Three-Ad Layout Example}
    \label{fig:three_ads_brand_layout}
  \end{subfigure}
  \caption{Examples of brand query layouts: in (a), the brand appears as the sponsored Ad~1; in (b), it appears as Organic~1.}
  \label{fig:brand_query_layouts}
\end{figure}

\section{Data Distributions}
\subsection{CDF of Search Frequency}
\label{apdx:search_CDF}
\begin{figure}[htp!]
    \centering
    \includegraphics[width=0.4\linewidth]{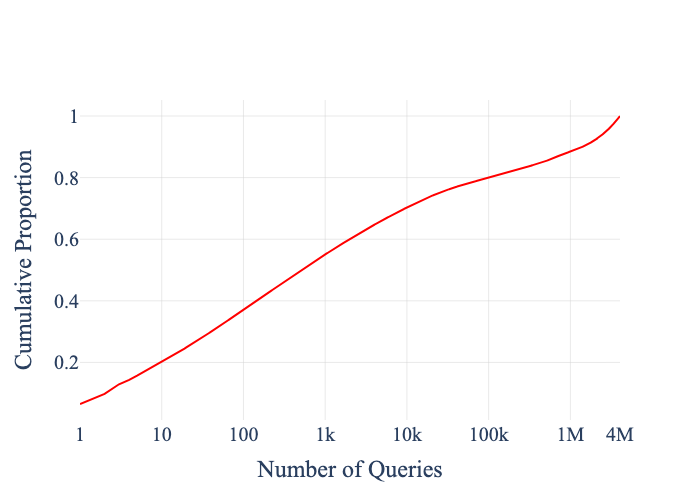}
    \caption{Cumulative distribution of search frequency of queries}
    \label{fig:query_cumulative_distribution}
\end{figure}
\subsection{CDF of Impressions and Conversions}
\label{apdx:apps_ads_CDF}
\begin{figure}[htp!]
  \centering
  % First subplot
  \begin{subfigure}[b]{0.4\textwidth}
    \centering
    \includegraphics[width=\textwidth]{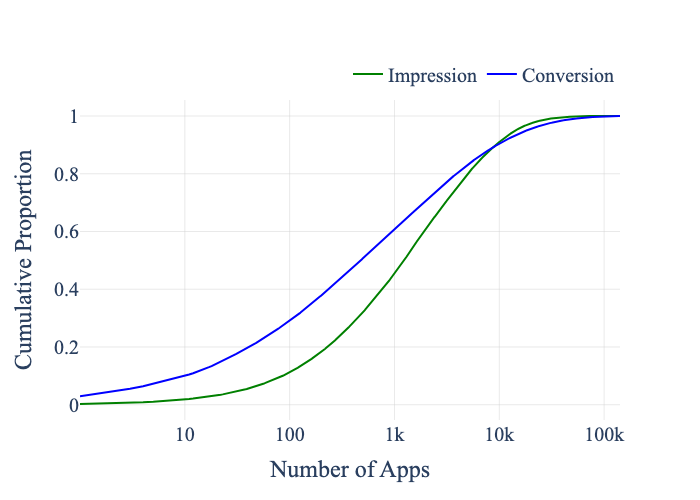}
    \caption{CDF of Apps (Organic and Ads).}
    \label{fig:apps_cdf}
  \end{subfigure}
  \hfill
  % Second subplot
  \begin{subfigure}[b]{0.4\textwidth}
    \centering
    \includegraphics[width=\textwidth]{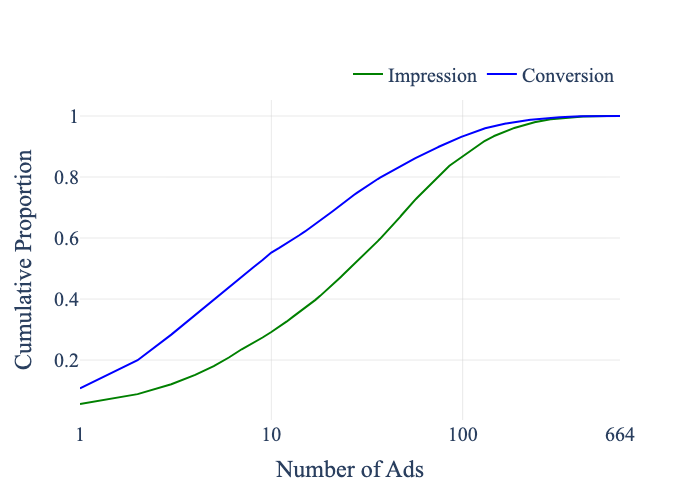}
    \caption{CDF of Ads.}
    \label{fig:ads_cdf}
  \end{subfigure}

  \caption{Cumulative distribution of Conversions and Impressions}
  \label{fig:apps_ads_cdfs}
\end{figure}
 The blue and green curves in Figure~\ref{fig:apps_ads_cdfs} show the cumulative distribution of conversions and impressions respectively. 
    Notice that “Conversion” CDFs lie above the corresponding ``Impression” CDFs. This suggests that conversions are more concentrated (across both apps and ads) than impressions.
\section{Platform Usage}
\subsection{Engagement}
\label{apdx:engagement_regressions}

We evaluate whether daily engagement changes under higher ad loads.
Let $ d$ be mean-centered time within the experiment window (in days). We estimate a condition-specific levels-and-trends model with the control group as baseline:

% Intercepts shows the mean daily fraction under $C$ and daily engagement trend under $C$. We use day level mean centered time $\bar{t}$. If the daily fraction has a trend under $C$ we must observe a significant value for $\delta_C$. $\beta_j$ is the average deviation of daily engagement from control group's engagement. $\delta_j$ represent the daily trend of engagement under condition $j$ in addition to the absorbed trend $\delta_C$. 
% The result is presented in \ref{tab:engagement_levels_trends}. We see that the engegment consstently decrease as more ads are displayed. In addition, there is no significant trend on daily engagement under control group ($\delta_C$). However, we see negative trend on daily engagement under the last three conditions . Thus, we expect that daily engagement further decreases under $T_4,T_5,T_6$ if the experiment continue for longer period of time. 
\begin{align}
\text{Engagement}_{e,d}
&= (\beta_{C} \;+\; \delta_{C}\, d)
\;+\; \sum_{j =2}^6 (\beta_{T_j}+\delta_{T_j}\, d)\,\mathbf{1}\{e=T_j\}
\;+\; \varepsilon_{e,d}
\label{eq:engagement_levels_trends_clean}
\end{align}
$\text{Engagement}_{e,d}$ measures the daily engagement under experimental condition $e$ on day $d$.
Here, $\beta_C$ and $\delta_C$ estimate the mean engagement and time trend of the control group respectively. If engagement trends under control are absent, we expect $\delta_C$ to be statistically indistinguishable from zero. $\beta_{T_j}$ and $\delta_{T_j}$ estimate the level difference for condition $T_j$ and the incremental trend difference for condition $T_j$ relative to control respectively.
Table~\ref{tab:engagement_levels_trends} reports the estimates: Engagement levels decline monotonically with ad load. The control trend $\delta_C$ is small and not statistically significant, indicating no intrinsic drift in engagement absent additional ads. In contrast, we observe significantly negative incremental trends for $T_4$–$T_6$, implying that daily engagement erodes over time under extreme ad loads. If the experiment were extended, this pattern suggests further engagement losses in $T_4$–$T_6$ relative to control.

\begin{table}[htb!]
\centering
\caption{Daily Engagement: Condition Levels and Trends per Day}
\label{tab:engagement_levels_trends}
\begin{tabular}{lcc}
\toprule
 & \textbf{Daily Engagement} &  \textbf{Trend} \\
\midrule
$C$ & 4.95e-02$^{***}$ (1.00e-03) & 5.82e-05 (3.16e-05) \\
% \multicolumn{3}{l}{\textbf{Conditions (difference vs.\ C)}} \\
$T_2$ & -3e-04$^{***}$ (4.83e-05) & 1.45e-06 (2.47e-06) \\
$T_3$ & -6e-04$^{***}$ (4.63e-05) & -1.11e-06 (3.08e-06) \\
$T_4$ & -8e-04$^{***}$ (5.79e-05) & -9.76e-06$^{**}$ (3.65e-06) \\
$T_5$ & -10e-04$^{***}$ (4.96e-05) & -6.38e-06$^{*}$ (3.01e-06) \\
$T_6$ & -11e-04$^{***}$ (4.96e-05) & -9.97e-06$^{**}$ (3.16e-06) \\
\midrule
Observations & 396 & 396 \\
\bottomrule
\end{tabular}

\par\vspace{0.25em}
\footnotesize{Standard errors in parentheses are clustered at the day level. Significance levels: ${}^{***}p<0.001$, ${}^{**}p<0.01$, ${}^{*}p<0.05$.}
\end{table}

\subsection{Search per User}
\label{apdx:search_per_user_regression}
We now test whether higher ad loads induce additional search activity. Let $\text{SearchesPerUser}_u$ denote the total number of search queries executed by user $u$ during the experiment:
$$\text{SearchPerUser}_u = \sum_{s} \mathbf{1}\{u_s=u\}$$
With control as the baseline, we estimate:
\begin{align}
\label{eq:search_per_user_vsC}
\text{SearchesPerUser}_u
\;=\; \zeta_{C}
\;+\; \sum_{j=2}^6 \zeta_{T_j}\,\mathbf{1}\{e(u) = {T}_j\}
\;+\; \varepsilon_{u}
\end{align}

where $\zeta_C$ is the average under $C$ and $\zeta_{T_j}$ estimates the deviation for condition $T_j$ relative to $C$.
Table~\ref{tab:search_per_user_vsC} presents the estimates. It shows that search intensity is higher under larger ad loads: coefficients for $T_3$–$T_6$ are positive and highly significant, with the largest increase in $T_6$ (0.171 additional searches per user). Relative to the control mean of $\zeta_C=5.1389$, this corresponds to approximately a 3.3\% increase. 

\begin{table}[htp!]
\centering
\caption{Searches per User by Condition}
\label{tab:search_per_user_vsC}
\begin{tabular}{lc}
\toprule
 & \textbf{Searches per User} \\
\midrule
$C$ & 5.1389$^{***}$ (0.0060) \\
% \multicolumn{2}{l}{\textbf{Conditions (difference vs.\ C)}} \\
$T_2$ & 0.0199 (0.0170) \\
$T_3$ & 0.1620$^{***}$ (0.0170) \\
$T_4$ & 0.1673$^{***}$ (0.0170) \\
$T_5$ & 0.1412$^{***}$ (0.0160) \\
$T_6$ & 0.1710$^{***}$ (0.0160) \\
\midrule
Observations & 5,057,952 \\
\bottomrule
\end{tabular}

\par\vspace{0.25em}
\footnotesize{Standard errors in parentheses. Significance levels: ${}^{***}p<0.001$, ${}^{**}p<0.01$, ${}^{*}p<0.05$.}
\end{table}

\subsection{Engagement During Deployment}
\label{apdx:deployment_engagement}
We also examine engagement during the deployment period. For each date and condition, engagement is defined as the fraction of assigned users who performed at least one search. We estimate average differences relative to LAAL while absorbing common date shocks:
\begin{align}
\text{Engagement}_{e,d}
&= \alpha_L
\;+\; \beta_C \mathbf{1}\{e=C\}
\;+\; \sum_{j=2}^{6} \beta_{T_j}\mathbf{1}\{e=T_j\}
\;+\; \gamma_d
\;+\; \varepsilon_{e,d}.
\label{eq:deployment_engagement_date_fe}
\end{align}
Here, LAAL is the omitted condition and $\gamma_d$ denotes date fixed effects. Thus, $\beta_C$ and $\beta_{T_j}$ measure average engagement differences relative to LAAL after controlling for day-level shocks.

Table~\ref{tab:deployment_engagement} reports the results. $C$ and $T_2$ have higher engagement than LAAL, while the other static conditions have lower engagement than LAAL. However, only the effects of $T_4$ and $T_6$ are statistically significant. This limited significance is expected because the deployment period contains only 22 days of data.

\begin{table}[htp!]
\centering
\caption{Deployment Engagement Relative to LAAL}
\label{tab:deployment_engagement}
\begin{tabular}{lc}
\toprule
 & \textbf{Daily Engagement} \\
\midrule
LAAL  & 8.43e-02$^{***}$ (6.19e-04) \\
$C$   & 1.69e-04 (3.66e-04) \\
$T_2$ & 1.09e-04 (3.86e-04) \\
$T_3$ & -4.72e-04 (3.71e-04) \\
$T_4$ & -9.55e-04$^{*}$ (3.89e-04) \\
$T_5$ & -6.44e-04 (4.13e-04) \\
$T_6$ & -7.87e-04$^{*}$ (3.75e-04) \\
\midrule
Observations & 154 \\
\bottomrule
\end{tabular}

\par\vspace{0.25em}
\footnotesize{Standard errors in parentheses. The LAAL row reports the baseline engagement level. Other rows report differences relative to LAAL. Estimates are from Equation~\eqref{eq:deployment_engagement_date_fe}. Significance levels: ${}^{***}p<0.001$, ${}^{**}p<0.01$, ${}^{*}p<0.05$.}
\end{table}

\section{Query Segmentation Results}
\label{apdx:query_bucket_summary}
\subsection{Query Segmentation Summary Statistics}
Table~\ref{tab:bucket-composition} summarizes the distribution share of outcomes across query segments introduced in \ref{subsec:het-queries}.\footnote{Queries outside this 10k set account for the remaining mass:
$40.41\%$ of searches, $10.58\%$ of revenue, $37.67\%$ of total conversions, and $7.61\%$ of ad conversions.}. Most of revenue and advertising conversion come from the segment ${K_3}$. The segment ${K_1}$ contributes the most to the total search and conversion share while it has a small share to the advertising results.

\begin{table}[htp!]\centering
\caption{Query Segment Composition. Numbers show the shares under each variable. }
\label{tab:bucket-composition}
\begin{tabular}{lrrrrr}
\toprule
Segment & \#Queries & Search & Revenue & Ads Conversion & Conversion \\
\midrule
K1   & 3{,}333 & 0.468 & 0.188 & 0.091 & 0.450  \\
K2   & 3{,}334 & 0.069 & 0.153 & 0.100 & 0.092  \\
K3   & 3{,}333 & 0.059 & 0.553 & 0.733 & 0.081 \\
\bottomrule
\end{tabular}
\vspace{0.25em}
\end{table}
% & 0.00278& 0.02068& 0.17863 ; =0.00270$
\subsection{Regression Estimates}
Table \ref{tab:bucket-fe-estimates} presents the regression estimates from Equation \eqref{eq:bucket}. 

%\hy{Standard errors are clustered at the query level to allow arbitrary correlation in outcomes within a query across searches and experimental conditions.}

\begin{table}[htp!]
\centering
\small
\setlength{\tabcolsep}{2.5pt} % Tightens column spacing
\caption{Effect of Ad Load on Revenue and Conversion Across Segments}
\label{tab:bucket-fe-estimates}
\begin{tabular}{lcccccc}
\toprule
& \multicolumn{3}{c}{\textbf{Revenue}} & \multicolumn{3}{c}{\textbf{Conversion}} \\
\cmidrule(lr){2-4}\cmidrule(lr){5-7}
& Low & Medium & High & Low & Medium & High \\
\midrule
$C$ & 3.368 & 16.212 & 62.741 & 0.547 & 0.763 & 0.781 \\
$T_2$ & -0.427 (0.218) & -0.094 (0.536) & 14.136$^{***}$ (3.327) & -0.011$^{***}$ (0.003) & -0.025$^{***}$ (0.003) & -0.016$^{***}$ (0.003) \\
$T_3$ & -0.621$^{**}$ (0.199) & 2.047$^{***}$ (0.547) & 20.701$^{***}$ (4.252) & -0.009$^{**}$ (0.003) & -0.023$^{***}$ (0.004) & -0.030$^{***}$ (0.004) \\
$T_4$ & -0.510$^{**}$ (0.189) & 2.330$^{***}$ (0.524) & 25.432$^{***}$ (5.027) & -0.017$^{***}$ (0.003) & -0.037$^{***}$ (0.004) & -0.033$^{***}$ (0.005) \\
$T_5$ & -0.177 (0.173) & 3.526$^{***}$ (0.674) & 28.927$^{***}$ (4.743) & -0.024$^{***}$ (0.005) & -0.036$^{***}$ (0.006) & -0.049$^{***}$ (0.004) \\
$T_6$ & -0.255 (0.180) & 4.828$^{***}$ (0.665) & 32.887$^{***}$ (5.018) & -0.033$^{***}$ (0.006) & -0.060$^{***}$ (0.005) & -0.052$^{***}$ (0.005) \\
\midrule
Obs. & 12,313,120 & 1,825,671 & 1,548,914 & 12,313,120 & 1,825,671 & 1,548,914 \\
$R^2$ & 0.001 & 0.010 & 0.072 & 0.176 & 0.048 & 0.029 \\
\bottomrule
\end{tabular}

\par\vspace{0.25em}
\footnotesize
\textit{Notes:} Standard errors in parentheses are clustered at the query level. Significance levels: ${}^{***}p<0.001$, ${}^{**}p<0.01$, ${}^{*}p<0.05$.
\end{table}

\section{Conditional Cohort-Invariance Assumption}
\label{appsec:CCIA}
LAAL uses observations from the adaptive cohort and the static cohorts to estimate query--ad-load outcomes. This requires that, after conditioning on the same query, ad load, and time, the cohort label does not contain additional information about immediate search-level outcomes. We evaluate this condition by estimating two regressions, one for revenue and one for conversion:
\begin{equation}
\label{eq:ccia_test}
Y_s
=
\alpha^Y_{q_s,a_s,h_s}
+
\sum_{i=1}^{6}
\beta_i^Y
\mathbf{1}\{e_s=LAAL\}
\mathbf{1}\{a_s=i\}
+
\varepsilon_s^Y,
\end{equation}
where \(Y_s \in \{\mathrm{Revenue}_s,\mathrm{Conversion}_s\}\), \(q_s\) is the query, \(a_s\) is the realized ad load, and \(h_s\) is the calendar-hour of the search. The fixed effects \(\alpha^Y_{q_s,a_s,h_s}\) absorb all query-by-ad-load-by-hour variation. Thus, each coefficient \(\beta_i^Y\) measures whether the LAAL cohort has additional explanatory power for outcome \(Y\), relative to the corresponding static cohort with the same realized ad load.

Table~\ref{tab:ccia_test} reports the results. The conversion coefficients are small and statistically insignificant across all ad loads. For revenue, only the LAAL-versus-\(C\) comparison at one ad is statistically significant. To assess its economic magnitude, note that Table~\ref{tab:summary_user_experiment_deploy} implies a control reward per search of approximately
\[
\frac{2.152+0.01\times 25.80}{3.44}=0.701.
\]
The significant revenue coefficient, \(0.199\), changes the policy reward by only \(0.01\times 0.199=0.002\), which is \(0.002/0.701 \approx 0.28\%\) of the control reward per search. Thus, although statistically detectable, the effect is economically negligible. Overall, the results do not provide economically meaningful evidence that cohort identity explains remaining revenue or conversion variation after controlling for query-by-ad-load-by-hour fixed effects.

\begin{table}[htp!]
\centering
\small
\caption{Testing Cohort Effects Conditional on Query--Ad-Load--Hour Fixed Effects}
\label{tab:ccia_test}
\begin{tabular}{lcc}
\toprule
 & \textbf{Revenue} & \textbf{Conversion} \\
\midrule
LAAL vs. \(C\), 1 ad
    & 0.199$^{*}$ (0.080)
    & 0.008 (0.004) \\
LAAL vs. \(T_2\), 2 ads
    & 0.305 (1.888)
    & 0.003 (0.071) \\
LAAL vs. \(T_3\), 3 ads
    & 6.592 (3.434)
    & -0.005 (0.123) \\
LAAL vs. \(T_4\), 4 ads
    & 15.971 (9.870)
    & -0.023 (0.110) \\
LAAL vs. \(T_5\), 5 ads
    & 0.290 (0.594)
    & 0.009 (0.012) \\
LAAL vs. \(T_6\), 6 ads
    & 4.011 (4.108)
    & 0.008 (0.032) \\
\midrule
Query \(\times\) Ad Load \(\times\) Hour FE
    & \checkmark & \checkmark \\
Obs.
    & 77,601,057 & 77,601,057 \\
\(R^2\)
    & 0.141 & 0.024 \\
\bottomrule
\end{tabular}

\par\vspace{0.25em}
\footnotesize{
\textit{Notes:} Each column estimates Equation~\eqref{eq:ccia_test} on the full deployment search sample. Each row reports the LAAL coefficient for realized ad load \(i\), comparing LAAL searches with the corresponding static cohort after absorbing the fixed effects. Standard errors in parentheses are clustered at the query level. Significance levels: ${}^{***}p<0.001$, ${}^{**}p<0.01$, ${}^{*}p<0.05$.
}
\end{table}

\section{Regret Guarantee for e-LAAL}
\label{appsec:elaal_regret_proof}

This appendix gives the finite-sample dynamic-regret guarantee for e-LAAL for a fixed query \(q\). It provides the detailed definitions, support conditions, and tuning argument behind Theorem~\ref{thm:elaal_regret_main}. 
\paragraph{Query-level sequence and rewards.}
Fix a query \(q\), and let \(\mathcal S_q(T)=\{s_1,\dots,s_N\}\) be the realized searches containing \(q\) during \([0,T]\), ordered by time. Write \(t_j:=t_{s_j}\), \(e_j:=e_{s_j}\), and \(a_j:=a_{s_j}\). We identify the static cohort \(C\) with arm \(1\), and the static cohort \(T_i\) with arm \(i\) for \(i=2,\dots,K\). Thus the action set is \(\mathcal A=\{1,\dots,K\}\), and the cohort set is \(\{L\}\cup\mathcal A\), where \(L\) denotes the LAAL cohort.

Let the raw scalarized reward be
\[
r_j^{\mathrm{raw}}:=r(y_{s_j},\lambda)
=\lambda\,\mathrm{Revenue}_{s_j}+\mathrm{Conversion}_{s_j},
\qquad 0\le r_j^{\mathrm{raw}}\le B_r<\infty.
\]
For the proof, normalize rewards by \(B_r\): \(\tilde r_j:=r_j^{\mathrm{raw}}/B_r\in[0,1]\). All mean rewards, variation budgets, regret quantities, and the tie-break parameter \(\eta\) in this appendix are expressed in normalized units. Bounds in raw reward units are recovered by multiplying by \(B_r\).

For each arm \(i\in\mathcal A\), define the possibly time-varying mean reward
\[
\mu_{i,j}:=\mathbb E[\tilde r_j\mid a_j=i,q].
\]
Let \(i_j^\star\in\arg\max_{i\in\mathcal A}\mu_{i,j}\). The dynamic regret over \([0,T]\) is
\[
R_q^{\mathrm{dyn}}(T):=\sum_{j=1}^N\big(\mu_{i_j^\star,j}-\mu_{a_j,j}\big),
\]
and the query-level variation budget, with \(V_q(T)>0\), is
\[
V_q(T):=\sum_{j=2}^N\max_{i\in\mathcal A}|\mu_{i,j}-\mu_{i,j-1}|.
\]

\paragraph{Design weights and query-level support.}
Let \(w_e\) denote the pre-assigned user-level cohort weights, with \(w_L+\sum_{i=1}^K w_i=1\). Define the total static design mass \(\epsilon:=\sum_{i=1}^K w_i=1-w_L\), and let \(w_{\min}:=\min_{i\in\mathcal A}w_i\). User assignment is fixed by design, but for a fixed query \(q\), realized query-level traffic may not match the user-level design shares. Let \(\varepsilon_{e,j}:=\Pr(e_j=e)\) denote the expected share of the \(j\)th query-\(q\) search coming from cohort \(e\). We assume that there exist constants \(\underline c\in(0,1]\) and \(\bar c\ge 1\) such that, for every search \(j\),
\begin{equation}
\label{eq:lower_static_support}
\varepsilon_{i,j}\ge \underline c\,w_i,
\qquad \forall i\in\mathcal A,
\end{equation}
and
\begin{equation}
\label{eq:upper_total_static_mass}
\sum_{i=1}^K\varepsilon_{i,j}\le \bar c\,\epsilon.
\end{equation}
The coefficient \(\underline c\) is the minimum-support coefficient: it lower bounds realized query-level traffic on each static arm by its design weight. The coefficient \(\bar c\) is the total-static-traffic upper-bound coefficient: it upper bounds total realized static-cohort traffic by the total design exploration mass. If realized query-level static traffic exactly matches the design exploration shares, then \(\underline c=\bar c=1\).

\paragraph{Sliding-window estimation and fallback.}
At each search \(s_j\), LAAL uses the most recent \(h\) units of time:
\[
W_j(h):=\{\ell<j:t_j-h\le t_\ell<t_j\}.
\]
Because the searches are ordered by time, write \(W_j(h)=\{b_j,\dots,j-1\}\), and let \(M_j:=|W_j(h)|=j-b_j\). Define the number of warm-up searches \(N_h:=|\{j\le N:t_j-t_{s_1}<h\}|\), and set \(J_h:=N_h+1\). Thus, for \(j\ge J_h\), the lookback window is not truncated by the start of the horizon.

Define the post-warm-up index sets
\[
\mathcal J_G(h,m):=\{j\in\{J_h,\dots,N\}:M_j\ge m\},
\qquad
\mathcal J_F(h,m):=\{j\in\{J_h,\dots,N\}:M_j<m\}.
\]
The realized LAAL greedy and fallback rounds are \(S_L^G:=\{j\in\mathcal J_G(h,m):e_j=L\}\) and \(S_L^F:=\{j\in\mathcal J_F(h,m):e_j=L\}\). The post-warm-up static rounds are \(S_S^+:=\{j\in\{J_h,\dots,N\}:e_j\in\mathcal A\}\), and the fallback count is \(F_q(T;h,m):=|S_L^F|\). When \(\mathcal J_G(h,m)\neq\varnothing\), define
\[
M_-^{(m)}:=\min_{j\in\mathcal J_G(h,m)}M_j,
\qquad
M_+^{(m)}:=\max_{j\in\mathcal J_G(h,m)}M_j.
\]

For each arm \(i\in\mathcal A\), let
\[
n_{i,j}:=\sum_{\ell\in W_j(h)}\mathbf 1\{a_\ell=i\}
\]
be the number of action-\(i\) observations in the window. The LAAL estimator is
\[
\widehat\mu_{i,j}
=
\frac{\sum_{\ell\in W_j(h)}\tilde r_\ell\,\mathbf 1\{a_\ell=i\}}{n_{i,j}+\delta_{\mathrm{sm}}}.
\]
Here \(\delta_{\mathrm{sm}}>0\). On post-warm-up LAAL rounds, for tie-break penalty \(\eta>0\), the algorithm plays \(a_j\in\arg\max_{i\in\mathcal A}\{\widehat\mu_{i,j}-\eta i\}\) for \(j\in S_L^G\), and plays the default \(a_j=i_{\mathrm{def}}\) for \(j\in S_L^F\). On static rounds, cohort \(i\) deterministically plays action \(i\).

\paragraph{Assumptions.}
(E1) \(\tilde r_j\in[0,1]\) for all \(j\).\par
(E2) Conditional on the realized query-search sequence \(\mathcal S_q(T)\), the cohort of each search is drawn independently across searches, with query-level cohort shares satisfying \eqref{eq:lower_static_support} and \eqref{eq:upper_total_static_mass}.\par
(E3) For each arm \(i\), the noise \(\tilde r_j-\mu_{i,j}\) is conditionally mean-zero and bounded in \([-1,1]\) on rounds where \(a_j=i\).
\par
(E4) Conditional cohort invariance holds for immediate search-level rewards: after conditioning on query \(q\), search time \(t_j\), and realized ad load \(i\), the conditional mean reward does not depend on whether the observation comes from the LAAL cohort or the corresponding static cohort.

\paragraph{Lemma E.1 (Per-arm support from static cohorts).}
Assume \(\mathcal J_G(h,m)\neq\varnothing\). For each \(i\in\mathcal A\) and \(j\in\mathcal J_G(h,m)\), define the number of static observations of arm \(i\) inside the window:
\[
Z_{i,j}:=\sum_{\ell\in W_j(h)}\mathbf 1\{e_\ell=i\}.
\]
Then \(n_{i,j}\ge Z_{i,j}\), because static cohort \(i\) always plays action \(i\). The conditional mean of \(Z_{i,j}\) is
\[
m_{i,j}:=\sum_{\ell\in W_j(h)}\varepsilon_{i,\ell}
\ge \underline c\,w_i\,M_j
\ge \underline c\,w_{\min}\,M_-^{(m)}.
\]
By Assumption~E2, for any \(\alpha\in(0,1)\),
\[
\Pr\!\left(n_{i,j}<(1-\alpha)\underline c\,w_{\min}\,M_-^{(m)}\right)
\le
\exp\!\left(-\frac{\alpha^2}{2}\underline c\,w_{\min}\,M_-^{(m)}\right).
\]
A union bound over \(i\in\mathcal A\) and \(j\in\mathcal J_G(h,m)\) gives
\[
\Pr\!\left(
\exists j\in\mathcal J_G(h,m),\exists i\in\mathcal A:
 n_{i,j}<(1-\alpha)\underline c\,w_{\min}\,M_-^{(m)}
\right)
\le
KN\exp\!\left(-\frac{\alpha^2}{2}\underline c\,w_{\min}\,M_-^{(m)}\right).
\]

\paragraph{Choice of \(\alpha\) and \(n_{\min}\).}
Assume \(\mathcal J_G(h,m)\neq\varnothing\). Fix \(\xi\in(0,1)\), and choose
\[
\alpha:=
\sqrt{\frac{2\log(3KN/\xi)}{\underline c\,w_{\min}\,M_-^{(m)}}},
\qquad
n_{\min}:=(1-\alpha)\underline c\,w_{\min}\,M_-^{(m)},
\]
assuming \(\alpha<1\). Then Lemma~E.1 implies that the support event
\[
\mathcal E_{\mathrm{supp}}
:=
\{\forall j\in\mathcal J_G(h,m),\forall i\in\mathcal A,
 n_{i,j}\ge n_{\min}\}
\]
satisfies \(\Pr(\mathcal E_{\mathrm{supp}})\ge 1-\xi/3\).

\paragraph{Lemma E.2 (Total static traffic concentration).}
Let \(Y_j:=\mathbf 1\{e_j\in\mathcal A\}\) for \(j=J_h,\dots,N\). Then
\[
\mathbb E[Y_j]=\sum_{i=1}^K\varepsilon_{i,j}\le \bar c\,\epsilon.
\]
Therefore, \(\mathbb E|S_S^+|\le \bar c\,\epsilon N\). By Assumption~E2, for any \(\zeta\in(0,1)\),
\[
\Pr\!\left(
|S_S^+|>
\bar c\,\epsilon N+
\sqrt{\frac{N}{2}\log\!\left(\frac{1}{\zeta}\right)}
\right)
\le \zeta.
\]

\paragraph{Theorem E.1 (Finite-sample dynamic regret bound).}
Under Assumptions (E1)--(E4), assume \(\mathcal J_G(h,m)\neq\varnothing\) and \(\alpha<1\). For any \(\xi\in(0,1)\), with probability at least \(1-\xi\),
\[
R_q^{\mathrm{dyn}}(T)
\le
N_h
+
\bar c\,\epsilon N
+
\sqrt{\frac{N}{2}\log\!\left(\frac{3}{\xi}\right)}
+
2|S_L^G|\sqrt{\frac{\log(6KN/\xi)}{2n_{\min}}}
+
2M_+^{(m)}V_q(T)
+
\eta(K-1)|S_L^G|
+
\frac{2\delta_{\mathrm{sm}}}{n_{\min}}|S_L^G|
+
F_q(T;h,m).
\]

\paragraph{Proof.}
The first \(N_h\) searches do not yet have a full \(h\)-time lookback window; since per-search regret is at most one, their contribution is at most \(N_h\). On any post-warm-up static round, regret is also at most one, so the total static-round regret is at most \(|S_S^+|\). Lemma~E.2 with \(\zeta=\xi/3\) gives the event
\[
\mathcal E_{\mathrm{stat}}
:=
\left\{
|S_S^+|
\le
\bar c\,\epsilon N+
\sqrt{\frac{N}{2}\log\!\left(\frac{3}{\xi}\right)}
\right\},
\]
with probability at least \(1-\xi/3\).

Now fix a greedy LAAL round \(j\in S_L^G\). Define regularized means \(g_{i,j}:=\mu_{i,j}-\eta i\) and \(\widehat g_{i,j}:=\widehat\mu_{i,j}-\eta i\). Greedy selection implies
\[
\mu_{i_j^\star,j}-\mu_{a_j,j}
\le
2\max_{i\in\mathcal A}|\widehat\mu_{i,j}-\mu_{i,j}|+
\eta(K-1).
\]
To bound the estimation error, define
\[
\widehat\mu^0_{i,j}:=
\frac{1}{n_{i,j}}\sum_{\ell\in W_j(h)}\tilde r_\ell\mathbf 1\{a_\ell=i\},
\qquad
\bar\mu_{i,j}:=
\frac{1}{n_{i,j}}\sum_{\ell\in W_j(h)}\mu_{i,\ell}\mathbf 1\{a_\ell=i\}.
\]
Then
\[
|\widehat\mu_{i,j}-\mu_{i,j}|
\le
|\widehat\mu^0_{i,j}-\bar\mu_{i,j}|
+
|\bar\mu_{i,j}-\mu_{i,j}|
+
|\widehat\mu_{i,j}-\widehat\mu^0_{i,j}|.
\]
On \(\mathcal E_{\mathrm{supp}}\), \(n_{i,j}\ge n_{\min}\). By (E3) and Hoeffding--Azuma, choosing
\[
\beta:=\sqrt{\frac{\log(6KN/\xi)}{2n_{\min}}}
\]
and applying a union bound over \(i\in\mathcal A\) and \(j\in\mathcal J_G(h,m)\) gives a noise event \(\mathcal E_{\mathrm{noise}}\), with probability at least \(1-\xi/3\), on which
\[
\max_{i,j}|\widehat\mu^0_{i,j}-\bar\mu_{i,j}|\le \beta.
\]
Since \(\tilde r_j\in[0,1]\), the smoothing term is bounded by
\[
|\widehat\mu_{i,j}-\widehat\mu^0_{i,j}|
\le
\frac{\delta_{\mathrm{sm}}}{n_{i,j}}
\le
\frac{\delta_{\mathrm{sm}}}{n_{\min}}.
\]
For the drift term, let \(v_r:=\max_{i\in\mathcal A}|\mu_{i,r}-\mu_{i,r-1}|\) and \(B_j:=\sum_{r=b_j+1}^{j}v_r\). Then \(|\bar\mu_{i,j}-\mu_{i,j}|\le B_j\), and
\[
\sum_{j\in S_L^G}B_j\le M_+^{(m)}V_q(T),
\]
because each variation increment appears in at most \(M_+^{(m)}\) greedy-eligible windows.

On \(\mathcal E_{\mathrm{supp}}\cap\mathcal E_{\mathrm{noise}}\cap\mathcal E_{\mathrm{stat}}\), which has probability at least \(1-\xi\), every greedy LAAL round satisfies
\[
\mu_{i_j^\star,j}-\mu_{a_j,j}
\le
2\beta+2B_j+\frac{2\delta_{\mathrm{sm}}}{n_{\min}}+\eta(K-1).
\]
Summing over \(j\in S_L^G\) gives
\[
\sum_{j\in S_L^G}(\mu_{i_j^\star,j}-\mu_{a_j,j})
\le
2|S_L^G|\beta
+
2M_+^{(m)}V_q(T)
+
\eta(K-1)|S_L^G|
+
\frac{2\delta_{\mathrm{sm}}}{n_{\min}}|S_L^G|.
\]
Fallback rounds contribute at most \(|S_L^F|=F_q(T;h,m)\). Adding the warm-up, static, greedy-LAAL, and fallback contributions proves the theorem. \(\square\)

\paragraph{Assumption E5 (Regular query rate).}
For the tuning results, assume there exist constants \(0<\kappa_-\le\kappa_+<\infty\) such that
\[
\kappa_-\frac{Nh}{T}\le M_j\le \kappa_+\frac{Nh}{T},
\qquad \forall j\in\{J_h,\dots,N\},
\qquad
N_h\le \kappa_+\frac{Nh}{T}.
\]
This formalizes the statement that an \(h\)-time window contains on the order of \(Nh/T\) searches of query \(q\).

\paragraph{Corollary E.1 (Fixed exploration mass).}
Under Assumption~E5, ignoring logarithmic factors and lower-order warm-up, smoothing, regularization, and fallback terms, and taking \(m=0\), Theorem~E.1 gives the dominant scaling
\[
R_q^{\mathrm{dyn}}(T)
\lesssim
\bar c\,\epsilon N
+
\sqrt{\frac{NT}{\underline c\,w_{\min}h}}
+
\frac{Nh}{T}V_q(T).
\]
For fixed \(\epsilon\), choosing
\[
h^\star\asymp
\left(
\frac{T^3}{\underline c\,w_{\min}N\,V_q(T)^2}
\right)^{1/3}
\]
yields
\[
R_q^{\mathrm{dyn}}(T)
=
O(\bar c\,\epsilon N)
+
\tilde O\!\left(
(\underline c\,w_{\min})^{-1/3}
N^{2/3}V_q(T)^{1/3}
\right).
\]
If \(N=\Theta(T)\), this becomes
\[
R_q^{\mathrm{dyn}}(T)
=
O(\bar c\,\epsilon T)
+
\tilde O\!\left(
(\underline c\,w_{\min})^{-1/3}
V_q(T)^{1/3}T^{2/3}
\right).
\]

\paragraph{Corollary E.2 (Joint tuning under equal static shares).}
Under the same simplifications as Corollary~E.1, suppose the \(K\) static cohorts receive equal design mass, \(w_i=\epsilon/K\) for \(i=1,\dots,K\). The dominant terms become
\[
R_q^{\mathrm{dyn}}(T)
\lesssim
\bar c\,\epsilon N
+
\sqrt{\frac{KNT}{\underline c\,\epsilon h}}
+
\frac{Nh}{T}V_q(T).
\]
For fixed \(\epsilon\), the optimal window length is
\[
h^\star(\epsilon)
\asymp
\left(
\frac{K T^3}{\underline c\,\epsilon N\,V_q(T)^2}
\right)^{1/3}.
\]
Substituting this back and optimizing over \(\epsilon\) gives
\[
\epsilon^\star
\asymp
\bar c^{-3/4}\underline c^{-1/4}
\left(\frac{K\,V_q(T)}{N}\right)^{1/4},
\]
and therefore
\[
R_q^{\mathrm{dyn}}(T)
=
\tilde O\!\left(
\bar c^{1/4}\underline c^{-1/4}
N^{3/4}
\big(K\,V_q(T)\big)^{1/4}
\right).
\]
If \(N=\Theta(T)\), this becomes
\[
R_q^{\mathrm{dyn}}(T)
=
\tilde O\!\left(
\bar c^{1/4}\underline c^{-1/4}
T^{3/4}
\big(K\,V_q(T)\big)^{1/4}
\right),
\]
which is the bound stated in Theorem~\ref{thm:elaal_regret_main} after writing \(V(T)\equiv V_q(T)\).

\section{Low Volume Queries}
\label{apdx:low_volume_queries}

\textbf{Default ad load for sparse queries ($i_{\mathrm{def}}$):}
LAAL applies greedy selection only when a query has at least $m$ recent observations in the sliding window; otherwise it falls back to a fixed default $i_{\mathrm{def}}$. We choose $i_{\mathrm{def}}$ using historical data from the static cohorts $\{C,T_2,\ldots,T_6\}$ during the last month of the first experiment.

Let $\mathcal{S}_H$ denote the historical sample of searches from the static cohorts (so each $s\in\mathcal{S}_H$ has a fixed realized ad load $a_s\in\{1,\ldots,6\}$ determined by its cohort). Using the same window length $h$ as in deployment, define the trailing-window volume for search $s$ as the number of searches for the same query in the preceding $h$ days:
\[
N_s \;\equiv\; \sum_{s'\in\mathcal{S}_H} \mathbf{1}\{q_{s'}=q_s\}\,\mathbf{1}\{t_s-h \le t_{s'} < t_s\}.
\]
Let $\epsilon$ denote the aggregate traffic share assigned to these static cohorts. We label a search as \emph{low-volume} if its full-traffic-equivalent trailing-window volume is at most the greedy threshold $m$:
\[
\mathcal{S}_{\text{Low}} \;\equiv\; \{\, s\in\mathcal{S}_H : N_s/\epsilon \le m \,\}.
\]
We then choose $i_{\mathrm{def}}$ as the best-performing \emph{uniform} ad-load policy on this low-volume subset, under the deployed reward $r(y_s,\lambda)=\lambda\,\mathrm{Revenue}_s+\mathrm{Conversion}_s$. Specifically, for each $i\in\{1,\ldots,6\}$ define the mean reward among low-volume searches that were served with ad load $i$:
\[
\widehat{r}^{\,\text{Low}}(i)
\;\equiv\;
\mathbb{E}\!\left[r(y_s,\lambda)\,\middle|\, s\in\mathcal{S}_{\text{Low}},\, a_s=i \right]
\;\approx\;
\frac{\sum_{s\in\mathcal{S}_{\text{Low}}}\mathbf{1}\{a_s=i\}\,r(y_s,\lambda)}
{\sum_{s\in\mathcal{S}_{\text{Low}}}\mathbf{1}\{a_s=i\}}.
\]
Finally, we set
\[
i_{\mathrm{def}} \in \arg\max_{i\in\{1,\ldots,6\}} \widehat{r}^{\,\text{Low}}(i).
\]
In our historical data, this maximizer is $i_{\mathrm{def}}=4$. Intuitively, $i_{\mathrm{def}}$ is the best uniform static policy when the query is sparse in the recent window (i.e., when LAAL cannot reliably estimate per-query arm values).

\begin{table}[htp!]
\centering
\small
\setlength{\tabcolsep}{3.5pt}
\caption{Low-Volume Queries: Outcomes by Conditions}
\label{tab:low_volume_queries_by_variant}
\begin{tabular}{lcccccc}
\toprule
 & \textbf{$C$} & \textbf{$T_{2}$} & \textbf{$T_{3}$} & \textbf{$T_{4}$} & \textbf{$T_{5}$} & \textbf{$T_{6}$} \\
\midrule
Search Count        & 1,787,515 & 354,797 & 335,833 & 338,457 & 334,132 & 336,299 \\
Conversion/Search   & 0.479 & 0.470 & 0.489 & 0.485 & 0.475 & 0.466 \\
Revenue/Search      & 4.03  & 4.37  & 4.75  & 5.15  & 5.52  & 5.99  \\
Reward/Search       & 0.519 & 0.514 & 0.536 & 0.537 & 0.531 & 0.526 \\
\bottomrule
\end{tabular}
\end{table}

\section{Reweighting Brand-On and Brand-Off Recommendation Histograms}
\label{appsec:brand_on_off_reweighting}

For each non-stationary query \(q\), let \(d_{\mathrm{on},i}^q\) and \(d_{\mathrm{off},i}^q\) denote the number of LAAL searches in which ad load \(i\) was recommended during \emph{Brand-On} and \emph{Brand-Off} states, respectively. Let \(D_{\mathrm{on}}^q=\sum_i d_{\mathrm{on},i}^q\), \(D_{\mathrm{off}}^q=\sum_i d_{\mathrm{off},i}^q\), and \(p_{\mathrm{on}}^q=D_{\mathrm{on}}^q/(D_{\mathrm{on}}^q+D_{\mathrm{off}}^q)\). To compare Brand-On and Brand-Off recommendation distributions while accounting for different Brand-On shares across queries, we reweight counts within query as \(\tilde d_{\mathrm{on},i}^q=d_{\mathrm{on},i}^q/p_{\mathrm{on}}^q\) and \(\tilde d_{\mathrm{off},i}^q=d_{\mathrm{off},i}^q/(1-p_{\mathrm{on}}^q)\). We then aggregate \(\tilde d_{\mathrm{on},i}=\sum_{q\in\mathcal Q_{\mathrm{NS}}}\tilde d_{\mathrm{on},i}^q\) and \(\tilde d_{\mathrm{off},i}=\sum_{q\in\mathcal Q_{\mathrm{NS}}}\tilde d_{\mathrm{off},i}^q\), and normalize each vector to sum to one.

\newpage
\putbib

\end{bibunit}
\end{appendices}

\end{document}